\documentclass{article}
\pdfpagewidth=8.5in
\pdfpageheight=11in
% The file ijcai22.sty is NOT the same as previous years'
\usepackage{ijcai22}

% Use the postscript times font!
\usepackage{times}
\usepackage{soul}
\usepackage{url}
\usepackage[hidelinks]{hyperref}
\usepackage[utf8]{inputenc}
\usepackage[small]{caption}
\usepackage{graphicx}
\usepackage{amsmath}
\usepackage{amsthm}
\usepackage{booktabs}
\urlstyle{same}

% PDF Info Is REQUIRED.
% Please **do not** include Title and Author information
\pdfinfo{
/TemplateVersion (IJCAI.2022.0)
}

\title{On the Complexity of Enumerating Prime Implicants \\from Decision-DNNF Circuits}

% Multiple author syntax (remove the single-author syntax above and the \iffalse ... \fi here)
% Check the ijcai22-multiauthor.tex file for detailed instructions

%\author{Tracking Number: \#5762}
%\iffalse
\author{
Alexis de Colnet$^1$
\And
Pierre Marquis$^{1,2}$
\affiliations
$^1$Univ. Artois, CNRS, Centre de Recherche en Informatique de Lens (CRIL), F-62300 Lens, France\\
$^2$Institut Universitaire de France
\emails
\{decolnet, marquis\}@cril.fr
}
%\fi

\usepackage{todonotes}

\usepackage[ruled,vlined,linesnumbered]{algorithm2e}

\newcommand{\IP}{\textit{IP}}

\newcommand{\SR}{\textit{SR}}

\newcommand{\var}{var}

\newcommand{\calH}{\mathcal{H}}

\usetikzlibrary{hobby}
\usetikzlibrary{shapes}

\usepackage{amssymb}
\usepackage{xfrac}
\usepackage{mathtools}
\usepackage{enumitem}

\newtheorem{proposition}{Proposition}

\newtheorem*{corollary*}{Corollary}

\newtheorem{claim}{Claim}

\theoremstyle{definition}

\newtheorem{example}{Example}
\newtheorem{definition}{Definition}

\usepackage{tikz}
\usetikzlibrary{calc}
\usepackage{subcaption}[tiny]
\usetikzlibrary{patterns}

\begin{document}

\maketitle

\begin{abstract}
We consider the problem Enum$\cdot\IP$ of enumerating prime implicants of Boolean functions represented by decision decomposable negation normal form (dec-DNNF) circuits. We study Enum$\cdot\IP$ from dec-DNNF within the framework of enumeration complexity and prove that it is in {\sf OutputP}, the class of output polynomial enumeration problems, and more precisely in {\sf IncP}, the class of polynomial incremental time enumeration problems. We then focus on two closely related, but seemingly harder, enumeration problems where further restrictions are put on the prime implicants to be generated. In the first problem, one is only interested in prime implicants representing subset-minimal abductive explanations, a notion much investigated in AI for more than three decades.  In the second problem, the target is prime implicants representing sufficient reasons, a recent yet important notion in the emerging field of eXplainable AI, since they aim to explain predictions achieved by machine learning classifiers. We provide evidence showing that enumerating specific prime implicants corresponding to subset-minimal abductive explanations or to sufficient reasons is not in {\sf OutputP}.
%We consider the problem Enum$\cdot\IP$ of enumerating prime implicants of Boolean functions represented by decision decomposable negation normal form (dec-DNNF) circuits. We study Enum$\cdot\IP$ from dec-DNNF within the framework of enumeration complexity and prove that it is in {\sf OutputP}, the class of output polynomial enumeration problems, and more precisely in {\sf IncP}, the class of polynomial incremental time enumeration problems. We then consider closely related problems where further restrictions are put on the prime implicants to be generated, especially the enumeration of abductive explanations and the enumeration of sufficient reasons. Abduction is a well-studied form of nonmonotonic reasoning allowing one to generate explanations for certain symptoms or manifestations. Abductive explanations are also connected to several reasoning settings developed in AI, including assumption-based reasoning and closed-world reasoning. Sufficient reasons are explanations about predictions from machine learning classifiers. They are a key notion in the emerging field of formal explainable AI. We investigate the extent to which the complexity of enumerating the specific prime implicants corresponding to abductive explanations and to sufficient reasons differs from the general case.
\end{abstract}

\section{Introduction}

Prime implicants are a key concept when dealing with Boolean functions since the notion has been introduced seven decades ago \cite{Quine52}. %,Quine55,Quine59}.
%Considered at start for addressing the problem of circuit minimization (simplifying Boolean functions), prime implicants have been leveraged so far in a large number of contexts when Boolean functions are used, so that many applications from various areas of Computer Science take advantage of algorithms to compute prime implicants (and there exist dozens of such algorithms). It is likely that every computer scientist has heard of Karnaugh's maps \cite{Karnaugh53} and Quine-McCluskey's simplification algorithm \cite{McCluskey56} during their cursus. Nowadays, prime implicants (and the dual concept of prime implicates) are at the core of numerous works in several fields of Computer Science, including, for instance, software engineering, formal verification, and reliability modeling.
%
Within AI, prime implicants (or the dual concept of prime implicates) have been considered for modeling and solving a number of problems,  
%Thus, prime implicates have been exploited for the purpose of 
including compiling knowledge \cite{ReiterdeKleer87} and generating explanations of various kinds. 
%This can be helpful for improving the complexity of clausal deductive reasoning in propositional  logic.
% (a clause is entailed by a propositional formula representing a Boolean function precisely when it is entailed by one of its prime implicates, so that the clausal
%entailment problem is polynomial-time solvable whenever the formula is in Blake form, i.e., given by its prime implicates).
%In many other AI problems, one typically looks for specific prime implicants representing {\it some concepts of explanation}. 
This is the case in logic-based abductive reasoning (see e.g., \cite{SelmanLevesque90,EiterGottlob95}), a form of inference required in a number of applications when the available knowledge base is incomplete (e.g., in medicine) and because of such an incompleteness, it cannot alone explain the observations that are made about the state of the world. Abduction gave rise to much research in AI for the past three decades, especially because 
%, and model-based diagnosis \cite{DeKleeretal92}. 
%Abduction is a well-studied form of nonmonotonic reasoning allowing one to generate explanations for certain manifestations. Abduction 
it is closely connected to other reasoning settings, including truth maintenance \cite{DeKleer86}, assumption-based reasoning and closed-world reasoning (see e.g., \cite{Marquis00} for a survey).  Formally, the explanations one looks for are terms over a preset  alphabet (composed of the so-called abducible variables, e.g., representing diseases) such that the manifestations that are reported (e.g., some symptoms) are logical consequences of the background knowledge when completed by such a term. In order to avoid trivial explanations, one also asks those terms to be consistent with the knowledge base. Explanations that are the less demanding ones from a logical standpoint (i.e., subset-minimal ones) can be characterized as specific prime implicants.
%In model-based diagnosis, one is interested in computing kernel diagnoses, which are also specific prime implicants of the Boolean function 
%representing the available knowledge about the system to be diagnosed (i.e., the description of the system, together with the observations made, e.g., the inputs and outputs of the system).
%Only those prime implicants built up from variables used to represent the states of the components of the system (normal or not) are considered.
%Such diagnoses explain the behaviour of the system in the sense that they represent in a succinct way all possible states of its components.
More recently, deriving explanations justifying why certain predictions have been made 
% (alias abductive explanations) or not been made (alias counterfactual explanations) 
has appeared as essential for ensuring trustworthy Machine Learning (ML) technologies \cite{Miller19,Molnar19}. In the research area of eXplainable AI (XAI), recent work has shown how ML classifiers of various types (including black boxes) can be associated with Boolean circuits (alias transparent or ``white'' boxes), exhibiting the same input-output behaviours  \cite{DBLP:conf/aaai/NarodytskaKRSW18,ShihChoiDarwiche18b,ShihChoiDarwiche19}. 
Thanks to such mappings, XAI queries about classifiers can be delegated to the corresponding circuits. 
The notion of sufficient reason of an instance given a Boolean function $f$ modeling a binary classifier has been introduced in \cite{DarwicheH20}.  Given an instance $a$ ($a$ simply is an assignment, i.e., a vector of truth values given to each of the $n$ variables) such that $f(a) = 1$ (resp. $f(a) = 0$), a sufficient reason for $a$ is a subset-minimal partial assignment $a'$ which is coherent with $a$ (i.e., $a$  and $a'$ give the same values to the variables that are assigned in $a'$) and which satisfies the property that for every extension $a''$ of $a'$ we have $f(a'') = 1$ (resp. $f(a'') = 0$). The features assigned in $a'$ (and the way they are assigned) can be viewed as explaining why $a$ has been classified by $f$ as a positive (or as a negative) instance. 
%Counterfactual explanations are dual, and can be characterized using prime implicates.

Whatever the way prime implicants are used, generating them is in general a computationally demanding task, for at least two reasons. On the one hand, deriving a single prime implicant of a Boolean function represented by a propositional formula (or circuit) is {\sf NP}-hard since such a formula is satisfiable when it has a prime implicant, and it is valid precisely when this prime implicant is the empty term.  On the other hand, a source of complexity is the number of prime implicants that may prevent from computing them all. Indeed, it is well-known that the number of prime implicants of a Boolean function can be exponential in the number of variables of the function, and, for many representations of the function, also exponential in the size of the representation (just consider the parity function as a matter of example). In more detail, the number of prime implicants of a Boolean function can be  larger than the number of assignments satisfying the function \cite{DunhamFridshal59}; there also exist families of Boolean functions over $n$ variables having 
$\Omega(\frac{3^n}{n})$ prime implicants \cite{ChandraMarkowsky78}.
%; furthermore, the number of prime implicants/implicates of a Boolean function is typically large: the number of prime implicants/implicates of almost all Boolean function over $n$ variables is in $\Theta(2^n)$; \cite{ChandraMarkowsky78} show that there exist Boolean functions represented by DNF formulae with $k$ terms that admit more than $2^{\lfloor \frac{k}{2} \rfloor}$ prime implicants.

In this paper, we focus on the issue of \emph{enumerating prime implicants of a Boolean function represented by a decision decomposable negation normal form circuit (alias a dec-DNNF circuit)}. The question is to determine whether such prime implicants \emph{can be enumerated ``efficiently''}, which is obviously not the case when the circuit considered is unconstrained (as explained above, in such a case, computing a single prime implicant is already hard). This question is important for all the problems listed previously, when prime implicants represent explanations: since they are typically too numerous to be computed as a whole, it makes sense to derive them in an incremental way, with some performance guarantees in the generation; this lets the user who asked for an explanation deciding what to do after each derivation, namely to stop the enumeration process since he/she is satisfied by the explanation that has been provided, or alternatively to ask for a further explanation. 

The dec-DNNF language \cite{OztokDarwiche14,Darwiche01} and its subsets FBDD (free binary decision diagrams) \cite{GergovMeinel94}, OBDD (ordered binary decision diagrams) \cite{Bryant86} and even
DT (the set of all binary decision trees over Boolean variables, see e.g., ~\cite[Chapter~2]{Wegener00}) appear at first sight as good candidates for representing the function in the perspective of enumerating ``efficiently'' its prime implicants. Indeed, they are known as tractable representation languages (they support in polynomial time many queries and transformations from the so-called knowledge compilation map  \cite{DarwicheM02,Koricheetal13}).

%Especially, unlike CNF and DNF, one can easily take advantage of a greedy algorithm to derive a prime implicant of a Boolean function given as a dec-DNNF circuit: one first computes a model of it which is easy for dec-DNNF circuits, and then one tries to iteratively remove literals from it by testing whether the current term when deprived of such a literal is still an implicant (and the implicant test is also computationally easy for dec-DNNF circuits).

\medskip
The main contribution of the paper is as follows. We give a polynomial incremental time algorithm for enumerating the prime implicants of a Boolean function $f$ represented by a dec-DNNF circuit $\Sigma$. Given $\Sigma$ and a positive integer $k$, this algorithm returns $k$ prime implicants of $\Sigma$ in $O(poly(k+|\Sigma|))$ time, or returns all prime implicants of $\Sigma$ if there are fewer than $k$. This shows that enumerating prime implicants from dec-DNNF is in the enumeration complexity class {\sf IncP}~\cite{Strozecki19}. We also provide evidence showing that enumerating specific prime implicants corresponding to subset-minimal abductive explanations or to sufficient reasons is not in {\sf OutputP}: on the one hand, computing a single subset-minimal abductive explanation from an OBDD circuit or a decision tree is {\sf NP}-hard; on the other hand, the existence of an output polynomial time algorithm for enumerating sufficient reasons from an OBDD circuit or a decision tree would lead to an output polynomial time algorithm for enumerating the minimal transversals of a hypergraph, thus answering a long-standing question related to monotone dualization~\cite{EiterMG08}.

The rest of the paper is organized as follows. We start with some preliminaries (Section \ref{sec:prelim}) where the language of dec-DNNF circuits and the framework used to study enumeration problems are presented. We formally define the problem Enum$\cdot\IP$ of enumerating prime implicants. Then in Section \ref{sec:outputp} we show that generating the set of \emph{all} prime implicants from a dec-DNNF circuit is feasible in output polynomial time. From there, we show in Section~\ref{sec:incp} that Enum$\cdot\IP$ from dec-DNNF is in fact in {\sf IncP} and point out a polynomial incremental time enumeration algorithm. Finally, in Section~\ref{sec:specific} we focus on subset-minimal abductive explanations and sufficient reasons and show that for each of the two cases, the enumeration issue is seemingly harder than in the case when all prime implicants are considered. 
All the proofs are reported in a final appendix.
%A full-proof version of the paper is available at \url{www.cril.univ-artois.fr/expekctation/papers.html}.

\section{Preliminaries}\label{sec:prelim}

A Boolean function over $n$ variables $x_1,\dots,x_n$ is a mapping $f$ from $\{0,1\}^n$ to $\{0,1\}$. The set of variables of $f$ is denoted by $\var(f)$. The assignments to $\var(f)$ mapped to $1$ by $f$ are called \emph{satisfying assignments} of $f$. A literal upon variable $x$ is either $x$ or its negation $\overline{x}$ and a \emph{term} is a conjunction of literals. We often omit the conjunction symbols when writing terms, for instance we may shorten $a \land \overline{c}$ into $a\,\overline{c}$. We define the \emph{empty} term $t_\emptyset$ as the term over zero literal. The empty term verifies $t \land t_\emptyset = t$ for every term $t$. Given $\ell \in \{x,\overline{x}\}$, we denote by $f|\ell$ the Boolean function over $\var(f) \setminus \{x\}$ whose satisfying assignments coincide with that of $f \land \ell$. We use the usual symbols $\land$, $\lor$, $\neg$, $\models$ to denote conjunction, disjunction, negation, and entailment. Given a set $S$ of terms, $\max(S,\models)$ denotes the subset of terms of $S$ that do not entail another term in $S$. An \emph{implicant} of a Boolean function $f$ is a term $t$ whose satisfying assignments also satisfy $f$, i.e., $t \models f$. An implicant $t$ is \emph{prime} when no term $t - \ell$ obtained by removing a literal $\ell$ from $t$ is an implicant of $f$.

\subsection{Compilation Languages}

Compilation languages are often seen as classes of circuits. Let $PS$ be a countable set of propositional variables. A circuit in \emph{negation normal form} (NNF) is a directed acyclic graph (DAG) whose leaves are labelled with $0$ (\emph{false}), $1$ (\emph{true}), or a literal built upon $x \in PS$, and whose internal nodes are labelled with $\land$ or $\lor$ connectives; we call them $\land$-nodes and $\lor$-nodes. An NNF circuit computes a Boolean function over the variables appearing in it. For $v$ a node of an NNF circuit $\Sigma$, $\var(v)$ denotes the set of variables labelling leaves under $v$ in $\Sigma$ and $\Sigma_v$ denotes the subcircuit of $\Sigma$ rooted at $v$. The language of \emph{decomposable} NNF (DNNF) contains the NNF circuits where $\land$-nodes are decomposable, that is, the children $v_1,\dots, v_m$ of every $\land$-node $v$ are such that $\var(v_i) \cap \var(v_j) = \emptyset$ for all $i \neq j$. The language of \emph{deterministic, decomposable} NNF (d-DNNF) contains the DNNF circuits $\Sigma$ where $\lor$-nodes are deterministic, that is, the children $v_1,\dots, v_m$ of every $\lor$-node $v$ are such that $\Sigma_{v_i} \wedge \Sigma_{v_j}$ is inconsistent for all $i \neq j$. Finally, the language of \emph{decision} DNNF (dec-DNNF) is that of circuits whose leaves are labelled with $0$, $1$, or a literal built upon $x \in PS$, and whose internal nodes are decision nodes and $\land$-nodes. Whenever $n$ is a decision node labelled by variable $x$ in a dec-DNNF 
circuit $\Sigma$, the circuit $\Sigma_n$ given by
\raisebox{-0.5\height}{\begin{tikzpicture}
\node[circle, draw, inner sep=2] (x) at (0,0) {$x$};
\node[inner sep = 1.5] (u) at (-0.7,-0.8) {$u$};
\node[inner sep = 1.5] (v) at (+0.7,-0.8) {$v$};
\draw[-stealth, densely dashed] (x) -- (u);
\draw[-stealth] (x) -- (v);
\end{tikzpicture}} is viewed as a compact representation of the d-DNNF circuit
\raisebox{-0.5\height}{\begin{tikzpicture}
\node[inner sep = 1.5] (or) at (0,0) {$\lor$};
\node[inner sep = 1.5] (a1) at (-1+1/3,-0.3) {$\land$};
\node[inner sep = 1.5] (a2) at (2/3,-0.3) {$\land$};
\node[inner sep = 1.5] (nx) at (-1,-0.8) {$\overline{x}$};
\node[inner sep = 1.5] (x) at (1/3,-0.8) {$x$};
\node[inner sep = 1.5] (u) at (-1+2/3,-0.8) {$u$};
\node[inner sep = 1.5] (v) at (+1,-0.8) {$v$};
\draw (a2) -- (or) -- (a1);
\draw (u) -- (a1) -- (nx);
\draw (v) -- (a2) -- (x);
\end{tikzpicture}} (see Definition 2.6 and Figure 2 in~\cite{DarwicheM02}).

Thus, a decision node $n$ is labelled by a variable and has two children: the 0-child (node $u$ on the previous picture) and the 1-child (node $v$ on the previous picture). If $n$ is labelled by $x$ and $\Sigma_u$ (resp. $\Sigma_v$) represents the function $f_0$ (resp. $f_1$), then $\Sigma_n$ represents the function $(\overline{x} \land f_0) \lor (x \land f_1)$. For instance, Figure~\ref{fig:dec-DNNF} gives a dec-DNNF circuit whose deepest decision node computes $(\overline{s} \land 1) \lor (s \land p)$.
%
%is that of DNNF circuits where $\lor$-nodes have exactly two children, that are both $\land$-nodes and such that for some $x \in PS$, one has a leaf child labelled by $\overline{x}$ and the other has a leaf child labelled by $x$. To simplify notations, the $\lor$-nodes of a dec-DNNF circuit are typically viewed as \emph{decision nodes}. A decision node is labelled by a variable and has two children: the 0-child and the 1-child. A decision node labelled by $x$ whose 0- and 1-child represents the functions $f_0$ and $f_1$, respectively, represents the function $(\overline{x} \land f_0) \lor (x \land f_1)$. For instance, Figure~\ref{fig:dec-DNNF} gives a dec-DNNF circuit whose deepest decision node computes $(\overline{s} \land 1) \lor (s \land p)$. %It is worth mentioning that the dec-DNNF language encompasses other well-known languages, such as OBDD, FBDD and DT. 
%%We also point out that all Boolean functions on finitely many variables can be represented by such circuits or trees.
It is worth mentioning that all Boolean functions on finitely many variables can be represented in dec-DNNF, or indeed in any of its subsets like FBDD, OBDD, and DT.

Let $\Sigma$ be a dec-DNNF circuit.
% and let $v$ be a node in $\Sigma$. We denote by $\Sigma_v$ the dec-DNNF circuit rooted at $v$ inside $\Sigma$. 
The size of $\Sigma$, denoted by $|\Sigma|$ is its number of edges. From a dec-DNNF circuit $\Sigma$, one can easily derive in polynomial time a dec-DNNF circuit equivalent to $\Sigma$ where every $\land$-node has exactly two children. Since it is computationally harmless, for the sake of simplicity, our enumeration algorithms suppose that the dec-DNNF circuits satisfy this condition, so that their size is at most twice their number of nodes. In the same vein, we suppose that our dec-DNNF circuits have been \emph{reduced}, i.e., every node $v$ in $\Sigma$ such that $\Sigma_v$ computes the 0 function reduces to a leaf labelled by $0$. Testing the satisfiability of a dec-DNNF circuit is feasible in linear time \cite{DarwicheM02}, so reducing a dec-DNNF circuit also is a polynomial-time operation.

\subsection{Enumeration Complexity}

We now recall some enumeration complexity classes as described in~\cite{Strozecki19}. Let $V$ be an alphabet and let $A$ be a binary predicate in $V^* \times V^*$. Given an instance $x \in V^*$ (the input), $A(x)$ (the set of solutions) denotes the set of all $y \in V^*$ such that $A(x,y)$. The enumeration problem Enum$\cdot A$ is the function mapping $x$ to $A(x)$. Enum$\cdot A$ is in the class {\sf EnumP} if for every $y \in A(x)$, $|y|$ is polynomial in $|x|$, and if deciding whether $y$ is in $A(x)$ is in {\sf P}. {\sf EnumP} does not capture the complexity of \emph{computing} the set of solutions $A(x)$, it serves more as a counterpart of {\sf NP} for enumeration problems. 

The model used for the enumeration of solutions is the random access machine (RAM) model. See~\cite{Strozecki19} for details on why RAM have been chosen for this task. A RAM solves Enum$\cdot A$ if, for all $x$, it returns a sequence $y_1, \dots, y_m$ of pairwise distinct elements such that $\{y_1,\dots,y_m\} = A(x)$. Enum$\cdot A$ is in {\sf OutputP} if there is a RAM solving Enum$\cdot A$ in time $O(poly(|x|+|A(x)|))$ on every input~$x$. {\sf OutputP} is a relevant enumeration class when the whole set of solutions is explicitly asked for. For instance, the dualization of a monotone CNF formula $\phi$ is the task of generating a DNF formula equivalent to $\phi$. Because of the monotony condition on $\phi$, the terms used in any smallest DNF formula equivalent to $\phi$ are precisely its prime implicants. Thus, the dualization problem boils down to enumerating \emph{all} the prime implicants of $\phi$.

For other applications, computing only a fixed number of solutions may be enough. A RAM solves Enum$\cdot A$ in incremental time $f(t)g(n)$ if on every $x$, it runs in time time $O(f(t)g(|x|))$ and returns a sequence $y_1,\dots,y_t$ of $t$ pairwise distinct elements of $A(x)$ when $t \leq |A(x)|$, and the whole set $A(x)$ when $t > |A(x)|$. Enum$\cdot A$ is in {\sf IncP} if there is a RAM that solves $A$ in incremental time $O(t^an^b)$ for some constants $a$ and $b$. {\sf IncP} has a characterization that uses the function problem AnotherSol$\cdot A$ which, given $x$ and $S \subseteq A(x)$, returns $y \in A(x) \setminus S$ when $S \neq A(x)$, and \textit{false} otherwise.

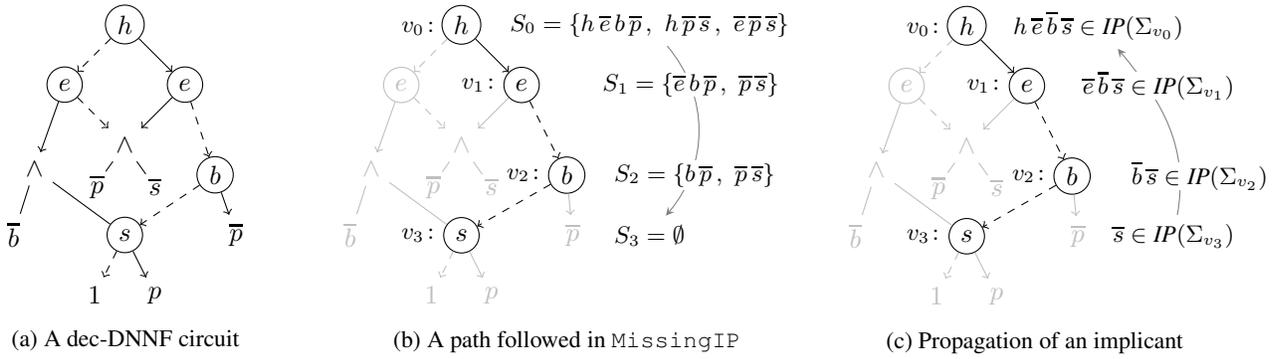
\begin{figure*}
\begin{subfigure}{0.25\textwidth}
\centering
\begin{tikzpicture}
\def\x{0.8};
\def\y{0.8};
\node[draw, circle, inner sep=2] (h) at (0,0) {$h$}; 
\node[draw, circle, inner sep=2.5] (h0) at (-\x,-\y) {$e$}; 
\node[draw, circle, inner sep=2.5] (h1) at (+\x,-\y) {$e$}; 

\draw[->, dashed] (h) -- (h0);
\draw[->] (h) -- (h1);

\node (h0e0) at (0*\x,-2*\y) {$\land$}; 
\node (h0e1) at (-1.5*\x,-2.4*\y) {$\land$}; 
\node[draw, circle, inner sep=2] (h1e0) at (1.5*\x,-2.5*\y) {$b$}; 

\draw[->, dashed] (h0) -- (h0e0);
\draw[->] (h0) -- (h0e1);
\draw[->] (h1) -- (h0e0);
\draw[->, dashed] (h1) -- (h1e0);

\node[inner sep=1.5] (h0e0l) at (-0.5*\x,-2.7*\y) {$\,\overline{p}$}; 
\node[inner sep=1.5] (h0e0r) at (+0.5*\x,-2.7*\y) {$\overline{s}$}; 

\draw (h0e0) -- (h0e0l);
\draw (h0e0) -- (h0e0r);

\node  (h0e1l) at (-1.85*\x,-3.5*\y) {$\overline{b}$}; 
\node[draw, circle, inner sep=2.5] (h0e1r) at (0*\x,-3.5*\y) {$s$}; 

\draw (h0e1) -- (h0e1l);
\draw (h0e1) -- (h0e1r);

\node (h1e0b1) at (1.85*\x,-3.5*\y) {$\overline{p}$};

\draw[->] (h1e0) -- (h1e0b1);
\draw[->, dashed] (h1e0) -- (h0e1r);

\node (h0e1rs0) at (-0.5*\x,-4.5*\y) {$1$};
\node (h0e1rs1) at (0.5*\x,-4.5*\y) {$p$};

\draw[->] (h0e1r) -- (h0e1rs1);
\draw[->, dashed] (h0e1r) -- (h0e1rs0);
%
%\def\o{3};
%
%\node[draw, circle, inner sep=2.5] (s) at (0+\o,0*\y) {$s$}; 
%
%\node (un) at (-0.75*\x+\o,-1*\y) {$1$};
%\node (p) at (0.75*\x+\o,-1*\y) {$p$};
%
%\draw[->, dashed]  (s) -- (un);
%\draw[->]  (s) -- (p); 
%
%\node (eq) at (0+\o,-1.75*\y) {\rotatebox[origin=c]{90}{$\equiv$}}; 
%
%\node (or) at (0+\o,-2.5*\y) {$\lor$};
%\node[inner sep=1.5] (andl) at (-0.75*\x+\o,-3.25*\y) {$\land$};
%\node[inner sep=1.5] (andr) at (+0.75*\x+\o,-3.25*\y) {$\land$}; 
%\node[inner sep=1.5] (s) at (0.5*\x+\o,-4*\y) {$s$};
%\node[inner sep=1.5] (nots) at (-1*\x+\o,-4*\y) {$\overline{s}$}; 
%\node[inner sep=1.5] (un) at (-0.5*\x+\o,-4*\y) {$1$};
%\node[inner sep=1.5] (p) at (1*\x+\o,-4*\y) {$p$};
%
%\draw (or) -- (andl);
%\draw (or) -- (andr);
%\draw (andl) -- (nots);
%\draw (andl) -- (un);
%\draw (andr) -- (s);
%\draw (andr) -- (p); 
\end{tikzpicture}
\caption{A dec-DNNF circuit}
\label{fig:dec-DNNF}
\end{subfigure}
\begin{subfigure}{0.4\textwidth}
\centering
\begin{tikzpicture}
\def\x{0.8};
\def\y{0.8};
\def\l{0.58};
\def\r{0.58};
\node[draw, circle, inner sep=2] (h) at (0,0) {$h$}; 
\node[font=\small] (l) at (0-\l,-0.05) {$v_0\!:$}; 
\node[draw, circle, inner sep=2.5, color=lightgray] (h0) at (-\x,-\y) {$e$}; 
\node[draw, circle, inner sep=2.5] (h1) at (+\x,-\y) {$e$}; 
\node[font=\small] (l) at (+\x-\l,-\y) {$v_1\!:$}; 

\draw[->, dashed, color=lightgray] (h) -- (h0);
\draw[->] (h) -- (h1);

\node[color=lightgray] (h0e0) at (0*\x,-2*\y) {$\land$}; 
\node[color=lightgray] (h0e1) at (-1.5*\x,-2.4*\y) {$\land$}; 
\node[draw, circle, inner sep=2] (h1e0) at (1.75*\x,-2.5*\y) {$b$};  
\node[font=\small] (l) at (+1.75*\x-\l,-2.5*\y) {$v_2\!:$}; 

\draw[->, dashed, color=lightgray] (h0) -- (h0e0);
\draw[->, color=lightgray] (h0) -- (h0e1);
\draw[->, color=lightgray] (h1) -- (h0e0);
\draw[->, dashed] (h1) -- (h1e0);

\node[inner sep=1.5, color=lightgray] (h0e0l) at (-0.5*\x,-2.7*\y) {$\,\overline{p}$}; 
\node[inner sep=1.5, color=lightgray] (h0e0r) at (+0.5*\x,-2.7*\y) {$\overline{s}$};   

\draw[color=lightgray] (h0e0) -- (h0e0l);
\draw[color=lightgray] (h0e0) -- (h0e0r);

\node[color=lightgray]  (h0e1l) at (-1.85*\x,-3.5*\y) {$\overline{b}$}; 
\node[draw, circle, inner sep=2.5] (h0e1r) at (0*\x,-3.5*\y) {$s$}; 
\node[font=\small] (l) at (0*\x-\l,-3.5*\y) {$v_3\!:$}; 

\draw[color=lightgray] (h0e1) -- (h0e1l);
\draw[color=lightgray] (h0e1) -- (h0e1r);

\node[color=lightgray] (h1e0b1) at (1.85*\x,-3.5*\y) {$\overline{p}$};

\draw[->, color=lightgray] (h1e0) -- (h1e0b1);
\draw[->, dashed] (h1e0) -- (h0e1r);

\node[color=lightgray] (h0e1rs0) at (-0.5*\x,-4.5*\y) {$1$};
\node[color=lightgray] (h0e1rs1) at (0.5*\x,-4.5*\y) {$p$};

\draw[->, color=lightgray] (h0e1r) -- (h0e1rs1);
\draw[->, dashed, color=lightgray] (h0e1r) -- (h0e1rs0);

\node[font=\small] (S3) at (0*\x+2.5,-3.5*\y) {$S_3 = \emptyset$}; 
\node[font=\small] (S0) at (0+2.5,0) {$S_0 = \{h\,\overline{e}\,b\,\overline{p} \,,\,\, h\,\overline{p}\,\overline{s} \,,\,\, \overline{e}\,\overline{p}\,\overline{s}\}$}; 
\draw[stealth-, color=gray] (S3) to [out=50, in=-50] (S0);
\node[font=\small,fill=white, inner sep=1] (S2) at (+1.75*\x+1.7,-2.5*\y) {$S_2 = \{b\,\overline{p} \,,\,\, \overline{p}\,\overline{s}\}$}; 
\node[font=\small,fill=white, inner sep=0]  (S1) at (+\x+2.25,-\y) {$S_1 = \{\overline{e}\,b\,\overline{p} \,,\,\, \overline{p}\,\overline{s} \}$}; 
\end{tikzpicture}
\caption{A path followed in \texttt{MissingIP}}
\label{fig:path}
\end{subfigure}
\begin{subfigure}{0.29\textwidth}
\centering
\begin{tikzpicture}
\def\x{0.8};
\def\y{0.8};
\def\l{0.58};
\def\r{0.58};
\node[draw, circle, inner sep=2] (h) at (0,0) {$h$}; 
\node[font=\small] (l) at (0-\l,-0.05) {$v_0\!:$}; 
\node[draw, circle, inner sep=2.5, color=lightgray] (h0) at (-\x,-\y) {$e$}; 
\node[draw, circle, inner sep=2.5] (h1) at (+\x,-\y) {$e$}; 
\node[font=\small] (l) at (+\x-\l,-\y) {$v_1\!:$}; 

\draw[->, dashed, color=lightgray] (h) -- (h0);
\draw[->] (h) -- (h1);

\node[color=lightgray] (h0e0) at (0*\x,-2*\y) {$\land$}; 
\node[color=lightgray] (h0e1) at (-1.5*\x,-2.4*\y) {$\land$}; 
\node[draw, circle, inner sep=2] (h1e0) at (1.75*\x,-2.5*\y) {$b$};  
\node[font=\small] (l) at (+1.75*\x-\l,-2.5*\y) {$v_2\!:$}; 

\draw[->, dashed, color=lightgray] (h0) -- (h0e0);
\draw[->, color=lightgray] (h0) -- (h0e1);
\draw[->, color=lightgray] (h1) -- (h0e0);
\draw[->, dashed] (h1) -- (h1e0);

\node[inner sep=1.5, color=lightgray] (h0e0l) at (-0.5*\x,-2.7*\y) {$\,\overline{p}$}; 
\node[inner sep=1.5, color=lightgray] (h0e0r) at (+0.5*\x,-2.7*\y) {$\overline{s}$};   

\draw[color=lightgray] (h0e0) -- (h0e0l);
\draw[color=lightgray] (h0e0) -- (h0e0r);

\node[color=lightgray]  (h0e1l) at (-1.85*\x,-3.5*\y) {$\overline{b}$}; 
\node[draw, circle, inner sep=2.5] (h0e1r) at (0*\x,-3.5*\y) {$s$}; 
\node[font=\small] (l) at (0*\x-\l,-3.5*\y) {$v_3\!:$}; 

\draw[color=lightgray] (h0e1) -- (h0e1l);
\draw[color=lightgray] (h0e1) -- (h0e1r);

\node[color=lightgray] (h1e0b1) at (1.85*\x,-3.5*\y) {$\overline{p}$};

\draw[->, color=lightgray] (h1e0) -- (h1e0b1);
\draw[->, dashed] (h1e0) -- (h0e1r);

\node[color=lightgray] (h0e1rs0) at (-0.5*\x,-4.5*\y) {$1$};
\node[color=lightgray] (h0e1rs1) at (0.5*\x,-4.5*\y) {$p$};

\draw[->, color=lightgray] (h0e1r) -- (h0e1rs1);
\draw[->, dashed, color=lightgray] (h0e1r) -- (h0e1rs0);

\node[font=\small] (i0) at (0+1.75,0) {$h\,\overline{e}\,\overline{b}\,\overline{s} \in \IP(\Sigma_{v_0})$};
\node[font=\small] (i3) at (0*\x+2.75,-3.5*\y) {$\overline{s} \in \IP(\Sigma_{v_3})$}; 
\draw[-stealth, color=gray] (i3) to [out=80, in=-50] (i0);
\node[font=\small,fill=white, inner sep=0] (i1) at (+\x+1.75,-\y) {$\overline{e}\,\overline{b}\,\overline{s} \in \IP(\Sigma_{v_1})$}; 
\node[font=\small,fill=white, inner sep=0] (i2) at (+1.75*\x+1.7,-2.5*\y) {$\overline{b}\,\overline{s} \in \IP(\Sigma_{v_2})$}; 
\end{tikzpicture}
\caption{Propagation of an implicant}
\label{fig:propagation}
\end{subfigure}
\caption{Generation of a new prime implicant from a dec-DNNF circuit}
\end{figure*}

\begin{proposition}[\cite{Strozecki19}]\label{prop:IncP_AnotherSol}
A problem Enum$\cdot A$ in {\sf EnumP} is in {\sf IncP} if and only if AnotherSol$\cdot A$ is in~{\sf FP}. %Moreover, if there is a algorithm solving AnotherSol$\cdot A(x,S)$ in $O(|S|^a|x|^b)$ for some $b$, then Enum$\cdot A$ is in {\sf IncP}$_{a+1}$.
\end{proposition}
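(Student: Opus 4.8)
The plan is to prove the two implications separately, as this is the standard back-and-forth between an enumerator and its ``another solution'' oracle; throughout, write $n = |x|$.

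\textbf{From AnotherSol to IncP.} Assume AnotherSol$\cdot A$ is computed by a RAM running in time $\poly(|x| + |S|)$ on input $(x,S)$. I would build an enumerator that maintains the set $S$ of solutions already output, initialised to $\emptyset$: at each step it calls AnotherSol$\cdot A$ on $(x,S)$; if the answer is some $y$ it outputs $y$ and adds it to $S$, and if the answer is \textit{false} it halts. Since AnotherSol$\cdot A$ returns \textit{false} exactly when $S = A(x)$, after $t$ steps this has produced $t$ pairwise distinct solutions of $A(x)$, or all of $A(x)$ (when $t > |A(x)|$, having performed $|A(x)|+1$ calls before halting). For the running time: because Enum$\cdot A \in {\sf EnumP}$, every solution has size polynomial in $n$, so after $i$ steps the string encoding $S$ has size at most $i\cdot\poly(n)$; hence the $i$-th call costs $\poly(n + i\cdot\poly(n)) \le \poly(n+i)$, and summing over $i \le t+1$ the total time is bounded by $(t{+}1)\cdot\poly(n+t)$, which is of the form $t^a n^b$. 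Thus Enum$\cdot A \in {\sf IncP}$.

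\textbf{From IncP to AnotherSol.} Assume a RAM $M$ solves Enum$\cdot A$ in incremental time $t^a n^b$. Given $(x,S)$ with $S \subseteq A(x)$, let $k = |S|$ be the number of its elements. I would run $M$ on $x$ and stop it as soon as it has emitted $k+1$ pairwise distinct solutions, or it halts before doing so. If $M$ halts having emitted at most $k$ solutions, then $|A(x)| \le k$; together with $S \subseteq A(x)$ and $|S| = k$ this forces $A(x) = S$, so return \textit{false}. Otherwise $M$ has emitted $k+1$ distinct solutions $y_1,\dots,y_{k+1} \in A(x)$, and since $|S| = k$ at least one $y_i \notin S$ by the pigeonhole principle; return the first such $y_i$, where each membership test in $S$ costs polynomial time. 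Running $M$ up to its $(k{+}1)$-th output costs $O((k{+}1)^a n^b)$, which is polynomial in the size of the input $(x,S)$ (as $k \le |S|$), and the membership tests add only polynomial overhead; also $|y_i|$ is bounded by $M$'s running time, hence polynomial, so the output is of polynomial size. Therefore AnotherSol$\cdot A \in {\sf FP}$, and with Proposition~\ref{prop:IncP_AnotherSol} this is an equivalence.

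\textbf{Main obstacle.} Both directions amount to bookkeeping; the one point that needs care is the time accounting in the first direction, where the {\sf EnumP} hypothesis must be invoked to control the growth of the accumulated set $S$ (each solution being of size $\poly(n)$), so that the per-step cost of passing $S$ to the oracle, summed over all steps, stays of the incremental form $t^a n^b$ rather than blowing up. A minor point is to interpret the definition of incremental time as bounding the time until the $t$-th emitted solution, which is what legitimises ``stop $M$ after its $(k{+}1)$-th output'' in the second direction.
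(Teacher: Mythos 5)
Your proof is correct: this proposition is imported from \cite{Strozecki19} and the paper gives no proof of its own (the appendix starts at Proposition~2), and your two-direction argument is exactly the standard one behind the cited result. Both the use of the {\sf EnumP} bound to control the size of the accumulated set $S$ in the oracle calls and the truncation of the {\sf IncP} enumerator after its $(k{+}1)$-th output (justified by the incremental-time bound) are the right points of care, and you handle them correctly.
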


Note that {\sf OutputP} is thought to be distinct from {\sf IncP}~\cite{Strozecki19}.

\section{Enum$\cdot$IP from dec-DNNF is in OutputP}\label{sec:outputp}

Let us first consider the problem of enumerating the prime implicants of a Boolean function $f$ given as a dec-DNNF circuit $\Sigma$, for short the prime implicants of $\Sigma$. Let $\IP(\Sigma,t)$ be the binary predicate representing the relation that $t$ is a prime implicant of $\Sigma$. Then $\IP(\Sigma)$ denotes the set of prime implicants of $\Sigma$. We extend the notation $\IP(\cdot)$ to any Boolean function~$f$. To be able to speak of prime implicants enumeration from circuits other than dec-DNNF ones we write ``Enum.$\IP$ from $L$'' with $L$ the language $\Sigma$ belongs to. 

We start with a couple of easy results. First of all, since there is a linear-time procedure to verify that a term is an implicant of a dec-DNNF circuit, there is a polynomial-time algorithm to decide whether a given a term is a prime implicant of a dec-DNNF circuits, thus:

\begin{proposition}
Enum$\cdot\IP$ from dec-DNNF is in {\sf EnumP}.
\end{proposition}

In addition, it is known that Enum$\cdot\IP$ from OBDD is in {\sf OutputP}~\cite{MadreC91}, and it is almost straightforward to extend this result to dec-DNNF. To make it precise, let us briefly describe the output polynomial construction of $\IP(\Sigma)$ for $\Sigma$, a dec-DNNF circuit. The construction is based on the three following, folklore propositions (for the sake of completeness, a proof for each of them is nonetheless reported as a supplementary material).

\begin{proposition}%\textup{\cite{Marquis93}}
\label{prop:decomposable_and_IP}
Let $f$ and $g$ be Boolean functions, then $\IP(f \land g) = \max(\{t \land t' \mid t \in \IP(f), \, t'\in \IP(g) \}, \models).$ Furthermore if $\var(f) \cap \var(g) = \emptyset$, then $\IP(f \land g) = \{t \land t' \mid t \in \IP(f), \, t'\in \IP(g) \}.$
\end{proposition}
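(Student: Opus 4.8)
The plan is to prove Proposition~\ref{prop:decomposable_and_IP} by establishing the two set inclusions for the general identity, and then specializing to the decomposable case. Throughout, I write $h = f \land g$ and I use the standard fact that a term $t$ is an implicant of $h$ iff $t \models f$ and $t \models g$, together with the fact that every implicant of a Boolean function entails some prime implicant of that function (obtained by greedily removing literals).

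\medskip
\textbf{The easy inclusion ($\supseteq$).} Take $t \in \IP(f)$ and $t' \in \IP(g)$ and suppose $t \land t'$ is $\models$-maximal in the set $T := \{ t_1 \land t_1' \mid t_1 \in \IP(f),\ t_1' \in \IP(g)\}$; I must show $t \land t' \in \IP(h)$. First, $t \land t' \models f$ and $t \land t' \models g$, so $t \land t'$ is an implicant of $h$. If it were not prime, some proper subterm $u \subsetneq t \land t'$ would be an implicant of $h$, hence an implicant of both $f$ and $g$; extend $u$ to a prime implicant $t_2 \in \IP(f)$ and a prime implicant $t_2' \in \IP(g)$ with $u \models t_2$ and $u \models t_2'$, so $u \models t_2 \land t_2' \in T$. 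But $u \subsetneq t \land t'$ means $t \land t' \not\models u$ in general; I need to be a little careful here — the right statement is that $t_2 \land t_2'$, being entailed by $u$, is entailed by $t \land t'$ as well (since $t\land t' \models u \models t_2 \land t_2'$), and $t_2 \land t_2' \models u \subsetneq t\land t'$ shows $t_2 \land t_2' \neq t \land t'$, contradicting $\models$-maximality of $t \land t'$ in $T$. Hence $t \land t' \in \IP(h)$.

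\medskip
\textbf{The other inclusion ($\subseteq$).} Let $t \in \IP(h)$. Then $t \models f$ and $t \models g$, so there are prime implicants $t_1 \in \IP(f)$ and $t_1' \in \IP(g)$ with $t \models t_1$ and $t \models t_1'$, hence $t \models t_1 \land t_1'$. Since $t_1 \land t_1' \models f \land g = h$ and $t$ is a \emph{prime} implicant of $h$, the implicant $t_1 \land t_1'$ of $h$ cannot be a proper subterm of $t$; combined with $t \models t_1 \land t_1'$ (which for terms means $t_1 \land t_1'$ is a subterm of $t$, modulo the trivial cases $0$), we get $t = t_1 \land t_1'$, so $t \in T$. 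It remains to check $t$ is $\models$-maximal in $T$: if $t \models t_2 \land t_2'$ for some $t_2 \in \IP(f), t_2' \in \IP(g)$ with $t_2 \land t_2' \neq t$, then $t_2 \land t_2'$ is an implicant of $h$ that is a proper subterm of $t$, contradicting primality of $t$. Thus $t \in \max(T, \models)$.

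\medskip
\textbf{The decomposable case.} When $\var(f) \cap \var(g) = \emptyset$, I claim the set $T$ is already $\models$-antichain-free in the relevant sense, i.e.\ $\max(T,\models) = T$. Indeed, suppose $t_1 \land t_1' \models t_2 \land t_2'$ with $t_1, t_2 \in \IP(f)$ and $t_1', t_2' \in \IP(g)$. Because the variables are disjoint, a term over $\var(f)$ can only be entailed by constraints on $\var(f)$: formally, $t_1 \land t_1' \models t_2$ forces $t_1 \models t_2$ (project onto $\var(f)$ — any assignment satisfying $t_1$ can be combined with one satisfying $t_1'$), and since $t_1, t_2$ are both prime implicants of $f$, $t_1 \models t_2$ gives $t_1 = t_2$; symmetrically $t_1' = t_2'$. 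So no term of $T$ strictly entails another, whence $\IP(f \land g) = \max(T,\models) = T$. The main obstacle, and the place to be most careful, is the subterm/entailment bookkeeping for terms — keeping straight that for terms $u \models v$ is equivalent to "$v$'s literals are among $u$'s literals" (barring the $0$ term), and handling the degenerate cases where $f$, $g$, or a conjunction is unsatisfiable or valid (e.g.\ $t_\emptyset$ as the unique prime implicant of a valid function); these edge cases should be dispatched by a short separate remark rather than threaded through the main argument.
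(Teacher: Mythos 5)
Your proof is correct in substance but takes a genuinely different route from the paper on the first identity: the paper does not prove $\IP(f \land g) = \max(\{t \land t' \mid t \in \IP(f),\, t' \in \IP(g)\}, \models)$ at all — it cites Marquis (1993) for it — and only proves the disjoint-variable refinement, whereas you give a self-contained two-inclusion argument for the general identity. That buys the reader a complete proof without chasing the reference, at the cost of the subterm/entailment bookkeeping you rightly identify as the delicate part. On the disjoint case, your argument and the paper's are essentially the same: show that one conjunction entailing another forces componentwise entailment $t_1 \models t_2$ and $t_1' \models t_2'$, then use primality to conclude $t_1 = t_2$ and $t_1' = t_2'$, so that the $\max$ operator is vacuous; the paper does this syntactically by intersecting literal sets with $\lit(f)$ and $\lit(g)$, while you do it semantically by combining assignments over the disjoint variable sets — both are fine. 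Two small repairs are needed. First, in your $\supseteq$ direction the clause ``$t_2 \land t_2' \models u$'' has the entailment reversed: what you established is $u \models t_2 \land t_2'$, i.e., the literals of $t_2 \land t_2'$ lie among those of $u \subsetneq t \land t'$, and it is this literal-set chain (not the reversed entailment) that gives $t_2 \land t_2' \neq t \land t'$ and hence the contradiction with $\models$-maximality. Second, the degenerate cases you defer really do need the promised remark for the general (non-disjoint) identity: with $f = x$ and $g = \overline{x}$ the right-hand side is the singleton $\{x \land \overline{x}\}$ while $\IP(f \land g)$ is empty under the convention that implicants are consistent terms, so inconsistent conjunctions must be discarded (or satisfiability assumed) for the statement to read as written; your $\supseteq$ argument silently assumes $t \land t'$ is consistent when it treats entailment between terms as literal inclusion. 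None of this affects the disjoint-variable case, which is the one the paper actually uses and where consistency of $t \land t'$ is automatic.
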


\begin{proposition}
\label{prop:propagate_IP}
Let $f$ a Boolean function, let $x$ be a variable, and let $\ell \in \{x,\overline{x}\}$. Consider $t \in \IP(f|\ell)$. If $t \models f|\overline{\ell}$, then $t \in \IP(f)$, otherwise $t \land \ell \in \IP(f)$.
\end{proposition}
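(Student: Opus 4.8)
The plan is to reduce everything to two elementary facts about cofactors and then split into exactly the two cases of the statement. Write $\overline{\ell}$ for the complement of $\ell$ and let $x=\var(\ell)$, so that $f=(\ell\land f|\ell)\lor(\overline{\ell}\land f|\overline{\ell})$. First I would establish, for every term $r$ with $x\notin\var(r)$: (i) $r\models f|\ell$ if and only if $r\land\ell\models f$; and (ii) $r\models f$ if and only if both $r\models f|\ell$ and $r\models f|\overline{\ell}$. Both are immediate from the semantics of cofactoring: a model of $r\land\ell$ fixes $x$ so as to make $\ell$ true, and on $\var(f)\setminus\{x\}$ it satisfies $f$ exactly when it satisfies $f|\ell$; and any assignment to $\var(f)\setminus\{x\}$ satisfying $r$ has precisely the two extensions by $\ell$ and by $\overline{\ell}$, on which $f$ reduces to $f|\ell$ and $f|\overline{\ell}$ respectively. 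Since $t\in\IP(f|\ell)$ and $\var(f|\ell)=\var(f)\setminus\{x\}$, neither $t$ nor any term $t-\ell'$ obtained by deleting a literal of $t$ mentions $x$, so (i) and (ii) are applicable to all of them.

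With these in hand I would argue as follows. If $t\models f|\overline{\ell}$, then (ii) gives $t\models f$, so $t$ is an implicant of $f$; and if some $t-\ell'$ were also an implicant of $f$, then (ii) would yield $t-\ell'\models f|\ell$, contradicting the primality of $t$ for $f|\ell$ — hence $t\in\IP(f)$. If instead $t\not\models f|\overline{\ell}$, note that $\ell$ is a fresh literal for $t$, so $t\land\ell$ is a well-formed term in which $\ell$ can be removed; by (i), $t\land\ell\models f$, so it is an implicant of $f$. For primality, deleting $\ell$ leaves $t$, and (ii) shows $t\not\models f$ precisely because $t\not\models f|\overline{\ell}$; deleting a literal $\ell'$ of $t$ leaves $(t-\ell')\land\ell$, which by (i) entails $f$ only if $t-\ell'\models f|\ell$, again contradicting primality of $t$ for $f|\ell$. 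Hence $t\land\ell\in\IP(f)$, which completes the case analysis.

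There is no deep difficulty here; the only point that genuinely needs care is the variable bookkeeping underlying Step~1 — checking that $t$ and each of its one-literal deletions omit $x$ so that the cofactor equivalences (i) and (ii) can be invoked, and that in the second case $\ell$ is really a fresh literal for $t$, so that $t\land\ell$ is a legitimate term with $\ell$ removable rather than a literal already present in, or clashing with, $t$. Everything else is a direct unfolding of the definitions of implicant, prime implicant, and cofactor.
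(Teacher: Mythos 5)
Your proof is correct and follows essentially the same route as the paper's: a case split on whether $t \models f|\overline{\ell}$, using the Shannon expansion / cofactor-entailment facts (your (i) and (ii) are exactly what the paper uses implicitly via $t'|\ell = t'$ and $f \equiv (\overline{x}\land f|\overline{x})\lor(x\land f|x)$). If anything, your handling of the subcase where deleting $\ell$ from $t\land\ell$ leaves $t$ itself (ruled out via (ii) and the assumption $t\not\models f|\overline{\ell}$) is spelled out slightly more explicitly than in the paper.
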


\begin{proposition}%\textup{\cite{BraytonHMS84}}
\label{prop:decision_node_IP}
Let $f$ be a Boolean function and let $x$ be a variable.
\[
\begin{aligned}
\IP(f)
= \,     &\{t \land \overline{x} \mid t \in \IP(f|\overline{x}), \, t \not\models f|x \} \\
 \cup \, &\{t \land x \mid t \in \IP(f|x), \, t \not\models f|\overline{x} \} \\
 \cup \, &\IP(f|\overline{x} \land f|x)
\end{aligned}
\]
\end{proposition}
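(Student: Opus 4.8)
The plan is to prove the identity by partitioning $\IP(f)$ according to how a prime implicant interacts with the variable $x$. Since (restricting as usual to consistent terms) a prime implicant of $f$ mentions no variable outside $\var(f)$ and cannot contain both $x$ and $\overline{x}$, every $t \in \IP(f)$ falls into exactly one of three cases: $t$ contains the literal $\overline{x}$, $t$ contains the literal $x$, or $t$ does not mention $x$ at all. I would show that these three cases contribute precisely the first, the second, and the third set on the right-hand side.

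The key elementary fact I would establish first is: for any term $t$ not mentioning $x$, one has $t \models f$ if and only if $t \models f|\overline{x}$ and $t \models f|x$, i.e. $t \models f \iff t \models f|\overline{x} \land f|x$. This is just the complete case split of ``$t \models f$'' on the value of $x$ (into $t \land \overline{x} \models f$ and $t \land x \models f$), combined with the definition of conditioning. From this, and since every subterm of such a $t$ still does not mention $x$, primality transfers verbatim: the prime implicants of $f$ that do not mention $x$ are \emph{exactly} the prime implicants of $f|\overline{x} \land f|x$, which is the third set. (This step genuinely needs the function $f|\overline{x} \land f|x$ itself: a prime implicant of $f|\overline{x} \land f|x$ need be prime for neither $f|\overline{x}$ nor $f|x$, so one cannot replace $\IP(f|\overline{x} \land f|x)$ by something built only from $\IP(f|\overline{x})$ and $\IP(f|x)$ as in Proposition~\ref{prop:propagate_IP}.)

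For the case where $t$ contains $\overline{x}$, write $t = t' \land \overline{x}$ with $t'$ not mentioning $x$. Then $t \models f$ iff $t' \land \overline{x} \models f$ iff $t' \models f|\overline{x}$. Moreover $t = t' \land \overline{x}$ is prime for $f$ iff removing $\overline{x}$ breaks entailment (i.e. $t' \not\models f$) and removing any literal $\ell$ of $t'$ breaks entailment (i.e. $(t' - \ell) \land \overline{x} \not\models f$, i.e. $t' - \ell \not\models f|\overline{x}$). The latter condition together with $t' \models f|\overline{x}$ says exactly $t' \in \IP(f|\overline{x})$, while, given $t' \models f|\overline{x}$, the key fact shows $t' \not\models f$ is equivalent to $t' \not\models f|x$. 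Hence this case contributes exactly $\{t' \land \overline{x} \mid t' \in \IP(f|\overline{x}),\ t' \not\models f|x\}$, the first set; the case where $t$ contains $x$ is symmetric and yields the second set. Taking the union over the three cases gives the claimed equality.

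The argument is entirely elementary; the only points needing care are bookkeeping ones: verifying that prime implicants mention no variable outside $\var(f)$ (so the three cases are exhaustive), checking the degenerate situations $t' = t_\emptyset$ and $x \notin \var(f)$ (where $f|\overline{x} = f|x = f$ and both ``filtered'' sets are empty, so the identity still holds), and — the one real conceptual point — keeping the third set in the form $\IP(f|\overline{x} \land f|x)$ rather than trying to decompose it further. I expect this last point to be the main thing to get right; everything else is routine.
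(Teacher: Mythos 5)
Your proof is correct and takes essentially the same route as the paper's: a case split on whether a prime implicant of $f$ contains $\overline{x}$, contains $x$, or does not mention $x$, with each case handled by the elementary conditioning facts ($t \land \ell \models f$ iff $t \models f|\ell$, and, for $t$ not mentioning $x$, $t \models f$ iff $t \models f|\overline{x} \land f|x$). The only difference is organizational: you establish each case as a two-way equivalence (re-deriving directly what the paper delegates to Proposition~\ref{prop:propagate_IP} for the $\supseteq$ direction of the first two sets), but the decomposition and the key observations are the same.
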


\noindent Note that $t \in \IP(f|\overline{x})$ (resp. $\IP(f|x)$) entails $f|x$ (resp. $f|\overline{x})$ if and only if $t$ is subsumed by some term in $\IP(f|\overline{x} \land f|x)$. As a consequence, from $\IP(f|\overline{x})$ and $\IP(f|x)$, one can construct $\IP(f|\overline{x} \land f|x)$ in polynomial time thanks to  Proposition~\ref{prop:decomposable_and_IP} and we use it to derive $\IP(f)$ thanks to Proposition~\ref{prop:decision_node_IP}.

We also have that (see the algorithm for conditioning a prime implicant representation provided in \cite{DarwicheM02}):

\begin{proposition}\label{prop:number_of_IP}
Let $f$ a Boolean function and let $x$ be a variable, then 
$|\IP(f)| \geq \max(|\IP(f|\overline{x})|, |\IP(f|x)|)$.
\end{proposition}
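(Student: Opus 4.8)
The plan is to derive the inequality $|\IP(f)| \geq \max(|\IP(f|\overline{x})|, |\IP(f|x)|)$ directly from the structural description of $\IP(f)$ given in Proposition~\ref{prop:decision_node_IP}. By symmetry it suffices to prove $|\IP(f)| \geq |\IP(f|\overline{x})|$; the bound for $|\IP(f|x)|$ follows by swapping the roles of $x$ and $\overline{x}$. So I will exhibit an injection from $\IP(f|\overline{x})$ into $\IP(f)$.

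First I would fix the natural candidate map. Take $t \in \IP(f|\overline{x})$. By Proposition~\ref{prop:propagate_IP} applied with $\ell = \overline{x}$, either $t \models f|x$, in which case $t \in \IP(f)$, or $t \not\models f|x$, in which case $t \land \overline{x} \in \IP(f)$. Define $\Phi(t)$ to be $t$ in the first case and $t \land \overline{x}$ in the second; the cited proposition guarantees $\Phi(t) \in \IP(f)$, so $\Phi : \IP(f|\overline{x}) \to \IP(f)$ is well defined. It remains to check $\Phi$ is injective. Suppose $\Phi(t_1) = \Phi(t_2)$ for $t_1, t_2 \in \IP(f|\overline{x})$. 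Since $t_1, t_2$ do not mention $x$ (they are implicants of $f|\overline{x}$, a function over $\var(f) \setminus \{x\}$), the literal $\overline{x}$ occurs in $\Phi(t_i)$ precisely when we are in the second case for $t_i$; hence $\Phi(t_1) = \Phi(t_2)$ forces $t_1$ and $t_2$ to be in the same case, and then the common value is either $t_1 = t_2$ or $t_1 \land \overline{x} = t_2 \land \overline{x}$, both giving $t_1 = t_2$. Thus $\Phi$ is injective and $|\IP(f)| \geq |\IP(f|\overline{x})|$.

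The only subtlety to be careful about is the bookkeeping on variables: one must argue that $x \notin \var(t)$ for every $t \in \IP(f|\overline{x})$ so that ``$t$'' and ``$t \land \overline{x}$'' are genuinely distinguishable as terms (and in particular $t \land \overline{x}$ really is the term obtained by appending a fresh literal, matching the set-membership in Proposition~\ref{prop:decision_node_IP}). This is immediate from the definition of $f|\overline{x}$ as a function over $\var(f) \setminus \{x\}$, whose implicants are terms over that variable set. I expect this to be essentially the whole difficulty — the argument is short and the main obstacle is simply making the case analysis and the injectivity check airtight rather than any real technical hurdle. Finally, running the symmetric argument with $\ell = x$ yields $|\IP(f)| \geq |\IP(f|x)|$, and combining the two inequalities gives the claim.
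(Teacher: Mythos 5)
Your argument is correct, but it takes a genuinely different route from the paper's proof. You build an explicit injection $\Phi\colon \IP(f|\overline{x})\to\IP(f)$ out of Proposition~\ref{prop:propagate_IP}: each $t\in\IP(f|\overline{x})$ is sent either to itself (when $t\models f|x$) or to $t\land\overline{x}$, and injectivity is read off from whether $\overline{x}$ occurs in the image, which is legitimate because $x\notin\var(t)$ for any implicant of $f|\overline{x}$; the symmetric argument with $\ell=x$ gives the other bound. The paper instead goes in the opposite direction: it invokes the conditioning identity $\IP(f|\ell)=\max(\{t|\ell\mid t\in\IP(f)\},\models)$ from \cite{DarwicheM02}, so every prime implicant of $f|\ell$ arises as the restriction of some prime implicant of $f$, and the cardinality bound follows by comparing with the image of the restriction map $t\mapsto t|\ell$. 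Your approach buys self-containedness: it uses only Proposition~\ref{prop:propagate_IP}, which is proved in the paper, and the injection makes the counting completely transparent. The paper's approach buys brevity and reuse of a structural fact about conditioning prime-implicant representations that is also of independent interest; as a minor point, its final step asserts $|\{t|\ell\mid t\in\IP(f)\}| = |\IP(f)|$ where only $\leq$ is warranted (restriction need not be injective), though the inequality it needs still holds, whereas your injection sidesteps that subtlety entirely.
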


Consider now a dec-DNNF circuit $\Sigma$ and an internal node $v$ with two children $u$ and $w$. If the sets $\IP(\Sigma_u)$ and $\IP(\Sigma_w)$ are provided, then $\IP(\Sigma_v)$ is obtained in polynomial time using Proposition~\ref{prop:decomposable_and_IP} if $v$ is a decomposable $\land$-gate, and using Proposition~\ref{prop:decision_node_IP} if $v$ is a decision node. Furthermore, in both cases, we have $|\IP(\Sigma_v)| \geq \max(|\IP(\Sigma_u)|,|\IP(\Sigma_w)|)$. These observations lead to a simple algorithm that generates $\IP(\Sigma)$ by computing the sets $\IP(\Sigma_v)$ for every node $v$ of $\Sigma$ considered in a 
bottom-up way. Since constructing the set of prime implicants for any node given that of its children is tractable, since this set is smaller than $|\IP(\Sigma)|$, and since it is computed only once, the algorithm runs in time $O(poly(|\Sigma|+|\IP(\Sigma)|))$. Thus, we get:

\begin{proposition}\label{lemma:enum_ip_dec-DNNF_outputP}
Enum$\cdot\IP$ from dec-DNNF is in {\sf OutputP}.
\end{proposition}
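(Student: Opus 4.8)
The plan is to establish Proposition~\ref{lemma:enum_ip_dec-DNNF_outputP} via a straightforward dynamic-programming argument over the DAG structure of $\Sigma$, leveraging the recursive characterizations of $\IP(\cdot)$ already stated. Concretely, I would process the nodes of $\Sigma$ in a reverse topological order (leaves first, root last), and at each node $v$ compute and store the set $\IP(\Sigma_v)$ explicitly as a list of terms. For a leaf labelled $0$ there are no implicants (the circuit is reduced, so this case does not occur at an internal node's child in a way that matters); for a leaf labelled $1$ we have $\IP = \{t_\emptyset\}$; for a leaf labelled by a literal $\ell$ we have $\IP = \{\ell\}$. For an internal node, we distinguish the two admissible cases: if $v$ is a decomposable $\land$-node with children $u,w$, then by the decomposability clause of Proposition~\ref{prop:decomposable_and_IP} we have $\IP(\Sigma_v) = \{t \land t' \mid t \in \IP(\Sigma_u),\ t' \in \IP(\Sigma_w)\}$, which is computable in time $O(|\IP(\Sigma_u)| \cdot |\IP(\Sigma_w)| \cdot n)$ with no subsumption check needed; if $v$ is a decision node on variable $x$ with $0$-child $u$ and $1$-child $w$, then $\Sigma_v$ represents $f = (\overline{x} \land f_0) \lor (x \land f_1)$ where $f_0 = f|\overline{x}$ is computed by $\Sigma_u$ and $f_1 = f|x$ by $\Sigma_w$, so Proposition~\ref{prop:decision_node_IP} expresses $\IP(\Sigma_v)$ in terms of $\IP(\Sigma_u)$, $\IP(\Sigma_w)$, and $\IP(f_0 \land f_1)$, the latter obtained in polynomial time from $\IP(\Sigma_u)$ and $\IP(\Sigma_w)$ by the general clause of Proposition~\ref{prop:decomposable_and_IP} (with a subsumption filtering step $\max(\cdot,\models)$). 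The entailment tests $t \not\models f|x$ etc.\ reduce, by the remark following Proposition~\ref{prop:decision_node_IP}, to checking subsumption against the already-computed set $\IP(f_0 \land f_1)$, again polynomial.

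The key point for the running-time bound is the size control provided by Proposition~\ref{prop:number_of_IP}: applied at each internal node it gives $|\IP(\Sigma_v)| \geq \max(|\IP(\Sigma_u)|, |\IP(\Sigma_w)|)$ — for the decision node directly, and for the $\land$-node because each factor $t$ (resp.\ $t'$) extends to at least one term $t \land t'$ in the product and distinct $t$'s give distinct products (pick any fixed $t' \in \IP(\Sigma_w)$). Since the root $r$ of $\Sigma$ satisfies $\IP(\Sigma_r) = \IP(\Sigma)$ and every node lies on a path to the root, an easy induction up that path shows $|\IP(\Sigma_v)| \leq |\IP(\Sigma)|$ for every node $v$. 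Hence the intermediate sets never blow up beyond the size of the final output. Each node is processed exactly once (we memoize $\IP(\Sigma_v)$ so shared subcircuits are not recomputed), the per-node work is polynomial in $n + |\IP(\Sigma_u)| + |\IP(\Sigma_w)| \le n + 2|\IP(\Sigma)|$, and there are at most $|\Sigma|$ nodes, so the total time is $O(\poly(|\Sigma| + |\IP(\Sigma)|))$. Finally, $|\IP(\Sigma)|$ counts the solutions and each solution has size at most $n \le |\Sigma|$ as a string, so this is an output-polynomial bound in the sense of the definition, giving Enum$\cdot\IP$ from dec-DNNF in {\sf OutputP}.

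I would also briefly justify correctness of the overall recursion: the sets $\IP(\Sigma_v)$ are computed correctly by structural induction, the base cases being immediate and the inductive step being exactly Proposition~\ref{prop:decomposable_and_IP} or Proposition~\ref{prop:decision_node_IP} applied to the functions $f_0, f_1$ which are genuinely $\Sigma_v$ conditioned on $\overline{x}, x$ — this uses that in a dec-DNNF the $0$-child and $1$-child of a decision node on $x$ compute functions not mentioning $x$, which follows from the semantics recalled in the preliminaries. One should note that the output is produced without duplicates: the recursive formulas each yield a set (the $\land$-case product is over implicants of variable-disjoint functions, and the decision-node union is of three pairwise-disjoint families after the subsumption filtering, since the first two families contain literal $x$ with a fixed sign while the third contains neither), so a light deduplication pass — or careful bookkeeping — at each node suffices and costs only a further polynomial factor.

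I do not expect a serious obstacle here: the three structural propositions do essentially all the semantic work, and the only thing to get right is the accounting argument that intermediate $\IP$-sets stay bounded by $|\IP(\Sigma)|$, which is exactly what Proposition~\ref{prop:number_of_IP} is tailored for. The mildly delicate point — and the one I would state carefully rather than wave at — is that an output-polynomial algorithm is allowed time polynomial in $|x| + |A(x)|$ but the RAM must still actually \emph{enumerate} (print) the solutions; since we build $\IP(\Sigma)$ explicitly at the root, printing its elements one by one is trivially within budget, so there is nothing subtle there either. Thus the proof is essentially: define the bottom-up algorithm, invoke Propositions~\ref{prop:decomposable_and_IP}, \ref{prop:decision_node_IP} for correctness, invoke Proposition~\ref{prop:number_of_IP} for the size bound, and sum up the per-node costs.
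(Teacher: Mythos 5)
Your proposal is correct and follows essentially the same route as the paper's proof: a bottom-up pass computing $\IP(\Sigma_v)$ at every node via Propositions~\ref{prop:decomposable_and_IP} and~\ref{prop:decision_node_IP}, with Proposition~\ref{prop:number_of_IP} (plus the product argument at $\land$-nodes) giving $|\IP(\Sigma_v)| \leq |\IP(\Sigma)|$ and hence the output-polynomial bound. The only cosmetic difference is that you perform the entailment tests by subsumption against $\IP(f|\overline{x} \land f|x)$ while the paper invokes linear-time implicant checking on dec-DNNF circuits; both are sanctioned by the text and yield the same polynomial per-node cost.
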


\begin{example}
We give the construction of the sets of prime implicants for the nodes $v_1,v_2,v_3$ in the dec-DNNF circuit $\Sigma$ represented on Figure~\ref{fig:path}.
\begin{itemize}
\item[$v_3$:]  the sets of prime implicants of the children are $\IP(1) = \{t_\emptyset\}$ and $\IP(p) = \{p\}$. Using Proposition~\ref{prop:decision_node_IP} we have that $\overline{s}\land t_\emptyset = \overline{s}$ and $\Sigma_{v_3}|s = p$, so $\overline{s}\land t_\emptyset  \not\models  \Sigma_{v_3}|s$ showing that $\overline{s} \in \IP(\Sigma_{v_3})$. We also have that $\Sigma_{v_3}|\overline{s} = 1$, so $s\,p \models \Sigma_{v_3}|\overline{s}$ showing that $s\,p \not\in \IP(\Sigma_{v_3})$. Finally,   we have that $\IP(\Sigma_{v_3}|s \land \Sigma_{v_3}|\overline{s}) = \{p\}$ by Proposition~\ref{prop:decomposable_and_IP}, so $\IP(\Sigma_{v_3}) = \{\overline{s},p\}$.
\item[$v_2$:] the sets of prime implicants of the children are $\IP(\overline{p}) = \{\overline{p}\}$ and $\IP(\Sigma_{v_3})$ so we compute $\IP(\Sigma_{v_2}) = \{b\,\overline{p}, \overline{b}\,\overline{s}, \overline{b}\,p, \overline{p}\,\overline{s}\}$
\item[$v_1$:] the sets of prime implicants of the children are $\IP(\overline{p} \land \overline{s}) = \{\overline{p}\,\overline{s}\}$ and $\IP(\Sigma_{v_2})$ so we compute $\IP(\Sigma_{v_1}) = \{\overline{e}\,\overline{b}\,p, \overline{e}\,b\,\overline{p}, \overline{e}\,\overline{b}\,\overline{s}, \overline{p}\,\overline{s}\}$
\end{itemize}
\end{example}

\section{Enum$\cdot$IP from dec-DNNF is in IncP}\label{sec:incp}

We now investigate Enum$\cdot$IP from dec-DNNF from the incremental enumeration perspective. Based on Proposition~\ref{prop:IncP_AnotherSol}, we design a tractable algorithm \texttt{AnotherIP} for solving the problem AnotherSol$\cdot\IP$, thus showing that Enum$\cdot\IP$ from dec-DNNF is in {\sf IncP}.

\subsection{Solving the decision variant of AnotherSol$\cdot$IP}

We first consider the decision variant of AnotherSol$\cdot\IP$ from dec-DNNF: given a dec-DNNF circuit $\Sigma$ and a set $S \subseteq \IP(\Sigma)$, return \emph{false} if and only if $S \neq \IP(\Sigma)$. Recall from the discussion preceding Proposition~\ref{lemma:enum_ip_dec-DNNF_outputP} that there is a bottom-up procedure for generating all prime implicants of the dec-DNNF circuit $\Sigma$. To address the decision variant of AnotherSol$\cdot\IP$ on inputs $\Sigma$ and $S$, a reverse, top-down search is performed, assuming that $S$ is $\IP(\Sigma)$ until finding a contradiction. 

Before defining what a contradiction means in this setting, a few notations are useful. For $t$ a term and $X$ a set of variables, $t_X$ denotes the restriction of $t$ to variables in $X$. Note that if $X$ and $\var(t)$ are disjoint, then $t_X$ is the empty term~$t_\emptyset$.

\begin{proposition}\label{prop:decomposable_and_IP_check} 
Let $\Sigma$ be a dec-DNNF circuit and let $S \subseteq \IP(\Sigma)$. If the root of $\Sigma$ is an $\land$-node, let $u$ and $w$ be its children and let $S_u = \{t_{\var(\Sigma_u)} \mid t \in S\}$ and $S_w = \{t_{\var(\Sigma_w)} \mid t \in S\}$. Then $S_u \subseteq \IP(\Sigma_u)$ and $S_w \subseteq \IP(\Sigma_w)$ hold, and
\[\begin{aligned}
S = \IP(\Sigma) \text{ iff } & S_u = \IP(\Sigma_u) \text{ and } S_w = \IP(\Sigma_w) \\
\text{and } & S = \{t_u \land t_w \mid t_u \in S_u, t_w \in S_w\}.
\end{aligned}
\]
\end{proposition}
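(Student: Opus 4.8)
The plan is to reduce the statement to Proposition~\ref{prop:decomposable_and_IP}. Since the root of $\Sigma$ is a decomposable $\land$-node with children $u$ and $w$, the variable sets $\var(\Sigma_u)$ and $\var(\Sigma_w)$ are disjoint and $\Sigma$ computes $\Sigma_u \land \Sigma_w$, so the disjoint-variable case of Proposition~\ref{prop:decomposable_and_IP} yields $\IP(\Sigma) = \{t_u \land t_w \mid t_u \in \IP(\Sigma_u),\, t_w \in \IP(\Sigma_w)\}$. The first thing I would establish is a ``unique decomposition'' fact: every $t \in \IP(\Sigma)$ satisfies $t = t_{\var(\Sigma_u)} \land t_{\var(\Sigma_w)}$ with $t_{\var(\Sigma_u)} \in \IP(\Sigma_u)$ and $t_{\var(\Sigma_w)} \in \IP(\Sigma_w)$. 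Indeed, by the displayed equality $t = t' \land t''$ for some $t' \in \IP(\Sigma_u)$ and $t'' \in \IP(\Sigma_w)$; since a prime implicant of a function mentions only variables of that function, $\var(t') \subseteq \var(\Sigma_u)$ and $\var(t'') \subseteq \var(\Sigma_w)$, so restricting $t$ to $\var(\Sigma_u)$ keeps exactly the literals of $t'$, i.e.\ $t_{\var(\Sigma_u)} = t'$, and likewise $t_{\var(\Sigma_w)} = t''$.

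From this, the unconditional containments $S_u \subseteq \IP(\Sigma_u)$ and $S_w \subseteq \IP(\Sigma_w)$ are immediate, since every $t \in S \subseteq \IP(\Sigma)$ contributes $t_{\var(\Sigma_u)} \in \IP(\Sigma_u)$ and $t_{\var(\Sigma_w)} \in \IP(\Sigma_w)$. The ``if'' direction of the equivalence is then a substitution: assuming $S_u = \IP(\Sigma_u)$, $S_w = \IP(\Sigma_w)$ and $S = \{t_u \land t_w \mid t_u \in S_u,\, t_w \in S_w\}$, the displayed equality gives $S = \IP(\Sigma)$.

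For the ``only if'' direction I would assume $S = \IP(\Sigma)$ and prove the reverse containments $\IP(\Sigma_u) \subseteq S_u$ and $\IP(\Sigma_w) \subseteq S_w$. This is where I need $\IP(\Sigma_u) \neq \emptyset$ and $\IP(\Sigma_w) \neq \emptyset$, and it is supplied by the reducedness convention: the root being an $\land$-node, $\Sigma$ is not the leaf $0$ and hence is satisfiable; moreover a reduced circuit cannot have an $\land$-node with a child computing the constant $0$ (that node would compute $0$ and would be reduced to a leaf), so $\Sigma_u$ and $\Sigma_w$ are satisfiable and therefore have prime implicants. Then for $t_u \in \IP(\Sigma_u)$, choosing any $t_w \in \IP(\Sigma_w)$ gives $t_u \land t_w \in \IP(\Sigma) = S$ whose restriction to $\var(\Sigma_u)$ is $t_u$, hence $t_u \in S_u$; together with the previous containment this gives $S_u = \IP(\Sigma_u)$, and symmetrically $S_w = \IP(\Sigma_w)$. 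Rewriting $S = \IP(\Sigma)$ via the displayed equality and these identities yields the remaining equality $S = \{t_u \land t_w \mid t_u \in S_u,\, t_w \in S_w\}$.

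I do not expect a real obstacle here: the argument is Proposition~\ref{prop:decomposable_and_IP} plus bookkeeping. The two points that deserve care are (i) the justification that $t_{\var(\Sigma_u)}$ and $t_{\var(\Sigma_w)}$ are precisely the two factors of $t$, which relies on prime implicants not mentioning irrelevant variables together with decomposability, and (ii) the nonemptiness of $\IP(\Sigma_u)$ and $\IP(\Sigma_w)$ needed in the ``only if'' direction, for which the ``reduced dec-DNNF'' assumption from the preliminaries is exactly the right tool.
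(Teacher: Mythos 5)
Your proof is correct and follows essentially the same route as the paper's: both reduce the statement to Proposition~\ref{prop:decomposable_and_IP} in its disjoint-variable form and identify the two factors of each prime implicant by restriction to $\var(\Sigma_u)$ and $\var(\Sigma_w)$ (the paper handles the ``only if'' direction contrapositively via a missing $t^*$, you do it directly, which is an immaterial difference). If anything, you are more careful than the paper in spelling out the two implicit ingredients, namely that prime implicants mention no irrelevant variables and that reducedness guarantees $\IP(\Sigma_u)$ and $\IP(\Sigma_w)$ are nonempty, the latter being silently used in the paper's claim $\IP(\Sigma_u) = \{t_{\var(\Sigma_u)} \mid t \in S\}$.
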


\begin{proposition}\label{prop:decision_node_IP_check}
Let $\Sigma$ be a dec-DNNF circuit whose root is a decision node labelled by $x$. Let $u$ be its 0-child and $w$ be its 1-child. Given $S \subseteq \IP(\Sigma)$, let $S_u = \{t \mid t \land \overline{x} \in S\}  \cup (S \cap \IP(\Sigma_{u}))$, $S_w = \{t \mid t \land x \in S\} \cup (S \cap \IP(\Sigma_{w}))$ and $S' = \{t \mid t \in S, x \not\in \var(t)\}$. Then $S_u \subseteq \IP(\Sigma_{u})$ and $S_w \subseteq \IP(\Sigma_{w})$ hold, and 
\[\begin{aligned}
S = \IP(\Sigma) \text{ iff } & S_{u} = \IP(\Sigma_{u})\text{ and } S_w = \IP(\Sigma_{w}) \\ \text{and } & S' = \max(\{t_u \land t_w \mid t_u \in S_u, \, t_w \in S_w \}, \models).
\end{aligned}
\]
\end{proposition}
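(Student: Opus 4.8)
The plan is to follow the same two-part strategy as for Proposition~\ref{prop:decomposable_and_IP_check}, now using the folklore Propositions~\ref{prop:propagate_IP}, \ref{prop:decomposable_and_IP} and~\ref{prop:decision_node_IP} in their decision-node form. Write $f$ for the function computed by $\Sigma$, so $\Sigma_u$ computes $f|\overline{x}$ and $\Sigma_w$ computes $f|x$, both over $\var(f)\setminus\{x\}$; in particular $x\notin\var(\Sigma_u)\cup\var(\Sigma_w)$, and a prime implicant of a function mentions only variables of that function and is satisfiable.

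\emph{Inclusions.} To prove $S_u\subseteq\IP(\Sigma_u)$ (the case of $S_w$ is symmetric), pick $t\in S_u$. If $t\in S\cap\IP(\Sigma_u)$ there is nothing to do. Otherwise $t\wedge\overline{x}\in S\subseteq\IP(\Sigma)=\IP(f)$; conditioning this implicant by $\overline{x}$ shows $t$ is an implicant of $f|\overline{x}=\Sigma_u$, and if some $t-\ell$ still implied $f|\overline{x}$ then $(t\wedge\overline{x})-\ell$ would imply $f$, contradicting primality of $t\wedge\overline{x}$; hence $t\in\IP(\Sigma_u)$.

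\emph{The ``only if'' direction.} Assume $S=\IP(\Sigma)$. Given $t\in\IP(\Sigma_u)=\IP(f|\overline{x})$, Proposition~\ref{prop:propagate_IP} with $\ell=\overline{x}$ gives either $t\models f|x$, whence $t\in\IP(f)=S$ and so $t\in S\cap\IP(\Sigma_u)\subseteq S_u$, or $t\not\models f|x$, whence $t\wedge\overline{x}\in\IP(f)=S$ and so $t\in S_u$. With the previous paragraph this gives $S_u=\IP(\Sigma_u)$, and symmetrically $S_w=\IP(\Sigma_w)$. For the last equality, Proposition~\ref{prop:decision_node_IP} partitions $\IP(f)$ into the terms containing $\overline{x}$, the terms containing $x$, and $\IP(f|\overline{x}\wedge f|x)$, whose members mention no $x$; hence $S'=\{t\in\IP(f):x\notin\var(t)\}=\IP(\Sigma_u\wedge\Sigma_w)$, which by Proposition~\ref{prop:decomposable_and_IP} and $S_u=\IP(\Sigma_u)$, $S_w=\IP(\Sigma_w)$ equals $\max(\{t_u\wedge t_w\mid t_u\in S_u,\,t_w\in S_w\},\models)$.

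\emph{The ``if'' direction and the main obstacle.} Assume the three right-hand conditions; since $S\subseteq\IP(\Sigma)$ is given, it suffices to show $\IP(\Sigma)\subseteq S$. Take $t^{\star}\in\IP(f)$ and split it via Proposition~\ref{prop:decision_node_IP}. If $x\notin\var(t^{\star})$, then $t^{\star}\in\IP(f|\overline{x}\wedge f|x)=S'\subseteq S$ exactly as above. Otherwise $t^{\star}=t\wedge\overline{x}$ with $t\in\IP(f|\overline{x})=S_u$ and $t\not\models f|x$ (the $x$-case being symmetric). Here is the one delicate point: from $t\in S_u$ we get either $t\wedge\overline{x}\in S$, hence $t^{\star}\in S$ directly, or $t\in S\cap\IP(\Sigma_u)$; but then $t\models f$, so $t\wedge\overline{x}$ is not prime, contradicting $t^{\star}\in\IP(f)$. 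So $t^{\star}\in S$ in all cases. The whole difficulty is exactly this bookkeeping around the two-part definitions of $S_u$ and $S_w$: one must keep separate ``$t$ is itself a prime implicant of $f$'' and ``$t\wedge\overline{x}$ is a prime implicant of $f$'', and use primality of the full term to rule out the incompatible branch; everything else follows directly by assembling Propositions~\ref{prop:propagate_IP}, \ref{prop:decomposable_and_IP} and~\ref{prop:decision_node_IP}.
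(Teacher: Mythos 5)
Your proof is correct and takes essentially the same approach as the paper's: the same argument for $S_u \subseteq \IP(\Sigma_u)$ and $S_w \subseteq \IP(\Sigma_w)$, and the equivalence assembled from Propositions~\ref{prop:propagate_IP}, \ref{prop:decomposable_and_IP} and~\ref{prop:decision_node_IP} with the same case analysis on whether a prime implicant contains $\overline{x}$, $x$, or neither, including the key primality observation that rules out the ``wrong'' branch in the two-part definitions of $S_u$ and $S_w$. The only difference is presentational: the paper phrases both directions contrapositively (starting from a missing prime implicant or a failed condition), whereas you argue them directly.
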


Let $v$ be the root of the dec-DNNF circuit $\Sigma$ and let $S \subseteq \IP(\Sigma)$. We say that we have a \emph{contradiction} at node $v$ when 
\begin{itemize}
\item[(c1)] $S = \emptyset$ while $\Sigma$ is satisfiable, or
\item[(c2)] $v$ is a decision node, $S_u = \IP(\Sigma_u)$ and $S_w = \IP(\Sigma_{w})$, but $S' \neq \max(\{t_u \land t_w \mid t_u \in S_u, \, t_w \in S_{w} \}, \models)$, or
\item[(c3)] $v$ is a decomposable $\land$-node and $S_u =\IP(\Sigma_u)$ and $S_w = \IP(\Sigma_w)$ but $S \neq \{t_u \land t_w \mid t_u \in S_u, t_w \in S_w\}$.
\end{itemize}
A contradiction guarantees that $S \neq \IP(\Sigma)$. The contradiction (c1) is easy to check. Contradictions (c2) and (c3) on the other hand require to show that $S_u = \IP(\Sigma_u)$ and $S_w = \IP(\Sigma_w)$. When $v$ is an internal node, with children $u$ and $w$, if there is no contradiction (c1) at $v$, we use Propositions~\ref{prop:decomposable_and_IP_check} and~\ref{prop:decision_node_IP_check} to build from $S$ two sets $S_u$ and $S_w$ that we recursively compare to $\IP(\Sigma_u)$ and $\IP(\Sigma_w)$. Either the recursion ends under $u$ or $w$ on a contradiction, in which case $S \neq \IP(\Sigma)$, or it stops by itself (i.e., when reaching the leaves of the circuit), which shows that $S_u = \IP(\Sigma_u)$ and $S_w = \IP(\Sigma_w)$, and then we can check whether there is contradiction (c2) (resp. (c3)) at node $v$ if it is a decision node (resp. decomposable node). If there is none, then $S = \IP(\Sigma)$. 

The procedure is given by Algorithm \texttt{MissingIP}. The inputs are a dec-DNNF circuit $\Sigma$, a set $S \subseteq \IP(\Sigma)$ and a path $P$ in $\Sigma$ (which will be useful later). A function $\lambda$ mapping the nodes of $\Sigma$ to integers is used for memoization purposes. Initially $\lambda(v) = -1$ for every node $v$, but $\lambda(v)$ may be assigned a non-negative value at some point. More precisely, the first time a call \texttt{MissingIP}$(\Sigma_v, S, P)$ returns \textit{false}, we learn that $S = \IP(\Sigma_v)$ and set $\lambda(v)$ to $|S|$. Then for each later call \texttt{MissingIP}$(\Sigma_v,S',P')$ with $S' \subseteq \IP(\Sigma_v)$, we check whether $S' = \IP(\Sigma_v)$ by verifying that $\lambda(v) = |S'|$.

\begin{proposition}\label{prop:missingIP_is_sound} 
Given a reduced dec-DNNF circuit $\Sigma$ and $S \subseteq \IP(\Sigma)$, 
\texttt{\textup{MissingIP}}$(\Sigma,S,\emptyset)$ runs in time $O(poly(|S|+|\Sigma|))$, and it returns \textit{false} if and only if $S = \IP(\Sigma)$.
\end{proposition}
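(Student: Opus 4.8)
\emph{Proof plan.} The plan is to prove the two assertions — correctness and the running-time bound — together, by a single induction on the structure of the reduced \decDNNF{} circuit, since the soundness of the memoization step relies on the correctness of the procedure itself. Throughout the execution I would maintain two invariants: (i) for every node $w$, the stored value $\lambda(w)$ is either $-1$ or equal to $|\IP(\Sigma_w)|$; and (ii) every recursive call is of the form \texttt{MissingIP}$(\Sigma_w, S', P')$ with $S' \subseteq \IP(\Sigma_w)$. Invariant (ii) holds at the top level by hypothesis and is propagated downward precisely because Propositions~\ref{prop:decomposable_and_IP_check} and~\ref{prop:decision_node_IP_check} guarantee $S_u \subseteq \IP(\Sigma_u)$ and $S_w \subseteq \IP(\Sigma_w)$ for the sets the procedure constructs. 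Under these invariants I would prove by structural induction that \texttt{MissingIP}$(\Sigma_v,S,P)$ returns \emph{false} iff $S = \IP(\Sigma_v)$, and that on a \emph{false}-return it sets $\lambda(v) = |\IP(\Sigma_v)|$, thereby preserving invariant (i).

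For the base cases $v$ is a leaf. Because $\Sigma$ is reduced, a $0$-leaf occurs only when $\Sigma_v$ is the constant $0$, so $\IP(\Sigma_v) = \emptyset$, no contradiction (c1) is raised, and the call returns \emph{false} exactly when $S = \emptyset$; for a $1$-leaf or a literal leaf $\ell$ we have $\IP(\Sigma_v) \in \{\{t_\emptyset\},\{\ell\}\}$, whose only proper subset is $\emptyset$, which triggers (c1), so again the call returns \emph{false} iff $S$ is the full set. For the inductive step $v$ is internal with children $u,w$; a reduced internal node is satisfiable, hence $\IP(\Sigma_v) \neq \emptyset$, so if $S = \emptyset$ the call correctly returns \emph{true} via (c1). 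Otherwise it forms $S_u, S_w$ as in the relevant proposition and decides $S_u = \IP(\Sigma_u)$ and $S_w = \IP(\Sigma_w)$, either by the memoized value when $\lambda$ is set (using $S_u \subseteq \IP(\Sigma_u)$, so equality reduces to $|S_u| = \lambda(u)$) or by the induction hypothesis. If either equality fails, the cited proposition gives $S \neq \IP(\Sigma_v)$ and \emph{true} is returned; if both hold, the remaining conjunct of the proposition is exactly the negation of (c2) or (c3), so testing it decides $S = \IP(\Sigma_v)$, and on a \emph{false}-return $\lambda(v)$ is set to $|S| = |\IP(\Sigma_v)|$.

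For the time bound I would first observe that $S_u$ and $S_w$ have size $O(|S|)$ in both constructions, so every set manipulated during the recursion has size $O(|S|)$ and every term in it has at most $|\var(\Sigma)| \le |\Sigma|$ literals; consequently the local work at a node — building $S_u,S_w$, forming the candidate set $\{t_u \land t_w\}$ of size $O(|S|^2)$ and its $\max$ with respect to $\models$, and comparing — costs $O(\poly(|S|+|\Sigma|))$, while a call that returns immediately through a memoized $\lambda(v)$ costs only $O(|S|)$. It then remains to bound the number of calls. A call that actually recurses can occur at a node $v$ only while $\lambda(v) = -1$, and once such a call returns \emph{false} it sets $\lambda(v)$; hence at most one recursing call per node ever returns \emph{false}. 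A recursing call returns \emph{true} only because one of its subcalls did, and this \emph{true} value is passed up to the root without spawning any further subcalls, so the execution terminates right afterwards; thus the recursing calls that return \emph{true} form a single root-to-node path in the DAG, of length at most $|\Sigma|$. Altogether there are $O(|\Sigma|)$ recursing calls and, since each spawns at most two subcalls, $O(|\Sigma|)$ memoized (constant-recursion) calls, giving total time $O(\poly(|S|+|\Sigma|))$.

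The main obstacle I anticipate is the entanglement of correctness with the memoization: one cannot simply ``assume the algorithm correct'' to justify that $\lambda(v)$ equals $|\IP(\Sigma_v)|$, so invariant (i) has to be threaded through the very induction that establishes the \emph{iff}, and one must verify invariant (ii) carefully so that both the induction hypothesis and the size-based test $|S'| = \lambda(v)$ are legitimate at every call. The secondary delicate point is the counting argument: the DAG sharing of $\Sigma$ could a priori blow up the number of calls exponentially, and this is avoided only by the observations that non-recursing calls are bounded per node and that recursing calls are at most one per node apart from the terminal \emph{true}-chain.
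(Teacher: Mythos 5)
Your proposal is correct and follows essentially the same route as the paper: structural induction on the circuit using Propositions~\ref{prop:decomposable_and_IP_check} and~\ref{prop:decision_node_IP_check} for the ``returns \textit{false} iff $S=\IP(\Sigma)$'' part, and a memoization-based accounting of the polynomial per-node work for the time bound. Your call-counting is even a bit more explicit than the paper's (at most one \textit{false}-returning recursing call per node plus a single terminal non-\textit{false} chain), and the only slips are cosmetic: the algorithm returns a pair $(t,P')$ rather than \emph{true}, and a recursing call can in fact occur at a node with $\lambda(v)\neq -1$ when $|S|\neq\lambda(v)$ --- but such calls are exactly the ones on your terminal chain, so the bound is unaffected.
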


\let\oldnl\nl% Store \nl in \oldnl
\newcommand{\nonl}{\renewcommand{\nl}{\let\nl\oldnl}}
\begin{algorithm}[t]
\SetAlgoLined
\nonl\textbf{Promises}: $\Sigma$ is reduced, $S \subseteq \IP(\Sigma)$\\
Let $v$ be the root of $\Sigma$ and let $P' \leftarrow P \cup (v)$\\
\textbf{if} $\lambda(v) = |S|$ \textbf{then} return \textit{false}\\
\If{$S = \emptyset$}{
\textbf{if} $v$ is labelled by $0$ \textbf{then} set $\lambda(v)$ to 0, return \textit{false}\\
\textbf{else} return (\texttt{GenerateIP}($\Sigma$), $P'$)
}
\uIf{$v$ \text{ is a }$\land$\textup{-node with children} $u$\textup{ and }$w$}{
Build $S_u$ and $S_w$ as in Proposition~\ref{prop:decomposable_and_IP_check}\\
$r \leftarrow \texttt{MissingIP}(\Sigma_u,S_u,P')$\\ 
\textbf{if} $r \neq \textit{false}$ \textbf{then} return $r$\\
$r \leftarrow \texttt{MissingIP}(\Sigma_w,S_w,P')$\\ 
\textbf{if} $r \neq \textit{false}$ \textbf{then} return $r$\\
$S^* \leftarrow \{t_u \land t_v \mid t_u \in S_u, t_w \in S_w\}$
\\
\textbf{if} $S \neq S^*$ \textbf{then} for any $t \in S^* \setminus S$ return $(t,P')$\\
}
\ElseIf{$v$ \textup{ is a decision node with children $u$ and $w$}}{
Build $S_u$, $S_w$, $S'$ as in Proposition~\ref{prop:decision_node_IP_check}\\
$r \leftarrow \texttt{MissingIP}(\Sigma_u,S_u,P')$\\ 
\textbf{if} $r \neq \textit{false}$ \textbf{then} return $r$\\
$r \leftarrow \texttt{MissingIP}(\Sigma_w,S_w,P')$\\
\textbf{if} $r \neq \textit{false}$ \textbf{then} return $r$\\
$S^* \leftarrow \max(\{t_u \land t_w \mid t_u \in S_u, t_w \in S_w\}, \models)$\\
\textbf{if} $S^* \neq S'$ \textbf{then} for any $t \in S^* \setminus S'$ return $(t,P')$ 
}
Set $\lambda(v)$ to $|S|$ and return $\textit{false}$
\caption{\texttt{MissingIP}$(\Sigma,S,P)$}
\end{algorithm}

\subsection{Augmenting an incomplete subset of IP$(\Sigma)$}

We build upon \texttt{MissingIP} so that, when $S \neq \IP(\Sigma)$, we also return a prime implicant in $\IP(\Sigma)\setminus S$. The idea is to use the path $P$ to keep track of the ancestor nodes that were visited before reaching a contradiction and to use $P$ to construct a prime implicant in $\IP(\Sigma)\setminus S$.  As an example, consider calling \texttt{MissingIP}$(\Sigma,S_0,\emptyset)$ with $\Sigma$ the dec-DNNF circuit of Figure~\ref{fig:dec-DNNF} and $S_0 = \{h\,\overline{e}\,b\,\overline{p} ,\, h\,\overline{p}\,\overline{s},\,\overline{e}\,\overline{p}\,\overline{s}\}$ a set of prime implicants of $\Sigma$. Figure~\ref{fig:path} shows a scenario when  \texttt{MissingIP}$(\Sigma,S_0,\emptyset)$ calls \texttt{MissingIP}$(\Sigma_{v_1},S_1,(v_0))$, which calls in turn\texttt{MissingIP}$(\Sigma_{v_2},S_2,(v_0,v_1))$, which finally calls \texttt{MissingIP}$(\Sigma_{v_3},S_3,(v_0,v_1,v_2))$. Since $S_3 = \emptyset$ and $\Sigma_{v_3}$ is reduced and different from $0$, the algorithm has reached a contradiction  (c1) at node $v_3$ and has not returned \emph{false}, thus indicating that $S_0 \neq \IP(\Sigma)$. \texttt{MissingIP} has followed the path $P=(v_0,v_1,v_2,v_3)$ to reach that contradiction and has kept it in memory. This path $P$ can then be used to generate a prime implicant in $\IP(\Sigma) \setminus S_0$. First \texttt{MissingIP} returns the path $P$ to $v_3$ as well as a prime implicant of $\Sigma_{v_3}$, say it is $\overline{s}$. Then we construct a prime implicant of $\Sigma_{v_2}$ upon $\overline{s}$, here since $v_3$ is the 0-child of $v_2$ and since $\overline{s}$ does not entail the 1-child of $v_2$ we obtain $\overline{b}\,\overline{s} \in \IP(\Sigma_{v_2})$. Then we construct a prime implicant of $\Sigma_{v_1}$ upon $\overline{b}\,\overline{s}$, here since since $v_2$ is the 0-child of $v_1$ and since $\overline{b}\,\overline{s}$ does not entail the 1-child of $v_1$ we obtain $\overline{e}\,\overline{b}\,\overline{s} \in \IP(\Sigma_{v_1})$. Repeating the step one more time leads to $h\,\overline{e}\,\overline{b}\,\overline{s} \in \IP(\Sigma_{v_0}) = \IP(\Sigma)$. The procedure is illustrated in Figure~\ref{fig:propagation}. In this example, for generating a new prime implicant of $\Sigma$, we have created $t \in \Sigma_{v_3}\setminus S_3$ and augmented it using  Proposition~\ref{prop:propagate_IP} as we travelled backwards along $P$. We say that we have \emph{propagated} $t$ along the path $P$.

Accordingly, the algorithm \texttt{AnotherIP} to generate a new prime implicant breaks into two steps. First \texttt{MissingIP}$(\Sigma,S,P)$ searches for a contradiction. It returns $false$ if $S = \IP(\Sigma)$ or a pair $(t,P)$ with $P$ the path followed to reach a node $v$ where a contradiction has been found (like $v_3$ in the example), and $t$ a prime of $\Sigma_v$ that could not be derived from $S$. The procedure \texttt{GenerateIP} is used to generate $t$. \texttt{GenerateIP} runs in polynomial time thanks to linear-time implicant check on dec-DNNF circuits. Finally \texttt{PropagateIP} is called to propagate $t$ along the path $P$. 
%A function \texttt{GenerateIP} is used to generate $t$. For space reasons, the pseudo-code for \texttt{GenerateIP} is provided as a supplementary material (we just make precise here that \texttt{GenerateIP} runs in polynomial time thanks to linear-time implicant check on dec-DNNF circuits). Finally \texttt{PropagateIP} is called to propagate $t$ along the path $P$. 

\begin{algorithm}[t]
\SetAlgoLined
\nonl\textbf{Promise}: $\Sigma$ is satisfiable\\
Find a satisfying assignment $a$ of $\Sigma$
\\
Let $t = \bigwedge_{a(x) = 1} x \land \bigwedge_{a(x) = 0} \overline{x} $
\\
\While{\textup{there is} $\ell \in t$ \textup{such that} $t - \ell \models \Sigma$}{
Remove $\ell$ from $t$
}
Return $t$
\caption{\texttt{GenerateIP}$(\Sigma)$}
\end{algorithm} 

The next proposition shows the correctness of \texttt{\textup{AnotherIP}}:

\begin{algorithm}[t]
\SetAlgoLined
\nonl\textbf{Promise:} $\Sigma$ is reduced, its root is $v_0$, $P$ is a path in $\Sigma$\\
\textbf{if} $|P|=1$ \textbf{then} return $t$ \\
\uIf{$v_{i-1}$\textup{ is a} $\land$\textup{-node with children } $u$ \textup{ and } $w$}{
	\textbf{if} $v_i = u$ \textbf{then} $t' \leftarrow \texttt{GenerateIP}(\Sigma_w)$ \\
	\textbf{if} $v_i = w$ \textbf{then} $t' \leftarrow \texttt{GenerateIP}(\Sigma_u)$ \\
}\ElseIf{\textup{$v_{i-1}$ is a decision node for variable $x$ with 0-child $u$ and 1-child $w$}}{
	\uIf{$v_i = u$}{
		\textbf{if} $t \models \Sigma_{w}$ \textbf{then} $t' \leftarrow  t_{\emptyset}$ \textbf{else} $t' \leftarrow  \overline{x}$  
	}
	\uElse{
		\textbf{if} $t \models \Sigma_{u}$ \textbf{then} $t' \leftarrow  t_{\emptyset}$ \textbf{else} $t' \leftarrow  x$  
	}
}
$\texttt{Propagate}(\Sigma,t\land t',(v_0,\dots,v_{i-1}))$
\caption{\mbox{\texttt{Propagate}$(\Sigma,t,P = (v_0,\dots,v_i))$}}
\end{algorithm}
\begin{algorithm}[t]
\SetAlgoLined
\nonl\textbf{Promise}: $\Sigma$ is reduced, $S \subseteq \IP(\Sigma)$\\
$r \leftarrow \texttt{MissingIP}^*(\Sigma,S,\emptyset)$\\
\textbf{if} $r = false$ \textbf{then} return $false$ \\
\textbf{else if} $r = (t,P)$ \textbf{then} return $\texttt{Propagate}(\Sigma,t,P)$
\caption{\texttt{AnotherIP}$(\Sigma,S)$}
\end{algorithm}

\begin{proposition}\label{prop:AnotherIP}
Let $\Sigma$ be a reduced dec-DNNF circuit and let $S \subseteq \IP(\Sigma)$. \texttt{\textup{AnotherIP}}$(\Sigma, S)$ runs in time $O(poly(|S|+|\Sigma|))$. It returns $\textit{false}$ if $S = \IP(\Sigma)$, otherwise it returns a prime implicant of $\Sigma$ that does not belong to in $S$.
\end{proposition}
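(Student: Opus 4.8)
The plan is to establish the two halves of the statement separately: correctness (the returned object is indeed a new prime implicant, or \textit{false} exactly when $S = \IP(\Sigma)$) and the polynomial time bound. For correctness, I would first invoke Proposition~\ref{prop:missingIP_is_sound} in the form proved for the starred variant \texttt{MissingIP}$^*$: it returns \textit{false} iff $S = \IP(\Sigma)$, and when $S \neq \IP(\Sigma)$ it returns a pair $(t, P)$ where $P = (v_0, \dots, v_i)$ is the path followed from the root to a node $v_i$ where a contradiction was detected, and $t$ is a term with $t \in \IP(\Sigma_{v_i}) \setminus S_{v_i}$ (where $S_{v_i}$ is the ``projected'' subset that was passed down along $P$, built via Propositions~\ref{prop:decomposable_and_IP_check} and~\ref{prop:decision_node_IP_check}). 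So the first reduction is: it suffices to show that \texttt{Propagate}$(\Sigma, t, P)$ returns a prime implicant of $\Sigma$ that is not in $S$.

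The heart of the argument is an induction on the length of $P$, showing the invariant that \texttt{Propagate}$(\Sigma, t, (v_0,\dots,v_i))$ returns a prime implicant of $\Sigma_{v_0} = \Sigma$ that, when restricted to $\var(\Sigma_{v_i})$, equals the term $t$ we started from at $v_i$. The base case $|P| = 1$ is immediate. For the inductive step, at $v_{i-1}$ we have $t \in \IP(\Sigma_{v_i})$ and we form $t \land t'$: if $v_{i-1}$ is a decomposable $\land$-node, then by the disjoint-variables part of Proposition~\ref{prop:decomposable_and_IP} the product of a prime implicant of one child with a prime implicant of the other ($t'$ generated by \texttt{GenerateIP}) is a prime implicant of $\Sigma_{v_{i-1}}$; if $v_{i-1}$ is a decision node on $x$ with child $v_i$, then $\Sigma_{v_{i-1}}|\ell = \Sigma_{v_i}$ for the appropriate literal $\ell \in \{x,\overline x\}$, and Proposition~\ref{prop:propagate_IP} tells us that either $t$ itself is already in $\IP(\Sigma_{v_{i-1}})$ (when $t \models \Sigma_{v_{i-1}}|\overline\ell$, i.e. $t$ entails the other child) or $t \land \ell$ is — which is exactly the case split coded in \texttt{Propagate}. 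Either way $t \land t' \in \IP(\Sigma_{v_{i-1}})$ and its restriction to $\var(\Sigma_{v_i})$ is $t$; the induction hypothesis applied to $(v_0,\dots,v_{i-1})$ then finishes the step. Applying the invariant at the end, the returned term $\tau$ is a prime implicant of $\Sigma$ whose restriction to $\var(\Sigma_{v_i})$ is $t$; since $t \notin S_{v_i}$ and $S_{v_i}$ was precisely the restriction of $S$ to $\var(\Sigma_{v_i})$ (by construction in Propositions~\ref{prop:decomposable_and_IP_check}/\ref{prop:decision_node_IP_check}, which also guarantee $S_{v_i} \subseteq \IP(\Sigma_{v_i})$), no element of $S$ can restrict to $t$, hence $\tau \notin S$.

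For the running time: \texttt{MissingIP}$^*$ runs in $O(\poly(|S| + |\Sigma|))$ by Proposition~\ref{prop:missingIP_is_sound}, and it also bounds $|P| \leq |\Sigma|$ and $|t| \leq |\var(\Sigma)|$. Then \texttt{Propagate} makes at most $|P|$ recursive calls, each doing a constant number of operations that are either a linear-time implicant check $t \models \Sigma_{w}$ on a dec-DNNF subcircuit (recall implicant checking on dec-DNNF is linear-time, as noted in the preliminaries), or a call to \texttt{GenerateIP}, which by its description runs in polynomial time (find one satisfying assignment in linear time, then greedily drop at most $|\var(\Sigma)|$ literals, each drop verified by a linear-time implicant check). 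Each \texttt{Propagate} step also only ever grows $t$ by at most one literal or one \texttt{GenerateIP} output, so the term stays of polynomial size throughout. Summing over the path gives $O(\poly(|S| + |\Sigma|))$ overall.

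I expect the main obstacle to be pinning down precisely the postcondition of \texttt{MissingIP}$^*$ that feeds \texttt{Propagate} — namely that the node $v_i$ reached is genuinely the node where the contradiction lies, that $t$ really is a prime implicant of $\Sigma_{v_i}$ rather than merely an implicant, and above all that the returned term $t$ does not collapse onto an element of $S$ after propagation. This last point hinges on the fact that the sets $S_{v_i}$ passed down the recursion are exactly the variable-restrictions of $S$, so that a term escaping $S_{v_i}$ at $v_i$ provably cannot be the restriction of any member of $S$; making that bookkeeping airtight (especially at decision nodes, where Proposition~\ref{prop:decision_node_IP_check} splits $S$ into three pieces $S_u$, $S_w$, $S'$) is the delicate part. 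Everything downstream of that — the induction in \texttt{Propagate} and the time accounting — is routine given Propositions~\ref{prop:decomposable_and_IP}, \ref{prop:propagate_IP}, and~\ref{prop:missingIP_is_sound}.
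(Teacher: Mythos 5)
Your overall architecture is the same as the paper's (run \texttt{MissingIP}, then propagate the term $t$ back along $P$, with a case analysis at $\land$-nodes and decision nodes using Propositions~\ref{prop:decomposable_and_IP} and~\ref{prop:propagate_IP}, and the same running-time accounting), but the step where you conclude that the returned term avoids $S$ has a genuine gap. You argue: the output $\tau$ restricted to $\var(\Sigma_{v_i})$ equals $t$, $t \notin S_{v_i}$, and ``$S_{v_i}$ was precisely the restriction of $S$ to $\var(\Sigma_{v_i})$'', hence $\tau \notin S$. The quoted premise is only true when every node along $P$ is an $\land$-node. At a decision node on $x$ (Proposition~\ref{prop:decision_node_IP_check}) the set passed down to the 0-child is $S_u = \{t \mid t \land \overline{x} \in S\} \cup (S \cap \IP(\Sigma_u))$, which is not the set of restrictions of $S$ to $\var(\Sigma_u)$: a term $s' \land x \in S$ contributes nothing to $S_u$ although its restriction to $\var(\Sigma_u)$ may be a nonempty term, and an $x$-free term of $S$ enters $S_u$ only if it happens to be a prime implicant of $\Sigma_u$ alone. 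So from $t \notin S_{v_i}$ you cannot infer that no member of $S$ restricts to $t$ on $\var(\Sigma_{v_i})$, and the inference $\tau \notin S$ does not follow. The paper closes exactly this hole by strengthening the induction along $P$: it shows level by level that the propagated term $t_{j}$ lies in $\IP(\Sigma_{v_j}) \setminus S_j$, and at a decision node the argument splits according to the literal actually added by \texttt{Propagate} — if $t' = t_\emptyset$ (i.e., $t_j \models \Sigma_w$) then $t_j \in \IP(\Sigma_u)$, so $t_j \in S_{j-1}$ would force $t_j \in S_{j-1} \cap \IP(\Sigma_u) \subseteq S_j$; if $t' = \overline{x}$ then $t_j \land \overline{x} \in S_{j-1}$ would force $t_j \in \{\tau \mid \tau \land \overline{x} \in S_{j-1}\} \subseteq S_j$. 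This uses precisely the two pieces of the union defining $S_u$, i.e., the bookkeeping you flagged as delicate but did not actually carry out.

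A second, smaller gap: the postcondition you attribute to \texttt{MissingIP} — that when it does not return \textit{false} it hands \texttt{Propagate} a pair $(t,P)$ with $t \in \IP(\Sigma_{v_i}) \setminus S_{v_i}$ — is not what Proposition~\ref{prop:missingIP_is_sound} states and needs its own proof. For a contradiction of type (c2) the returned $t$ lies in $S^* \setminus S'$, and one must argue both that $S^* = \IP(\Sigma_u \land \Sigma_w) \subseteq \IP(\Sigma_{v_i})$ (using that $S_u = \IP(\Sigma_u)$ and $S_w = \IP(\Sigma_w)$ at that point, via Proposition~\ref{prop:decomposable_and_IP}) and that $t \notin S_{v_i}$ (because $x \notin \var(t)$ while $S'$ contains all $x$-free terms of $S_{v_i}$); the paper does this in a dedicated claim. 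Your restriction invariant for \texttt{Propagate} and the running-time analysis are fine, but without these two pieces the ``returns a prime implicant of $\Sigma$ not in $S$'' half of the proposition is not established.
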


On this basis, the existence of a polynomial incremental time enumeration of prime implicants for dec-DNNF circuits
can be easily established:

\begin{proposition}\label{lemma:IP_enum_for_dec_DNNF_is_in_IncP}
Enum$\cdot\IP$ from dec-DNNF is in {\sf IncP}.
\end{proposition}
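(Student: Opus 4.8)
The plan is to combine Proposition~\ref{prop:AnotherIP} with the characterization of {\sf IncP} given by Proposition~\ref{prop:IncP_AnotherSol}. Recall that the latter states that an enumeration problem in {\sf EnumP} is in {\sf IncP} if and only if the associated function problem AnotherSol is in {\sf FP}. We already know from Proposition~2 (Enum$\cdot\IP$ from dec-DNNF is in {\sf EnumP}) that the membership in {\sf EnumP} holds, so it remains to argue that AnotherSol$\cdot\IP$ from dec-DNNF is in {\sf FP}.

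\medskip

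First I would recall the definition of AnotherSol$\cdot\IP$: given a dec-DNNF circuit $\Sigma$ and a set $S \subseteq \IP(\Sigma)$, it must return some $t \in \IP(\Sigma) \setminus S$ if $S \neq \IP(\Sigma)$, and \textit{false} otherwise. This is almost verbatim what \texttt{AnotherIP}$(\Sigma,S)$ does according to Proposition~\ref{prop:AnotherIP}: it runs in time $O(\poly(|S|+|\Sigma|))$, returns \textit{false} when $S = \IP(\Sigma)$, and returns a prime implicant of $\Sigma$ not in $S$ otherwise. One small technical point to address is the promise in Proposition~\ref{prop:AnotherIP} that $\Sigma$ is \emph{reduced}: since reducing a dec-DNNF circuit is a polynomial-time preprocessing step (as noted in the Preliminaries, using the linear-time satisfiability test) and preserves the represented function, hence $\IP(\Sigma)$, we may assume without loss of generality that the input is reduced, or equivalently prepend a reduction step. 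Likewise, the input $S$ may be given as an arbitrary set of terms; checking $S \subseteq \IP(\Sigma)$ is polynomial (linear-time implicant check plus primality check, as in Proposition~2), so we can verify the promise cheaply and behave arbitrarily if it fails. With these observations, \texttt{AnotherIP} is a polynomial-time algorithm solving AnotherSol$\cdot\IP$, so AnotherSol$\cdot\IP$ from dec-DNNF is in {\sf FP}.

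\medskip

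Finally, I would invoke Proposition~\ref{prop:IncP_AnotherSol}: since Enum$\cdot\IP$ from dec-DNNF is in {\sf EnumP} and AnotherSol$\cdot\IP$ from dec-DNNF is in {\sf FP}, we conclude that Enum$\cdot\IP$ from dec-DNNF is in {\sf IncP}. (If one prefers an explicit enumeration algorithm rather than appealing to the equivalence, one iterates: start with $S = \emptyset$, repeatedly call \texttt{AnotherIP}$(\Sigma,S)$, output and add to $S$ the returned prime implicant, and halt when \textit{false} is returned; after $t$ iterations one has produced $t$ distinct prime implicants in time $\sum_{i=1}^{t} O(\poly(i + |\Sigma|)) = O(\poly(t) \cdot \poly(|\Sigma|))$, which is polynomial incremental time.)

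\medskip

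There is essentially no obstacle here: all the real work has been done in Proposition~\ref{prop:AnotherIP} and in establishing the {\sf IncP}/AnotherSol equivalence (Proposition~\ref{prop:IncP_AnotherSol}, cited from~\cite{Strozecki19}). The only things to be careful about are the bookkeeping items mentioned above — handling the \emph{reduced} promise and the $S \subseteq \IP(\Sigma)$ promise by cheap preprocessing/verification — and noting that the per-call cost $O(\poly(|S|+|\Sigma|))$ indeed aggregates, over the first $t$ calls, to a bound of the form $O(t^a |\Sigma|^b)$ as required by the definition of {\sf IncP}, since at the $i$-th call $|S| = i-1 \le t$.
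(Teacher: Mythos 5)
Your proof is correct and follows essentially the same route as the paper: the paper's appendix proof is exactly your parenthetical iteration argument (call \texttt{AnotherIP} $k$ times, adding each new prime implicant to $S$, for total time $O(\poly(k+|\Sigma|))$), and the paper's main text likewise motivates \texttt{AnotherIP} via the AnotherSol characterization of {\sf IncP} from Proposition~\ref{prop:IncP_AnotherSol}. Your extra bookkeeping about the reduced-circuit and $S \subseteq \IP(\Sigma)$ promises is a harmless refinement of the same argument.
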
  

%\begin{proof}
%Using Proposition \ref{prop:AnotherIP}, $k$ prime implicants of $\Sigma$ can be generated in time $O(poly(k+|\Sigma|))$ by simply calling \texttt{AnotherIP}$(\Sigma,S)$ $k$ times, each time adding to $S$ the new prime implicant that has been computed. This shows that Enum$\cdot\IP$ from dec-DNNF is in {\sf IncP}.
%\end{proof}

\section{Enumerating Specific Prime Implicants }\label{sec:specific}

For some applications, enumerating all prime implicants of $f$ makes sense, even though there can be exponentially many. We have already mentioned the dualization of monotone CNF formulae as an example. In this section, we describe two problems that ask for generating only specific prime implicants, representing respectively subset-minimal abductive explanations and sufficient reasons. 

To illustrate the two notions we use the function $f$ computed by the dec-DNNF circuit of Figure~\ref{fig:dec-DNNF} as a toy example. $f$ encodes a very incomplete characterization of human-like creatures in Tolkien's Middle Earth based on four physical attributes: presence of beard and facial hair ($b$), small size ($s$), human-like skin ($h$), pointy ears ($p$), plus the indication of whether the creature is enrolled in the armies of evil ($e$). We imagine that there are only seven possible creatures: hobbits ($h\,\overline{b}\,p\,s\,\overline{e}$), elves ($h\,\overline{b}\,p\,\overline{s}\,\overline{e}$), dwarfs ($h\,b\,\overline{p}\,s\,\overline{e}$), men and women ($h\ast\overline{p}\,\overline{s}\,\ast$),\footnote{$\ast$ denotes that both choices are possible for the variable, typically here humans may fight for evil, humans and ents may or may not have beards, and orcs have a wide range of size.} ents ($\overline{h}\ast\overline{p}\,\overline{s}\,\overline{e}$), orcs ($\overline{h}\,\overline{b}\,p\ast e$) and trolls ($\overline{h}\,\overline{b}\,\overline{p}\,\overline{s}\,e$). The satisfying assignments of $f$ describe these creatures. Its prime implicants are the smallest combinations of attributes which guarantee the existence of a creature in our Middle Earth.

\subsection{Abductive Explanations}

Abductive explanations  (see e.g., \cite{SelmanLevesque90,EiterGottlob95}) can be defined as follows:

\begin{definition}[\textbf{Abductive explanation}] Given a Boolean function $f$ over variables $X$, a subset $H \subseteq X$, and a term $m$ on $X \setminus H$, an \emph{abductive explanation} is a term $t$ on $H$ such that $f \land t$ is satisfiable and $f \land t \models m$.
\end{definition}

\noindent The abduction problem asks whether an abductive explanation $t$ exists for the input $(f,H,m)$. 

\begin{example}
Consider our toy example. We look for combinations of physical attributes that guarantee that the creature is evil. This is an abduction problem with $H = \{h,b,p,s\}$ and $m = e$. For instance the term $\overline{h} \land p$ is an abductive explanation because there exist creatures with pointy ears and a skin that is not human-like, and all of them are evil (in this case only the orcs fit this description).
\end{example}

It is easy to see that an abductive explanation $t$ is in fact an implicant of $\neg f \lor m$ with the conditions that $f \land t$ is satisfiable and that $t$ is restricted to variables in $H$ (the abducibles). Furthermore, since abduction is not a truth-preserving form of inference, one is often interested in generating subset-minimal abductive explanations only (i.e., the logically weakest abductive explanations); they correspond to the \emph{prime} implicants of $\neg f \lor m$ such that $f \land t$ is satisfiable and $t$ is restricted to variables in $H$. 

Obviously enough, the abduction problem we focus on (the existence of an abductive explanation) is the same, would we consider subset-minimal abductive explanations or not. Indeed, deciding whether an abductive explanation exists is equivalent to deciding whether a subset-minimal abductive explanation exists.
Unfortunately, the condition that only variables in $H$ are allowed in abductive explanations is already too demanding from an enumeration perspective. 
%Indeed, even in the case when an OBDD circuit or a decision tree \footnote{Let us recall that OBDD and DT are subsets of dec-DNNF.} computing $\neg f \lor m$ is given, there is no much hope for a polynomial-time procedure to extract from it a single implicant of this circuit, that is built over $H$.

\begin{proposition}\label{prop:abductive}
Unless ${\sf P} = {\sf NP}$, there is no polynomial-time algorithm which, given an OBDD circuit or a decision tree computing a function $f$ over $X$ and a set $Y \subseteq X$, decides whether $f$ has an implicant $t$ with $\var(t) \subseteq Y$. 
\end{proposition}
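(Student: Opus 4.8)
The plan is to reduce from a known {\sf NP}-complete problem so that deciding the existence of an implicant $t$ with $\var(t)\subseteq Y$ encodes a nontrivial satisfiability question, while keeping the representation of $f$ within OBDD (hence also within the more general dec-DNNF) and, by a separate construction, within DT. A natural source is the validity/entailment problem phrased through a restricted alphabet: note that $f$ has an implicant over $Y$ if and only if the existential projection $\exists (X\setminus Y).\, f$ is valid over $Y$ in the relevant sense, i.e. if and only if there is an assignment to $Y$ whose every completion to $X$ satisfies $f$. Equivalently, there is \emph{no} implicant over $Y$ precisely when for every partial assignment $a_Y$ on $Y$ there is a completion falsifying $f$; so the question is really ``is the function $g(y) := \forall (X\setminus Y) f$ satisfiable?''. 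The hard part is that for an OBDD, $\forall$-projection over a block of variables is not a polynomial-time operation in general, so I cannot just compute $g$; instead I must choose the source problem and the variable order so that the OBDD for $f$ is small while the projected question is {\sf NP}-hard.

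Concretely, I would reduce from a problem where the ``inner'' structure is trivially an OBDD but the ``outer'' quantifier makes it hard — for instance a suitably encoded instance where $X\setminus Y$ plays the role of clause-selection or certificate variables and $Y$ the role of the truth assignment. One clean route: take a CNF $\phi$ over variables $z_1,\dots,z_n$ and build $f$ over $Y=\{z_1,\dots,z_n\}$ together with auxiliary variables $X\setminus Y$ that, in a fixed order compatible with reading the clauses one at a time, let $f$ be computed by a small (even linear-size) OBDD — the standard trick being that $f$ expresses ``the auxiliary variables correctly witness that the $z$-assignment satisfies $\phi$'' in a way that only ever needs to remember a bounded amount of information. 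Then an implicant of $f$ over $Y$ alone is exactly a partial $z$-assignment all of whose completions (in particular, all choices of the witnessing auxiliary variables) satisfy $f$, and by design this happens iff that $z$-assignment satisfies $\phi$; extending it to a full assignment over $Y$ if necessary, $f$ has an implicant over $Y$ iff $\phi$ is satisfiable. The main obstacle — and the step that needs the most care — is engineering $f$ so that (i) it genuinely has a polynomial-size OBDD under some efficiently computable order, and (ii) the logical equivalence ``implicant over $Y$ $\Longleftrightarrow$ $\phi$ satisfiable'' holds with \emph{no} spurious implicants coming from the auxiliary variables; getting (ii) typically forces the auxiliary part of $f$ to be ``tight'' (every $z$-assignment admits exactly the intended auxiliary completions), which constrains the gadget.

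For the decision-tree case, since DT $\subseteq$ OBDD but a decision tree can be exponentially larger than an OBDD, I cannot reuse the OBDD instance verbatim; instead I would either (a) pick the reduction source so that the OBDD produced above is in fact already a tree (e.g. if the OBDD is a path-like structure with bounded width it unfolds to a polynomial-size tree), or (b) give a direct reduction to DT from a problem whose instances are naturally decision trees — for example observing that branching programs of bounded width that are read-once unfold to polynomial trees, or reducing from a tautology-style problem on formulas that are read-once. The cleanest is likely to design the original gadget so the OBDD has width $O(1)$ (width bounded by a constant) along the chosen order; then unfolding it yields a polynomial-size decision tree, and the same equivalence argument applies verbatim, giving both claims simultaneously. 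I would present the OBDD reduction in full, then remark that the constructed OBDD has constant width and therefore also yields a polynomial-size decision tree, discharging the DT case; the correctness proof (the iff with $\phi$'s satisfiability) is the common core and is where I would spend the detailed argument.
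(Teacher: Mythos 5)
Your high-level plan -- reduce from CNF satisfiability by adding auxiliary clause-selection variables so that an implicant over $Y=\var(\phi)$ must entail every clause, while the OBDD stays small because each path only ever checks one clause -- is exactly the idea behind the paper's proof. But as written there is a genuine gap: the gadget is never constructed, and you explicitly defer it as ``the main obstacle''; that construction and its correctness check are essentially the whole proof. The paper does it with fresh selector variables $z_1,\dots,z_m$, one per clause (your $z$'s are instead the CNF variables), setting $B^{(m+1)}=1$ and $B^{(i)}=(\overline{z_i}\land B_i)\lor(z_i\land B^{(i+1)})$, where $B_i$ is an OBDD (or decision tree) for clause $c_i$; this has size $O(|\phi|)$, and a term $t$ with $\var(t)\subseteq\var(\phi)$ is an implicant of $B^{(1)}$ iff $t\models c_i$ for every $i$, i.e.\ iff $t\models\phi$, so such an implicant exists iff $\phi$ is satisfiable. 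Note also that your phrasing that $f$ should say ``the auxiliary variables correctly witness that the assignment satisfies $\phi$'' is the wrong semantics: since an implicant over $Y$ corresponds to universal quantification over the auxiliaries, a witness/certificate-style encoding would require \emph{every} auxiliary assignment to be a correct witness and the equivalence would break; only the selector semantics (``the clause selected by the auxiliaries is satisfied'') makes the universal projection equal to the conjunction of all clauses.

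Your route to the DT case also fails as stated: it is not true that a bounded-width OBDD unfolds into a polynomial-size decision tree -- parity has an OBDD of width $2$, yet every decision tree computing it has $2^n$ leaves, because unfolding creates one copy per root-to-node path. (Likewise, DT is not a sublanguage of OBDD: a tree may query variables in different orders on different branches.) The reason the DT case goes through in the paper is structural and much simpler: in the construction above nothing is shared -- each $B_i$ occurs once, and $B^{(i+1)}$ occurs once as the $1$-child of the node labelled $z_i$ -- so if each clause gadget $B_i$ is taken as a (linear-size, path-shaped) decision tree, then $B^{(1)}$ is itself a decision tree of size $O(|\phi|)$, and the same equivalence argument applies verbatim.
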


\subsection{Sufficient Reasons}

%To introduce sufficient reasons, imagine again that $f$ models an $n$ components' machine failure and that each variable $x_i$ encodes whether the $i$th components is dead. If, after the machine fails, one tests the $i$th component and concludes that it works properly, then any prime implicant of $f$ containing $\overbar{x_i}$ (the $i$th component is dead) is irrelevant to explain the failure. In a sense, when we know the context surrounding the machine failure, that is, we have an assignment $a$ to $x_1,\dots,x_n$, then we only want explanations that do not contradict that knowledge: we want only prime implicants $t$ that \emph{cover} $a$. The notion of \textit{sufficient reasons} captures these knowledge-aware explanations.
The notion of sufficient reason\footnote{This concept is also referred to as ``abductive explanations'' \cite{IgnatievNM19,DBLP:journals/corr/abs-2012-11067}; in the following, we stick to ``sufficient reason'' to avoid any confusion with the (distinct) concept of abductive explanations as discussed in the previous section.} \cite{DarwicheH20} (aka prime implicant explanation \cite{ShihCD18}) is defined as follows:

\begin{definition}[\textbf{Sufficient reason}]%\pierre{Je fais des modifs ici.}
Given a Boolean function $f$, let $a$ be any assignment to a superset of $\var(f)$. A \emph{sufficient reason} for $a$ is a prime implicant $t$ of $f$ (resp. $\neg f$) such that $a$ satisfies $t$, provided that $a$ satisfies $f$ (resp. $\neg f$). The set of all sufficient reasons for $a$ given $f$ is denoted by $\SR(f,a)$ (resp. $\SR(\neg f,a)$) when $a$ satisfies $f$ (resp. $\neg f$).
\end{definition}

\begin{example}
Consider again our toy example. There is no creature which is small, has human-like skin, pointy ears, no facial hair, and is evil. Finding the reasons of why such a creature cannot exist, means finding sufficient reasons for the assignment $a$ defined by $a(h) = a(p) = a(s) = a(e) = 1$ and $a(b) = 0$ given $\neg f$. In this case $h\,p\,e \in SR(\neg f, a)$ explains why such a creature cannot exist: there are no creatures that are evil and have both human-like skin and pointy ears, but there are such creatures that are non-evil (hobbits and elves), and there are evil creatures that have pointy ears (orcs) or human-like skin (men). There are other sufficient reasons for $a$ given $\neg f$, for instance $h\,s\,e  \in SR(\neg f, a)$. 
\end{example}

We define the problem Enum$\cdot\SR$ similarly to Enum$\cdot\IP$. A couple of results about the complexity of computing sufficient reasons have
been pointed out for the past few years. Obviously enough, when no assumption is made on the representation of $f$, computing a single sufficient reason for
an assignment $a$ is already {\sf NP}-hard (for pretty much the same reasons as for the prime implicant case, i.e., $f$ is valid iff for any $a$,
the unique sufficient reason for $a$ given $f$ is the empty term). Furthermore, the number of sufficient reasons for an assignment $a$ given $f$
can be exponential in the number of variables even when $f$ is represented in DT \cite{DBLP:journals/corr/abs-2108-05266}.
Contrary to abductive explanations, it is computationally easy to generate a single sufficient reason from $\SR(\Sigma, a)$ when $\Sigma$ is an OBDD circuit or a decision tree representing $f$.
A greedy algorithm can be used to this end: if $a$ satisfies $\Sigma$ (resp. $\neg \Sigma$), then start with the canonical term having $a$ as its unique satisfying assignment and remove literals from this term while ensuring that it still is an implicant of $\Sigma$ (resp. $\neg \Sigma$), until no more literal can be removed. In addition, when $\Sigma$ is in DT, we can generate in polynomial time a monotone CNF formula $\Psi$ such that $\IP(\Psi) =  \SR(\Sigma, a)$ (see \cite{DarwicheMarquis21} for details), and then take advantage of a quasi-polynomial time algorithm for enumerating the elements of $\IP(\Psi)$ \cite{DBLP:journals/dam/GurvichK99}.
Contrastingly, deciding whether a preset number of sufficient reasons for a given $a$ exists is intractable ({\sf NP}-hard), even when the Boolean function $f$ is monotone (see Theorem 3 in~\cite{MarqueGCIN21}). 
\medskip

In the following, we complete those results by providing evidence that Enum$\cdot SR$ from any language among dec-DNNF, OBDD, or DT is a difficult problem, despite the fact that those languages are quite convenient for many reasoning tasks \cite{DarwicheM02,Koricheetal13}.

Let us first give an inductive computation of $\SR(\Sigma,a)$ similar to that of $\IP(\Sigma)$.

\begin{proposition}\label{prop:decomposable_and_SR}
Let $f$ and $g$ be Boolean functions with $\var(f) \cap \var(g) = \emptyset$ and let $a$ be a truth assignment to a superset of $\var(f) \cup \var(g)$, then 
$\SR(f \land g, a) = \{t \land t' \mid t \in \SR(f,a), \, t' \in \SR(g,a) \}$.
\end{proposition}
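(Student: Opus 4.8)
The plan is to mimic the proof of Proposition \ref{prop:decomposable_and_IP} (the decomposable case) but carrying the extra constraint imposed by the assignment $a$. Recall that a sufficient reason for $a$ given a function $h$ is a prime implicant $t$ of $h$ with $a \models t$ (assuming $a \models h$; the argument is symmetric if instead $a$ falsifies $h$, replacing $h$ by $\neg h$ throughout). First I would note that since $\var(f)\cap\var(g)=\emptyset$, the assignment $a$ restricts to an assignment $a_f$ on $\var(f)$ and $a_g$ on $\var(g)$, and $a \models f\land g$ iff $a_f\models f$ and $a_g\models g$; I may assume this holds, since otherwise (if $a$ falsifies $f\land g$) one works with $\neg(f\land g)=\neg f\lor\neg g$, which is a little more delicate — see the obstacle below.

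Assuming $a\models f\land g$: for the inclusion ``$\supseteq$'', take $t\in\SR(f,a)$ and $t'\in\SR(g,a)$. By Proposition \ref{prop:decomposable_and_IP}, since $\var(f)\cap\var(g)=\emptyset$, the conjunction $t\land t'$ is a prime implicant of $f\land g$; and $a\models t$ together with $a\models t'$ gives $a\models t\land t'$, so $t\land t'\in\SR(f\land g,a)$. For ``$\subseteq$'', take $u\in\SR(f\land g,a)$, so $u$ is a prime implicant of $f\land g$ with $a\models u$. Again by Proposition \ref{prop:decomposable_and_IP}, every prime implicant of $f\land g$ has the form $t\land t'$ with $t\in\IP(f)$, $t'\in\IP(g)$ (the disjointness removes the $\max$); here I'd want to argue that the decomposition can be taken so that $t$ uses only variables of $f$ and $t'$ only variables of $g$, hence $\var(t)$ and $\var(t')$ partition $\var(u)$. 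Since $a\models u = t\land t'$, we get $a\models t$ and $a\models t'$, so $t\in\SR(f,a)$ and $t'\in\SR(g,a)$, giving $u\in\{t\land t'\mid\dots\}$.

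The main obstacle I anticipate is the ``resp.\ $\neg f$'' clause of the definition of sufficient reason — i.e.\ the case where $a$ falsifies $f\land g$. Then $\SR(f\land g,a)=\SR(\neg(f\land g),a)=\SR(\neg f\lor\neg g,a)$, and $\neg f\lor\neg g$ is a \emph{disjunction}, not a conjunction, so Proposition \ref{prop:decomposable_and_IP} no longer applies in the clean disjoint-conjunction form. One would have to check whether the statement as written is even intended to cover that case, or whether the surrounding text (which builds $\SR(\Sigma,a)$ inductively for the \emph{positive} branch, as in $\IP(\Sigma)$) tacitly assumes $a$ satisfies the relevant subfunction at each decomposable node. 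I expect the proof to handle only $a\models f\land g$, and to dispatch the falsifying case either by symmetry of the definition or by observing it does not arise in the inductive construction being set up. The remaining routine point is verifying that the prime-implicant decomposition from Proposition \ref{prop:decomposable_and_IP} respects the variable partition; this is immediate from decomposability since a prime implicant of $f\land g$ cannot contain a literal on a variable outside $\var(f)\cup\var(g)$ and, splitting it into its $\var(f)$-part and $\var(g)$-part, each part must already be an implicant of the respective factor by decomposability, and prime by minimality.
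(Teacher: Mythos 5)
Your proof is correct and takes essentially the same route as the paper: both reduce the claim to Proposition~\ref{prop:decomposable_and_IP} for the disjoint conjunction and then observe that $a$ satisfies $t \land t'$ if and only if it satisfies both $t$ and $t'$. The negative case you worry about does not arise, since the notation $\SR(f \land g, a)$ is used only when $a$ satisfies $f \land g$ (and hence satisfies both $f$ and $g$), exactly as you anticipated.
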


\begin{proposition}\label{prop:decision_node_SR}
Let $f$ be a Boolean function, let $a$ be a truth assignment to a superset of $\var(f)$ and let $x \in \var(f)$. If $a$ satisfies the literal $\ell$ on variable $x$ then
\[
\begin{aligned}
\SR(f, a)
&= \{t \land \ell \mid t \in \SR(f|\ell,a), \, t \not\models f|\overline{\ell} \}  
\\
&\cup \, \SR(f|\overline{x} \land f|x, a).
\end{aligned}
\]
\end{proposition}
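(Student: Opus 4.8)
The plan is to prove Proposition~\ref{prop:decision_node_SR} by mimicking the structure of the proof of Proposition~\ref{prop:decision_node_IP}, carefully tracking the extra constraint imposed by the assignment~$a$. Write $f = (\overline{x} \land f|\overline{x}) \lor (x \land f|x)$ and assume without loss of generality that $a$ satisfies $x$, so $\ell = x$ and $\overline{\ell} = \overline{x}$; the case where $a$ satisfies $\overline{x}$ is symmetric. Note first that $f|x \land f|\overline{x} \models f$ and $f|x \models f$, so every implicant of $f|x$ or of $f|x \land f|\overline{x}$ is an implicant of $f$; combined with the fact that $a$ satisfies all the terms on the right-hand side (for the first set because $a$ satisfies $x$ and $t \in \SR(f|x,a)$ is satisfied by $a$; for the second because $\SR(f|\overline{x}\land f|x,a)$ consists of terms $a$ satisfies), this already shows that every term on the right-hand side is an implicant of $f$ satisfied by $a$. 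The real work is the two directions establishing the set equality, and in particular that the terms on the right are \emph{prime} implicants of $f$ and that every sufficient reason of $f$ for $a$ is captured.

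For the inclusion $\supseteq$, take $t$ in the first set, i.e.\ $t \in \SR(f|x,a)$ with $t \not\models f|\overline{x}$; I must show $t \land x \in \IP(f)$ and that $a$ satisfies it (the latter is immediate). By Proposition~\ref{prop:propagate_IP} applied with $\ell = x$, since $t \in \IP(f|x)$ and $t \not\models f|\overline{x} = f|\overline{\ell}$, we get $t \land x \in \IP(f)$, which is exactly what is needed. For $t \in \SR(f|\overline{x}\land f|x, a)$: here $t \in \IP(f|\overline{x}\land f|x)$ and one checks $t \in \IP(f)$ directly, e.g.\ via Proposition~\ref{prop:decision_node_IP} which lists $\IP(f|\overline{x}\land f|x)$ as one of the three components of $\IP(f)$; again $a$ satisfies $t$ by hypothesis. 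So both pieces of the right-hand side are sufficient reasons of $f$ for~$a$.

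For the inclusion $\subseteq$, let $t \in \SR(f,a)$, so $t \in \IP(f)$ and $a$ satisfies $t$. Apply Proposition~\ref{prop:decision_node_IP}: $t$ falls into one of the three listed sets. If $t = t' \land x$ with $t' \in \IP(f|x)$ and $t' \not\models f|\overline{x}$, then since $a$ satisfies $t$ it satisfies $t'$, hence $t' \in \SR(f|x,a)$, and $t$ lands in the first set of the right-hand side. If $t \in \IP(f|\overline{x}\land f|x)$, then $a$ satisfies $t$ puts $t \in \SR(f|\overline{x}\land f|x,a)$. The case to rule out is $t = t' \land \overline{x}$ with $t' \in \IP(f|\overline{x})$ and $t' \not\models f|x$: but then $t$ contains the literal $\overline{x}$, which $a$ does \emph{not} satisfy since $a$ satisfies $x$, contradicting $t \in \SR(f,a)$. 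Hence this case is vacuous, and $\SR(f,a)$ is contained in the union of the two sets on the right.

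The main obstacle, and the place to be careful, is the interaction between the primality requirement and the assignment constraint: one must make sure that no sufficient reason is lost when the $\overline{x}$-branch of Proposition~\ref{prop:decision_node_IP} is discarded, and conversely that discarding it does not accidentally drop prime implicants that happen to be expressible without $\overline{x}$ — but such implicants, if they are sufficient reasons for $a$ and primes of $f$, are already recovered through the other two components, so no loss occurs. A minor subtlety is handling the boundary case $x \notin \var(f|\overline{x} \land f|x)$ or degenerate sub-functions (e.g.\ $f|x$ or $f|\overline{x}$ constant), which should be dispatched by the same case analysis; these are routine. Overall the proof is a faithful ``conditioned'' version of the prime-implicant decomposition, with the assignment simply selecting the relevant branch.
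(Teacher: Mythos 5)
Your proof is correct and takes essentially the same route as the paper: both obtain the identity by filtering the three-way decomposition of $\IP(f)$ from Proposition~\ref{prop:decision_node_IP} with the condition that $a$ satisfies the term, the $\overline{\ell}$-branch disappearing because $a$ satisfies $\ell$. One passing claim, that $f|x \models f$, is false in general (only $x \land f|x \models f$ and $f|\overline{x} \land f|x \models f$ hold), but it is not load-bearing since your two inclusions actually rely on Propositions~\ref{prop:propagate_IP} and~\ref{prop:decision_node_IP}.
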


\noindent By Proposition~\ref{prop:number_of_IP}, $|\IP(f)| \geq \max(|\IP(f|\overline{x})|,|\IP(f|x)|)$. In a sense this means that using $\IP(f|\overline{x})$ and $\IP(f|x)$ to generate $\IP(f)$ is not a waste of resources since all these implicants are kept in some form through $\IP(f)$. This led to our output polynomial procedure to generate $\IP(f)$ for OBDD and more generally for dec-DNNF circuits. On the other hand, it is not guaranteed that $\SR(f,a)$ is larger than $\SR(f|x,a)$ and $\SR(f|\overline{x}, a)$ so there is no straightforward adaptation of this procedure from Enum$\cdot IP$ to Enum$\cdot SR$. 

\begin{example}
Let $\Sigma$ be the dec-DNNF circuit of Figure~\ref{fig:dec-DNNF}. Consider the dec-DNNF circuit $\Sigma_{v_1}$ rooted at node $v_1$, as spotted in Figure~\ref{fig:path}. The assignment $a$ to $\{b,e,p,s\}$ defined by $a(b) = a(e) = 1$ and $a(p) = a(s) = 0$ satisfies $\Sigma_{v_1}$. Recall that the set  $\IP(\Sigma_{v_1})$ has been constructed in Example 1 and observe that $\SR(\Sigma_{v_1},a) = \{\overline{p}\,\overline{s}\}$. Now the 0-child of $v_1$ is $v_2$ and looking at the set $\IP(\Sigma_{v_2})$ constructed in Example 1, we see that $\SR(\Sigma_{v_2},a) = \{\overline{p}\,\overline{s}, b\,\overline{p}\}$. Since $\Sigma_{v_2} = \Sigma_{v_1}|\overline{e}$, we have that $|\SR(\Sigma_{v_1}, a)| < \max(|\SR(\Sigma_{v_1}|\overline{e}, a)|,|\SR(\Sigma_{v_1}|e, a)|)$.
\end{example}
%
%Actually, it is unlikely that enumerating sufficient reasons can be achieved in output polynomial time even when $f$ is given as an OBDD circuit or a decision tree:
Actually, we give evidence that enumerating sufficient reasons  from dec-DNNF, and even from OBDD or DT, is not in {\sf OutputP} by reducing to it the problem of enumerating the minimal transversals of a hypergraph, a well-known problem whose membership to {\sf OutputP} is a long-standing question. Formally: 

\begin{proposition}
If Enum$\cdot\SR$ from OBDD is in {\sf OutputP} or Enum$\cdot\SR$ from DT is in {\sf OutputP}, then enumerating the minimal transversals of a hypergraph is in {\sf OutputP}. 
\end{proposition}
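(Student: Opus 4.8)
The plan is to reduce the hypergraph minimal transversal enumeration problem to $\text{Enum}\cdot\SR$ from OBDD (and separately from DT, or more simply by noting a DT can mimic the same function). Recall that enumerating the minimal transversals of a hypergraph $\calH$ with vertex set $V$ and edge set $E = \{e_1,\dots,e_m\}$ is, by the monotone dualization connection mentioned in the introduction, the same as enumerating the prime implicants of the monotone CNF $\phi = \bigwedge_{j=1}^m \bigvee_{v \in e_j} x_v$, i.e., enumerating $\IP(\phi)$. So it suffices to exhibit, computable in polynomial time from $\calH$, an OBDD circuit $\Sigma$ and an assignment $a$ such that the sufficient reasons $\SR(\Sigma,a)$ are in polynomial-time-computable bijection with the minimal transversals of $\calH$ --- equivalently, with $\IP(\phi)$.

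The key construction I would use is the following. Introduce the variables $x_v$ for $v \in V$ together with one fresh ``selector'' variable $y_j$ per edge $e_j$. Build a function $f$ on these variables such that: its positive instances near a chosen assignment $a$ correspond exactly to partial assignments that ``cover every edge'', and the prime implicants of $f$ containing $a$'s literals correspond exactly to the minimal covers, i.e.\ the minimal transversals. A clean way: let $a$ set all $x_v = 1$ and all $y_j = 1$, and design $f$ so that $f(a)=1$ while the minimal sub-terms of the canonical term of $a$ that still imply $f$ are precisely the terms $\bigwedge_{v \in T} x_v$ for $T$ a minimal transversal of $\calH$. Concretely one wants $f$ to behave like ``for every edge $e_j$, at least one $x_v$ with $v\in e_j$ is true, OR $y_j$ is true'', but arranged so that the $y_j$'s never help a minimal implicant once $a$ is fixed --- e.g.\ by adding a global constraint forcing all $y_j$ to $0$ on the relevant face, or by encoding $f = \bigwedge_j\big( y_j \leftrightarrow (\text{something false under }a)\big)\lor \dots$. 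The point is to tailor $f$ so that $\SR(f,a)$, which by the definition is exactly the set of prime implicants of $f$ satisfied by $a$, equals $\{\,\bigwedge_{v\in T} x_v \mid T \text{ a minimal transversal of }\calH\,\}$, and then note that recovering $T$ from such a term (and vice versa) is trivially polynomial. Since this is a polynomial-time, polynomially-bounded, answer-preserving reduction, an $\text{OutputP}$ algorithm for $\text{Enum}\cdot\SR$ yields one for minimal transversal enumeration.

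The crux is ensuring $f$ is representable as a polynomial-size OBDD (and likewise as a polynomial-size DT, or at least DT-representable after the same ideas): a monotone CNF like $\phi$ itself has no small OBDD in general, so one cannot simply take $\Sigma$ to compute $\phi$. This is exactly why the selector variables $y_j$ are needed: the trick (this is the standard device behind such hardness results, cf.\ the literature on dualization and knowledge compilation) is that with one fresh variable per clause the function becomes ``locally decomposable'' in a way an OBDD can read off in the right variable order --- reading $y_1,x_{e_1},y_2,x_{e_2},\dots$ the OBDD only needs to remember, at each stage, a bounded amount of information because each $y_j$ ``absorbs'' the $j$-th clause. I would set up the order and the gadget so that the OBDD width stays polynomial (indeed constant per block), argue $f(a)=1$, and then verify the sufficient-reasons characterization by applying the inductive descriptions of $\SR$ from Propositions~\ref{prop:decomposable_and_SR} and~\ref{prop:decision_node_SR} along the OBDD, checking that the only prime implicants compatible with $a$ are the $x$-only terms indexed by minimal transversals. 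The main obstacle, and the step deserving the most care, is precisely this simultaneous requirement: the gadget must be small as an OBDD/DT \emph{and} have its $a$-compatible prime implicants be in clean correspondence with minimal transversals, with no spurious sufficient reasons introduced by the selector variables; getting the gadget right and proving the $\SR$ characterization is where the real work lies, while the reduction wrapper and the $\text{OutputP}$ bookkeeping are routine.
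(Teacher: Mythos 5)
Your overall strategy (a polynomial-time reduction mapping the sufficient reasons of a constructed circuit at a chosen assignment bijectively to the minimal transversals of $\calH$) is the same as the paper's, but the heart of the argument --- the gadget that is simultaneously small as an OBDD/DT and has exactly the right $a$-compatible prime implicants --- is missing, and the specific gadget you sketch does not work. The function you describe, essentially $\bigwedge_j (c_j \lor y_j)$ with one selector $y_j$ per clause (possibly with extra constraints forcing the $y_j$ to $0$ ``on the relevant face''), has no polynomial-size OBDD in general: conditioning an OBDD on $y_j = 0$ for all $j$ never increases its size and recovers the monotone CNF $\phi$ itself, which, as you note, can require exponential-size OBDDs. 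Your width argument (``each $y_j$ absorbs the $j$-th clause'') fails because hyperedges overlap: a variable $x_v$ read in an early block can satisfy many later clauses, so the OBDD must still remember which future clauses are already covered. The alternative chain-style selector gadget $B^{(j)} = (\overline{y_j}\land c_j) \lor (y_j \land B^{(j+1)})$ (the one the paper uses for the abduction hardness in Proposition~9) \emph{is} small, but it does not give the clean correspondence you need: its prime implicants satisfied by the all-ones assignment include terms mixing positive $y$-literals with partial covers (roughly, minimal transversals of arbitrary sub-hypergraphs $\{e_j : j \notin J\}$), and their number can be exponentially larger than $|tr_{\textup{min}}(\calH)|$, so an output-polynomial \SR{} enumerator run on this input gives no output-polynomial bound for the transversals of $\calH$. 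Since you yourself flag this step as ``where the real work lies'' and leave it open, the proof is incomplete at its crucial point.

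For comparison, the paper sidesteps selector variables entirely (following Kavvadias, Papadimitriou and Sideri): it takes $f$ to be the function whose satisfying assignments are exactly the $m$ assignments $a_H$, $H \in \calH$, with $a_H(x_i)=0$ iff $i \in H$. A function with only $m$ explicitly listed models trivially has a polynomial-size OBDD and decision tree; negation is just swapping the sinks (resp.\ the leaf labels). One then checks that the prime implicants of $\neg f$ containing only positive literals are exactly the terms $\bigwedge_{i \in T} x_i$ for $T$ a minimal transversal, so $\SR(\neg f, a_\emptyset)$ with $a_\emptyset$ the all-ones assignment is \emph{exactly} the set of minimal transversals --- no spurious solutions, no auxiliary variables, and the OutputP transfer is immediate.
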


\section{Conclusion}

Most applications of prime implicants for Boolean function analysis use only a fraction of the many prime implicants a Boolean function may have. 
Especially, in the context of logic-based abduction, subset-minimal assumptions to be added to the available background
knowledge in order to be able to derive some given manifestations are looked for; in the propositional case, they correspond to specific prime implicants.
Furthermore, in an eXplainable AI perspective, specific prime implicants known as sufficient reasons are used to explain the predictions of 
machine learning algorithms. 

In our work, we have studied the enumeration of general and specific prime implicants of Boolean functions represented as dec-DNNF circuits. It was known that these circuits enable efficient reasoning on Boolean functions. Our results show that when it comes to prime implicants enumeration, dec-DNNF circuits have benefits as well as limitations. Our take-home message is that, while dec-DNNF circuits enable enumerating general prime implicants in incremental polynomial time, there are strong pieces of evidence against the existence of any output-polynomial time procedure for enumerating specific prime implicants from dec-DNNF circuits. More precisely, if a procedure for enumerating subset-minimal abductive explanations were to exist, then ${\sf P } = {\sf NP}$ would follow. Similarly, if there were an output-polynomial time algorithm for enumerating sufficient reasons from dec-DNNF circuits, then the enumeration of the minimal transversals of a hypergraph would be in {\sf OutputP}. Though this is considered unlikely in enumeration complexity, we think that proving a stronger statement would be a valuable contribution. We let this task open for future research.

\section*{Acknowledgments} 
Many thanks to the anonymous reviewers for their comments and insights. 
This work has benefited from the supports of the PING/ACK project (ANR-18-CE40-0011) and of 
the AI Chair EXPE\textcolor{orange}{KC}TATION (ANR-19-CHIA-0005-01) of the French National Research Agency.
It was also partially supported by TAILOR, a project funded by EU Horizon 2020 research
and innovation programme under GA No 952215.

%\newpage

\bibliographystyle{named}
\bibliography{enumIP.bib}

%\newpage
%\if{false}
\section*{Appendix: Proofs}

\setcounter{proposition}{1}
\setcounter{lemma}{0}

\begin{proposition}
Enum$\cdot\IP$ from dec-DNNF is in {\sf EnumP}.
\end{proposition}

\begin{proof}
Direct from the fact that dec-DNNF is a sublanguage of \emph{deterministic} DNNF (d-DNNF) and
d-DNNF supports polynomial time implicant check~\cite{DarwicheM02}.
\end{proof}

%The next three propositions are mainly folklore ones. We provide a proof for each of them nevertheless, for the sake of completeness.

\begin{proposition}
Let $f$ and $g$ be Boolean functions, then $\IP(f \land g) = \max(\{t \land t' \mid t \in \IP(f), \, t'\in \IP(g) \}, \models).$ Furthermore if $\var(f) \cap \var(g) = \emptyset$, then $\IP(f \land g) = \{t \land t' \mid t \in \IP(f), \, t'\in \IP(g) \}.$
\begin{proof}
A proof of $\IP(f \land g) = \max(\{t \land t' \mid t \in \IP(f), \, t'\in \IP(g) \}, \models)$ can be found e.g., in~\cite{Marquis93}. 

In the case where $\var(f) \cap \var(g) = \emptyset$, the terms in $\IP(f)$ contain only variables from $\var(f)$ and the terms in $\IP(g)$ contain only variables from $\var(g)$. We denote $lit(f) = \{x,\overline{x} \mid x \in \var(f)\}$. Let $t_f,t'_f \in \IP(f)$ and $t_g,t'_g \in \IP(g)$ such that $t_f \land t_g \models t'_f \land t'_g$. Looking at terms as sets of literals this means that $t'_f \cup t'_g \subseteq t_f \cup t_g$. But then $t'_f \subseteq t_f$ since  $(t_f \cup t_g) \cap lit(f) = t_f$ and $(t'_f \cup t'_g) \cap lit(f) = t'_f$. This means that $t_f \models t'_f$ and therefore $t_f = t'_f$. A similar argument gives that $t_g = t'_g$. This shows that when $\var(f) \cap \var(g) = \emptyset$, $\IP(f \land g) = \max(\{t \land t' \mid t \in \IP(f), \, t'\in \IP(g) \}, \models) = \{t \land t' \mid t \in \IP(f), \, t'\in \IP(g) \}$.
\end{proof}
\end{proposition}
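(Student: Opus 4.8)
The plan is to first pin down all implicants of $f \land g$ and then keep the minimal ones. I view each consistent term as its set of literals, so that for consistent terms $s,s'$ one has $s \models s'$ iff $s' \subseteq s$; in particular $\max(S,\models)$ is exactly the set of $\subseteq$-minimal members of $S$, and ``$t$ prime for $h$'' means ``$t$ is a $\subseteq$-minimal implicant of $h$''. Now if $s$ is any consistent implicant of $f \land g$ then $s \models f$ and $s \models g$, so greedily deleting literals from $s$ yields some $t \in \IP(f)$ with $t \subseteq s$ and some $t' \in \IP(g)$ with $t' \subseteq s$; hence $t \cup t' \subseteq s$ is consistent and $t \land t'$ is an implicant of $f \land g$. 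Conversely every consistent $t \land t'$ with $t \in \IP(f)$, $t' \in \IP(g)$ obviously entails $f \land g$. Therefore the implicants of $f \land g$ are precisely the consistent literal-supersets of terms of the form $t \land t'$, and their $\subseteq$-minimal elements coincide with the $\subseteq$-minimal elements of $\{t \land t' \mid t \in \IP(f),\, t' \in \IP(g)\}$, that is, with $\max(\{t \land t' \mid t \in \IP(f),\, t' \in \IP(g)\},\models)$. (This is the classical decomposition of prime implicants under conjunction, and one may simply cite it.) The degenerate case where $f \land g$ is unsatisfiable should be handled separately; for the reduced circuits used later in the paper it does not arise for proper subcircuits.

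For the sharper statement under $\var(f) \cap \var(g) = \emptyset$, I would show that the outer $\max$ is vacuous. First, each $t \land t'$ with $t \in \IP(f)$, $t' \in \IP(g)$ is automatically consistent, because $t$ mentions only variables of $f$ and $t'$ only variables of $g$, which are disjoint. Second, no element of $\{t \land t'\}$ strictly entails another: if $t_f \land t_g \models t'_f \land t'_g$ with $t_f,t'_f \in \IP(f)$ and $t_g,t'_g \in \IP(g)$, then as literal sets $t'_f \cup t'_g \subseteq t_f \cup t_g$; intersecting both sides with the set of literals over $\var(f)$, which isolates the $f$-parts by disjointness, gives $t'_f \subseteq t_f$, i.e. $t_f \models t'_f$, whence $t_f = t'_f$ by primality, and symmetrically $t_g = t'_g$. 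Thus every element of $\{t \land t'\}$ is $\subseteq$-minimal in that set, so $\max(\{t \land t' \mid t \in \IP(f),\, t' \in \IP(g)\},\models)$ equals $\{t \land t' \mid t \in \IP(f),\, t' \in \IP(g)\}$, which combined with the first part yields the claim.

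There is no genuine difficulty here; the one thing to get right is the translation dictionary: for terms, ``entails'' is reverse set inclusion, hence both primality and the $\max(\cdot,\models)$ operator are $\subseteq$-minimality conditions, and the disjointness of variables is exactly what lets one project a cross-entailment onto the $f$- and $g$-coordinates. The only other care needed is for the boundary cases in which $f$, $g$, or $f \land g$ is a constant and the empty term $t_\emptyset$ is involved, which are checked directly.
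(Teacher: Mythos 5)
Your proof is correct and, for the disjoint-variables half, follows essentially the same route as the paper: you show the outer $\max$ is vacuous by projecting a cross-entailment $t_f \land t_g \models t'_f \land t'_g$ onto the literals over $\var(f)$ (resp.\ $\var(g)$) and then invoking primality to get $t_f = t'_f$ and $t_g = t'_g$, exactly as in the paper's argument. The only difference is that for the first identity $\IP(f \land g) = \max(\{t \land t' \mid t \in \IP(f),\, t'\in \IP(g)\}, \models)$ the paper simply cites \cite{Marquis93}, whereas you sketch a correct self-contained argument (every consistent implicant of $f \land g$ shrinks, via greedy literal removal, to some $t \land t'$ from the cross product, so the $\subseteq$-minimal implicants coincide with the $\subseteq$-minimal elements of that set), together with an appropriate caveat about the degenerate unsatisfiable case.
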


\begin{proposition}
Let $f$ a Boolean function, let $x$ be a variable, and let $\ell \in \{x,\overline{x}\}$. Consider $t \in \IP(f|\ell)$. If $t \models f|\overline{\ell}$ then $t \in \IP(f)$, otherwise $t \land \ell \in \IP(f)$.
\begin{proof}
Suppose $t \models f|\overline{\ell}$. Then $t$ is an implicant of $f$ since $t \models (f|\overline{x} \land f|x) \models ((\overline{x} \land f|\overline{x}) \lor (x \land f|x)) \equiv f$. To prove that it is prime let $t'$ be a strict subterm of $t$ and assume $t' \models f$. We have $x \not\in \var(t)$ since $x \not\in \var(f|\ell)$, so $t'|\ell = t'$. If $t' \models f$ then $t' = t'|\ell \models f|\ell$ and $t$ is not a prime implicant of $f|\ell$, a contradiction.

Suppose $t \not\models f|\overline{\ell}$. Then $t \land \ell$ is an implicant of $f$ since $t\land\ell \models (\ell\land f|\ell) \models ((\overline{x} \land f|\overline{x}) \lor (x \land f|x)) \equiv f$. To prove that it is prime, let $t'$ be a strict subterm of $t \land \ell$ and assume $t' \models f$. If $\ell \not\in t'$ then $t' = t'|\ell \models f|\ell$ and $t$ is not a prime implicant of $f|\ell$, a contradiction. If however $\ell \in t'$, then write $t' = t'' \land \ell$ and observe that $t'' = t'|\ell \models f|\ell$, so $t$ is not a prime implicant of $f|\ell$, another contradiction.
\end{proof}
\end{proposition}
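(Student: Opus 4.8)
The plan is to dispose of the two cases of the statement separately, and within each case to argue the implicant property and the primality property in turn. Two elementary facts will be used throughout: $f \equiv (\ell \land f|\ell) \lor (\overline{\ell} \land f|\overline{\ell})$, and, for any term $t'$, $t' \models f$ implies $t'|\ell \models f|\ell$ (and likewise with $\overline{\ell}$); in particular, when $x \notin \var(t')$ we have $t'|\ell = t' = t'|\overline{\ell}$, so $t' \models f$ iff $t' \models f|\ell$ and $t' \models f|\overline{\ell}$. Note that $x \notin \var(t)$, since $t$ is a term over $\var(f|\ell) = \var(f) \setminus \{x\}$; this is used repeatedly.

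First I would treat the case $t \models f|\overline{\ell}$ and show $t \in \IP(f)$. For the implicant part, from $t \models f|\ell$ and $t \models f|\overline{\ell}$ I get $t \equiv (\ell \land t) \lor (\overline{\ell} \land t) \models (\ell \land f|\ell) \lor (\overline{\ell} \land f|\overline{\ell}) \equiv f$. For primality, I would take a strict subterm $t'$ of $t$ with $t' \models f$; since $x \notin \var(t')$ this gives $t' = t'|\ell \models f|\ell$, contradicting the primality of $t$ as an implicant of $f|\ell$.

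Next, the case $t \not\models f|\overline{\ell}$, where I must show $t \land \ell \in \IP(f)$. The implicant part is immediate from $t \land \ell \models \ell \land f|\ell \models f$. For primality I would take a strict subterm $t'$ of $t \land \ell$ with $t' \models f$ and split on whether $\ell \in t'$. If $\ell \in t'$, write $t' = t'' \land \ell$ with $t''$ a strict subterm of $t$; then $t'' = t'|\ell \models f|\ell$, contradicting primality of $t$ for $f|\ell$. If $\ell \notin t'$, then $t'$ is a subterm of $t$ and $t' = t'|\ell \models f|\ell$; if $t' \subsetneq t$ this again contradicts primality of $t$ for $f|\ell$, and if $t' = t$ then $t \models f$, whence $t \models f|\overline{\ell}$ (using $x \notin \var(t)$), contradicting the case hypothesis.

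The only genuinely delicate point --- and the step I expect to be the main obstacle --- is the last sub-case above: recognising that when the literal removed from $t \land \ell$ is $\ell$ itself, one is left with $t$, and this is exactly where the hypothesis $t \not\models f|\overline{\ell}$ must be invoked, via the observation that $t \models f$ would force $t \models f|\overline{\ell}$. Everything else is routine manipulation of restrictions of terms and functions.
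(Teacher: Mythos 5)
Your proof is correct and follows essentially the same route as the paper's: Shannon expansion $f \equiv (\ell \land f|\ell) \lor (\overline{\ell} \land f|\overline{\ell})$ for the implicant part, and projecting a hypothetical smaller implicant onto $f|\ell$ to contradict the primality of $t$ as an implicant of $f|\ell$. If anything, you are more careful than the paper in the sub-case where the strict subterm of $t \land \ell$ is $t$ itself (i.e.\ the removed literal is $\ell$): there the contradiction must come from the case hypothesis $t \not\models f|\overline{\ell}$ rather than from the primality of $t$ for $f|\ell$, a point the paper's proof glosses over and you handle explicitly.
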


\begin{proposition}
Let $f$ be a Boolean function and let $x$ be a variable.
\[
\begin{aligned}
\IP(f)
= \,     &\{t \land \overline{x} \mid t \in \IP(f|\overline{x}), \, t \not\models f|x \} \\
 \cup \, &\{t \land x \mid t \in \IP(f|x), \, t \not\models f|\overline{x} \} \\
 \cup \, &\IP(f|\overline{x} \land f|x)
\end{aligned}
\]
\begin{proof}

We derive $\{t \land \overline{x} \mid t\in \IP(f|\overline{x}), t \not\models f|x\} \cup \{t \land x \mid t\in \IP(f|x), t \not\models f|\overline{x}\} \subseteq \IP(f)$ from Proposition~\ref{prop:propagate_IP}. 

Now we show that $\IP(f|\overline{x} \land f|x) \subseteq \IP(f)$. Let $t \in \IP(f|\overline{x} \land f|x)$, then $t \models f|\overline{x}$ and $t \models f|x$. Since $x \not\in \var(f|\overline{x}) \cup \var(f|x)$ we have that $x \not\in \var(t)$. First we prove that $t$ is an implicant of $f$. If we had $t \not\models f$ then $t|\overline{x} \not\models f|\overline{x}$ or $t|x \not\models f|x$ would hold, but $t|\overline{x} = t|x = t$ so $t \models f$. Now to prove that it is a prime implicant. Let $\ell \in t$ and let $t'$ be the term $t$ deprived from $\ell$. If $t' \models f$ were to hold then so would $t'|\overline{x} \models f|\overline{x}$ and $t'|\overline{x} \models f|x$. But since $t'|\overline{x} = t'|x = t'$, this would mean that $t' \models f|\overline{x} \land f|x$ and therefore $t$ would not be a prime implicant of $f|\overline{x} \land f|x$, a contradiction. This shows that $\IP(f|\overline{x} \land f|x) \subseteq \IP(f)$.

We have established that 
\[
\begin{aligned}
\IP(f)
\supseteq \,     &\{t \land \overline{x} \mid t \in \IP(f|\overline{x}), \, t \not\models f|x \} \\
 \cup \, &\{t \land x \mid t \in \IP(f|x), \, t \not\models f|\overline{x} \} \\
 \cup \, &\IP(f|\overline{x} \land f|x)
\end{aligned}
\]
and now we show the reverse inclusion. Let $t \in \IP(f)$. Assume $t = t_0 \land \overline{x}$. $t \models f$ implies that $t|\overline{x} \models f|\overline{x}$, or in other words, that $t_0 \models f|\overline{x}$. If $t_0$ was not a prime implicant of $ f|\overline{x}$, that is, if there was $t'_0 \subset t_0$ such that $t'_0 \models  f|\overline{x}$, then we would also have $t'_0 \land \overline{x} \models f$ and therefore $t$ would not be a prime implicant of $f$. So $t_0 \in \IP( f|\overline{x})$. Now if $t_0 \models f|x$ then we would have $t_0 \models f|\overline{x} \land f|x \models f$, so $t$ would not be a prime implicant of $f$, a contradiction. This shows that if $\overline{x}$ is in $t$, then $t \in \{t_0 \land \overline{x}\mid t_0 \in \IP(f|\overline{x}), t_0 \not\models f|x\}$. 
A symmetrical proof gives that if $x$ is in $t$, then $t \in \{t_1 \land x\mid t_1 \in \IP(f|x), t_1 \not\models f|\overline{x}\}$.

Finally if neither $x$ nor $\overline{x}$ is in $t$, then $t = t|\overline{x} \models f|\overline{x}$ and $t = t|x \models f|x$, and therefore $t \models f|\overline{x} \land f|x$. Now if $t$ was not a prime implicant of $f|\overline{x} \land f|x$ then there would be some $t' \subset t$ in $\IP(f|\overline{x} \land f|x)$. Since $\IP(f|\overline{x} \land f|x) \subseteq \IP(f)$, this would mean that $t$ is not a prime implicant of $f$, a contradiction. So $t \in \IP(f|\overline{x} \land f|x)$.

We have established that 
\[
\begin{aligned}
\IP(f)
\subseteq \,     &\{t \land \overline{x} \mid t \in \IP(f|\overline{x}), \, t \not\models f|x \} \\
 \cup \, &\{t \land x \mid t \in \IP(f|x), \, t \not\models f|\overline{x} \} \\
 \cup \, &\IP(f|\overline{x} \land f|x)
\end{aligned}
\]
thus finishing the proof.
\end{proof}
\end{proposition}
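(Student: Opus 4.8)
The plan is to prove the two set inclusions separately. Throughout I would use the identity $f \equiv (\overline{x} \land f|\overline{x}) \lor (x \land f|x)$, together with the simple observation that $f|\overline{x}$ and $f|x$ are functions over $\var(f) \setminus \{x\}$, so every implicant $t$ of either of them (or of their conjunction) satisfies $x \notin \var(t)$ and hence $t|\overline{x} = t|x = t$. This will be the workhorse of every case.

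For the inclusion $\supseteq$, the first two sets on the right-hand side are immediately contained in $\IP(f)$ by Proposition~\ref{prop:propagate_IP}: taking $\ell = \overline{x}$ gives that every $t \in \IP(f|\overline{x})$ with $t \not\models f|x$ yields $t \land \overline{x} \in \IP(f)$, and $\ell = x$ handles the second set symmetrically. For the third set I would take $t \in \IP(f|\overline{x} \land f|x)$; since $t = t|\overline{x} = t|x$, from $t \models f|\overline{x} \land f|x$ one gets $t \models f|\overline{x}$ and $t \models f|x$, hence $t \models f$; and any proper subterm $t' \subsetneq t$ entailing $f$ would (again since $t' = t'|\overline{x} = t'|x$) entail both $f|\overline{x}$ and $f|x$, contradicting primality of $t$ in $f|\overline{x} \land f|x$. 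This in particular records that $\IP(f|\overline{x} \land f|x) \subseteq \IP(f)$, a fact reused below.

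For the inclusion $\subseteq$, I would take $t \in \IP(f)$ and case-split on how $t$ involves $x$. If $\overline{x} \in t$, write $t = t_0 \land \overline{x}$; then $t_0 = t|\overline{x} \models f|\overline{x}$, $t_0$ is prime in $f|\overline{x}$ (a proper subterm of $t_0$ entailing $f|\overline{x}$ would, conjoined with $\overline{x}$, entail $f$ and violate primality of $t$), and $t_0 \not\models f|x$ (otherwise $t_0 \models f|\overline{x} \land f|x \models f$ contradicts primality of $t$, since $t_0 \subsetneq t$), so $t$ lies in the first set; the case $x \in t$ is symmetric and puts $t$ in the second set. If $t$ contains no literal on $x$, then $t = t|\overline{x} \models f|\overline{x}$ and $t = t|x \models f|x$, so $t \models f|\overline{x} \land f|x$, and $t$ is prime there because a proper subterm entailing $f|\overline{x} \land f|x$ would entail $f$, contradicting primality of $t$. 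The three cases are exhaustive and pairwise disjoint, so the inclusion follows. I do not expect a genuine obstacle here: the argument is routine propositional bookkeeping, and the only point that needs care — the one I would watch — is consistently exploiting that implicants of the conditioned functions never mention $x$, so conditioning them again on $x$ or $\overline{x}$ leaves them unchanged; unlike Proposition~\ref{prop:decomposable_and_IP}, no "overlapping variables" subtlety arises.
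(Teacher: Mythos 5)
Your proof is correct and follows essentially the same route as the paper's: Proposition~\ref{prop:propagate_IP} for the first two sets, a direct conditioning argument for $\IP(f|\overline{x} \land f|x) \subseteq \IP(f)$, and the same three-way case split on the occurrence of $x$ in $t$ for the converse inclusion. The only (immaterial) difference is in the last case, where you argue primality of $t$ in $f|\overline{x}\land f|x$ directly from $f|\overline{x}\land f|x \models f$, whereas the paper invokes the previously established inclusion $\IP(f|\overline{x} \land f|x) \subseteq \IP(f)$; both are valid.
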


\begin{proposition}
Let $f$ a Boolean function and let $x$ be a variable, then 
$|\IP(f)| \geq \max(|\IP(f|\overline{x})|, |\IP(f|x)|)$.
\begin{proof}
Let $\ell \in \{\overline{x}, x\}$. It is shown in~\cite{DarwicheM02} that $\IP(f|\ell) = \max(\{t|\ell \mid t \in \IP(f)\},\models)$. So $|\IP(f|\ell)| = |\max(\{t|\ell \mid t \in \IP(f)\},\models)| \leq |\{t|\ell \mid t \in \IP(f)\}| = |\IP(f)|$.
\end{proof}
\end{proposition}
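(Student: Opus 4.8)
The plan is to prove the two inequalities $|\IP(f)| \geq |\IP(f|\overline{x})|$ and $|\IP(f)| \geq |\IP(f|x)|$ separately; since the roles of $x$ and $\overline{x}$ are symmetric, it suffices to establish the first one and then repeat the argument verbatim with $x$ in place of $\overline{x}$. To get the bound $|\IP(f)| \geq |\IP(f|\overline{x})|$ I would exhibit an \emph{injection} $\psi \colon \IP(f|\overline{x}) \to \IP(f)$, which immediately forces the desired cardinality comparison. The engine for this injection is Proposition~\ref{prop:decision_node_IP}, already proved above, which decomposes $\IP(f)$ as a union involving $\IP(f|\overline{x})$, $\IP(f|x)$, and $\IP(f|\overline{x} \land f|x)$.

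Concretely, I would define $\psi$ by a case split on each $t \in \IP(f|\overline{x})$: put $\psi(t) = t \land \overline{x}$ when $t \not\models f|x$, and $\psi(t) = t$ when $t \models f|x$. The first step is to check that $\psi$ indeed lands in $\IP(f)$. In the first case this is exactly membership in the set $\{t \land \overline{x} \mid t \in \IP(f|\overline{x}),\ t \not\models f|x\}$, which Proposition~\ref{prop:decision_node_IP} places inside $\IP(f)$. In the second case, $t \in \IP(f|\overline{x})$ gives $t \models f|\overline{x}$, and combined with $t \models f|x$ this yields $t \models f|\overline{x} \land f|x$; moreover, since any subterm $t' \subsetneq t$ entailing $f|\overline{x} \land f|x$ would in particular entail $f|\overline{x}$ and thereby violate primality of $t$ for $f|\overline{x}$, the term $t$ is in fact prime for the conjunction, so $t \in \IP(f|\overline{x} \land f|x) \subseteq \IP(f)$, the final inclusion being one already shown in the proof of Proposition~\ref{prop:decision_node_IP}.

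The second step is injectivity, and the clean way to see it rests on one observation: $x \notin \var(t)$ for every $t \in \IP(f|\overline{x})$, because $x \notin \var(f|\overline{x})$. Hence the literal $\overline{x}$ occurs in $\psi(t)$ exactly in the first case and never in the second, so images arising from different cases can never coincide. Within the first case, $t_1 \land \overline{x} = t_2 \land \overline{x}$ together with $x \notin \var(t_1), \var(t_2)$ recovers $t_1 = t_2$ by deleting the common literal $\overline{x}$; within the second case $\psi$ is the identity. Thus $\psi$ is injective, giving $|\IP(f|\overline{x})| \leq |\IP(f)|$, and the symmetric argument yields $|\IP(f|x)| \leq |\IP(f)|$, so taking the maximum finishes the proof.

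I expect the only delicate point to be the well-definedness check in the second case, namely confirming that a prime implicant of $f|\overline{x}$ which happens to entail $f|x$ is genuinely a \emph{prime} implicant of $f|\overline{x} \land f|x$ rather than merely an implicant; this is exactly where one uses that an implicant of a conjunction is an implicant of each conjunct. As an alternative route, one could avoid the injection altogether and argue via the conditioning map $t \mapsto t|\overline{x}$ on $\IP(f)$, whose image covers $\IP(f|\overline{x})$ after restricting to $\models$-maximal terms, yielding $|\IP(f|\overline{x})| \leq |\{t|\overline{x} \mid t \in \IP(f)\}| \leq |\IP(f)|$; this is the surjection-style bound underlying the conditioning procedure of~\cite{DarwicheM02}.
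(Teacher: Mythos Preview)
Your proof is correct. The injection $\psi$ is well-defined (Proposition~\ref{prop:propagate_IP} already gives both cases directly, so you could even shorten the second-case check), and the injectivity argument via the presence or absence of $\overline{x}$ is clean.

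The paper takes a different, shorter route: it invokes the known identity $\IP(f|\ell) = \max(\{t|\ell \mid t \in \IP(f)\},\models)$ from~\cite{DarwicheM02} and reads off $|\IP(f|\ell)| \leq |\{t|\ell \mid t \in \IP(f)\}| \leq |\IP(f)|$. This is precisely the ``surjection-style'' alternative you sketch in your final paragraph. The trade-off is that the paper's argument is a two-line application of an external result, whereas your main argument is self-contained within the paper, relying only on Proposition~\ref{prop:decision_node_IP} (or~\ref{prop:propagate_IP}) already proved above. Your injection is essentially the inverse of the conditioning map the paper uses, so the two approaches are dual rather than unrelated.
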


\begin{proposition}
Enum$\cdot\IP$ from dec-DNNF is in {\sf OutputP}.
\begin{proof}
Given a dec-DNNF circuit $\Sigma$, we construct $\IP(\Sigma)$ by visiting every node $v$ of $\Sigma$ in a bottom-up order while computing $\IP(\Sigma_v)$. We start from the leaves. If $v$ is labelled by a literal $\ell$ then $\IP(\Sigma_v) = \{\ell\}$, if it is labelled by $0$ then $\IP(\Sigma_v) = \emptyset$, and if it is labelled by $1$ then $\IP(\Sigma_v) = \{t_\emptyset\}$ where $t_\emptyset$ is the term containing no literal.

Now let $v$ be an internal node of $\Sigma$ and let $u$ and $w$ be its children. Since we visit the nodes in depth-first order, $\IP(\Sigma_u)$ and $\IP(\Sigma_w)$ have already been computed. If $v$ is a decomposable $\land$-node then using Proposition~\ref{prop:decomposable_and_IP} we compute $\IP(\Sigma_v) = \{t_u \land t_w \mid t_u \in \IP(\Sigma_u), t_w \in \IP(\Sigma_w)\}$ in time $O(|\IP(\Sigma_u)|\times|\IP(\Sigma_w)|) = O(|\IP(\Sigma)|^2)$. Observe that $|\IP(\Sigma_v)| \geq \max(|\IP(\Sigma_u)|,|\IP(\Sigma_w)|)$.

If $v$ is a decision node for the variable $x$ whose 0- and 1-children are $u$ and $w$, respectively, then we compute $\IP(\Sigma_v)$ from $\IP(\Sigma_{u})$ and $\IP(\Sigma_{w})$ using Proposition~\ref{prop:decision_node_IP}. Since dec-DNNFs support linear-time implicant check, $\{t \land \overline{x} \mid t \in \IP(\Sigma_u), t \not\models \Sigma_w\}$ and $\{t \land x \mid t \in \IP(\Sigma_w), t \not\models \Sigma_u\}$ are build in time polynomial in $|\Sigma|+|\IP(\Sigma_u)|+|\IP(\Sigma_w)|$. As for $\IP(\Sigma_u \land \Sigma_w)$, we build it in time polynomial in $|\IP(\Sigma_u)|+|\IP(\Sigma_w)|$ using Proposition~\ref{prop:decomposable_and_IP}. Observe that, by Proposition~\ref{prop:number_of_IP}, there is again $|\IP(\Sigma_v)| \geq \max(|\IP(\Sigma_u)|,|\IP(\Sigma_w)|)$.

When we reach the root node $r$ we compute $\IP(\Sigma_{r}) = \IP(\Sigma)$. Since for every node $v$ with children $u$ and $w$ we have that $|\IP(\Sigma_v)| \geq \max(|\IP(\Sigma_u)|,|\IP(\Sigma_w)|)$, we also have that $|\IP(\Sigma_v)| \leq |\IP(\Sigma)|$ for every $v$. We build $\IP(\Sigma_v)$ only once and the time spent on each node $v$ to build $\IP(\Sigma_v)$ given $\IP(\Sigma_u)$ and $\IP(\Sigma_w)$ is polynomial in $|\Sigma| + |\IP(\Sigma)|$. Summing over all nodes we get that the time needed to build $\IP(\Sigma)$ is also polynomial in $|\Sigma|+|\IP(\Sigma)|$.
\end{proof}
\end{proposition}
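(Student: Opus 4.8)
The plan is to compute $\IP(\Sigma)$ by a single bottom-up traversal of $\Sigma$ that caches at every node $v$ the set $\IP(\Sigma_v)$, and then to bound the total work by a polynomial in $|\Sigma| + |\IP(\Sigma)|$ by exhibiting a ``no waste'' invariant guaranteeing that none of the cached sets ever exceeds the size of the final output. This is exactly the output-polynomial behaviour demanded by {\sf OutputP}.

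First I would dispatch the leaves: a leaf labelled by a literal $\ell$ has $\IP(\Sigma_v)=\{\ell\}$, a leaf labelled $1$ has $\IP(\Sigma_v)=\{t_\emptyset\}$, and, since $\Sigma$ is assumed reduced, a leaf labelled $0$ has $\IP(\Sigma_v)=\emptyset$. For an internal node $v$ with children $u,w$ whose sets are already available there are two cases. If $v$ is an $\land$-node, decomposability gives $\var(\Sigma_u)\cap\var(\Sigma_w)=\emptyset$, so by Proposition~\ref{prop:decomposable_and_IP} we have $\IP(\Sigma_v)=\{t_u\land t_w\mid t_u\in\IP(\Sigma_u),\,t_w\in\IP(\Sigma_w)\}$, computable in time $O(|\IP(\Sigma_u)|\cdot|\IP(\Sigma_w)|)$. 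If $v$ is a decision node on $x$, Proposition~\ref{prop:decision_node_IP} writes $\IP(\Sigma_v)$ as the union of $\{t\land\overline{x}\mid t\in\IP(\Sigma_u),\,t\not\models\Sigma_w\}$, the symmetric set, and $\IP(\Sigma_u\land\Sigma_w)$; the first two are built using the linear-time implicant check on dec-DNNF, and $\IP(\Sigma_u\land\Sigma_w)$ is obtained from $\IP(\Sigma_u)$ and $\IP(\Sigma_w)$ in polynomial time by the \emph{general} form of Proposition~\ref{prop:decomposable_and_IP} (with the $\max(\cdot,\models)$, since here $\Sigma_u$ and $\Sigma_w$ range over the same variables). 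In both cases the work at $v$ is polynomial in $|\Sigma|+|\IP(\Sigma_u)|+|\IP(\Sigma_w)|$.

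The crux is the invariant $|\IP(\Sigma_v)|\geq\max(|\IP(\Sigma_u)|,|\IP(\Sigma_w)|)$ at every internal node: for $\land$-nodes because reducedness forces $\Sigma_u,\Sigma_w$ to be satisfiable, hence $\IP(\Sigma_u),\IP(\Sigma_w)\neq\emptyset$, so the set of pairwise conjunctions is at least as large as each factor; for decision nodes it is precisely Proposition~\ref{prop:number_of_IP}. Propagating this up the DAG yields $|\IP(\Sigma_v)|\leq|\IP(\Sigma)|$ for every node $v$. Consequently each of the $O(|\Sigma|)$ nodes is processed once, at cost polynomial in $|\Sigma|+|\IP(\Sigma)|$, and summing gives total time polynomial in $|\Sigma|+|\IP(\Sigma)|$.

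I expect the main obstacle to be exactly this size-control step: a naive bottom-up enumeration could a priori create intermediate sets vastly larger than $\IP(\Sigma)$, destroying the output-polynomial bound, and ruling that out is what Proposition~\ref{prop:number_of_IP} (conditioning never increases the number of prime implicants) is for. A secondary point needing care is the decision-node case, where one cannot merely take the pairwise conjunctions of $\IP(\Sigma_u)$ and $\IP(\Sigma_w)$ but must remove subsumed terms and must correctly decide, for each prime implicant of a child, whether the branching literal $\overline{x}$ (resp. $x$) is to be kept; Proposition~\ref{prop:decision_node_IP} together with linear-time implicant testing on dec-DNNF handles this cleanly.
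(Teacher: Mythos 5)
Your proposal is correct and follows essentially the same route as the paper: a bottom-up traversal caching $\IP(\Sigma_v)$ at every node, with the $\land$-node case handled by Proposition~\ref{prop:decomposable_and_IP}, the decision-node case by Proposition~\ref{prop:decision_node_IP} plus linear-time implicant checks, and the output-polynomial bound secured by the invariant $|\IP(\Sigma_v)|\geq\max(|\IP(\Sigma_u)|,|\IP(\Sigma_w)|)$ (Proposition~\ref{prop:number_of_IP} for decision nodes, nonemptiness under reducedness for $\land$-nodes). Your explicit justification of the invariant at $\land$-nodes is a slightly more careful spelling-out of a step the paper only states, but the argument is the same.
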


\begin{proposition}
Let $\Sigma$ be a dec-DNNF circuit and let $S \subseteq \IP(\Sigma)$. If the root of $\Sigma$ is an $\land$-node, let $u$ and $w$ be its children and let $S_u := \{t_{\var(\Sigma_u)} \mid t \in S\}$ and $S_w := \{t_{\var(\Sigma_w)} \mid t \in S\}$. Then $S_u \subseteq \IP(\Sigma_u)$ and $S_w \subseteq \IP(\Sigma_w)$ hold, and
\[
\begin{aligned}
S = \IP(\Sigma) &\text{ iff } S_u = \IP(\Sigma_u) \text{ and } S_w = \IP(\Sigma_w)
\\&\text{ and } S = \{t_u \land t_v \mid t_u \in S_u, t_v \in S_v\}.
\end{aligned}
\]
\begin{proof}
If $S = \IP(\Sigma)$ then by Proposition~\ref{prop:decomposable_and_IP} $S = \{t_u \land t_v \mid t_u \in \IP(\Sigma_u), t_v \in \IP(\Sigma_v)\}$ so $\IP(\Sigma_u) = \{t_{\var(\Sigma_u)} \mid t \in S\} = S_u$ and $\IP(\Sigma_v) = \{t_{\var(\Sigma_v)} \mid t \in S\} = S_v$ and thus $S = \{t_u \land t_v \mid t_u \in S_u, t_v \in S_v\}$.

If $S \neq \IP(\Sigma)$, let $t^* \in \IP(\Sigma)\setminus S$ and let $t^*_u = t^*_{\var(\Sigma_u)}$ and $t^*_v = t^*_{\var(\Sigma_v)}$. By Proposition~\ref{prop:decomposable_and_IP}, $t^*_u$ (resp. $t^*_v$) is in $\IP(\Sigma_u)$ (resp. $\IP(\Sigma_v)$), so either $t^*_u \not\in S_u$ or $t^*_v \not\in S_v$and we are done, or $t^*_u \in S_u$ and $t^*_v \in S_v$ in which case $ \{t_u \land t_v \mid t_u \in S_u, t_v \in S_v\} \neq S$ since $t^*_u \land t^*_v$ is in $\{t_u \land t_v \mid t_u \in S_u, t_v \in S_v\}$ but not in $S$.
\end{proof}
\end{proposition}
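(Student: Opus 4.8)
\medskip
\noindent\textbf{Proof plan.} The whole statement can be read off from Proposition~\ref{prop:decomposable_and_IP}. First I would set up notation: let $f_u$ and $f_w$ be the functions computed by $\Sigma_u$ and $\Sigma_w$. Since the root is an $\land$-node of a dec-DNNF circuit, it is decomposable, so $\var(f_u)\cap\var(f_w)=\emptyset$ and $\Sigma$ computes $f_u\land f_w$. Applying the disjoint-variables half of Proposition~\ref{prop:decomposable_and_IP} gives $\IP(\Sigma)=\{t_u\land t_w\mid t_u\in\IP(\Sigma_u),\,t_w\in\IP(\Sigma_w)\}$; moreover, because the variable sets are disjoint and a prime implicant of $f_u\land f_w$ mentions only variables of that function, the factorisation of a $t\in\IP(\Sigma)$ into a prime implicant of $\Sigma_u$ and one of $\Sigma_w$ is exactly the pair of restrictions $t_{\var(\Sigma_u)}$, $t_{\var(\Sigma_w)}$. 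The inclusions $S_u\subseteq\IP(\Sigma_u)$ and $S_w\subseteq\IP(\Sigma_w)$ then drop out, since every $t\in S\subseteq\IP(\Sigma)$ restricts to members of $\IP(\Sigma_u)$ and $\IP(\Sigma_w)$.

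For the equivalence I would treat the two directions separately. The ``if'' direction is a direct substitution: $S_u=\IP(\Sigma_u)$, $S_w=\IP(\Sigma_w)$ and $S=\{t_u\land t_w\mid t_u\in S_u,\,t_w\in S_w\}$ immediately give $S=\{t_u\land t_w\mid t_u\in\IP(\Sigma_u),\,t_w\in\IP(\Sigma_w)\}=\IP(\Sigma)$. For the ``only if'' direction, assuming $S=\IP(\Sigma)$, I would project the identity for $\IP(\Sigma)$ onto $\var(\Sigma_u)$ to obtain $S_u=\IP(\Sigma_u)$, symmetrically $S_w=\IP(\Sigma_w)$, and the third clause is then nothing but a restatement of Proposition~\ref{prop:decomposable_and_IP}. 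A cleaner alternative for this direction is the contrapositive: given $t^*\in\IP(\Sigma)\setminus S$, write $t^*=t^*_u\land t^*_w$; either one of the restrictions already escapes $S_u$ or $S_w$, or both lie inside and then $t^*_u\land t^*_w$ witnesses $\{t_u\land t_w\mid t_u\in S_u,\,t_w\in S_w\}\neq S$. (Note that this last clause is a genuine constraint: an arbitrary subset of $\IP(\Sigma)$ need not equal the ``product'' of its two projections.)

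The only delicate point, and the one I would be most careful about, is that the projection used in the ``only if'' direction is \emph{onto}: restricting $\IP(\Sigma)$ to $\var(\Sigma_u)$ must recover all of $\IP(\Sigma_u)$, not a proper subset. This requires $\IP(\Sigma_w)\neq\emptyset$, i.e.\ $\Sigma_w$ satisfiable, so that each $t_u\in\IP(\Sigma_u)$ is genuinely the $u$-part of some prime implicant of $\Sigma$; it is here that the standing convention that our dec-DNNF circuits are reduced is used (an $\land$-node with a $0$-child would compute $0$ and be collapsed to a leaf, so both children are satisfiable). Everything else is routine bookkeeping around Proposition~\ref{prop:decomposable_and_IP}.
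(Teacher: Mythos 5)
Your proof is correct and follows essentially the same route as the paper's: both directions rest on the disjoint-variables case of Proposition~\ref{prop:decomposable_and_IP}, and your contrapositive argument with the witness $t^*_u \land t^*_w$ is exactly the paper's treatment of the harder direction. Your additional remark that the projection onto $\var(\Sigma_u)$ being \emph{onto} $\IP(\Sigma_u)$ needs $\IP(\Sigma_w)\neq\emptyset$ --- guaranteed by the standing convention that the circuits are reduced --- is a correct point that the paper's proof leaves implicit rather than a divergence in approach.
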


\begin{proposition}
Let $\Sigma$ be a dec-DNNF circuit whose root is a decision node labelled by $x$ and whose 0- and 1-child are $u$ and $w$. Given $S \subseteq \IP(\Sigma)$, let $S_u = \{t \mid t \land \overline{x} \in S\}  \cup (S \cap \IP(\Sigma_{u}))$, $S_w = \{t \mid t \land x \in S\} \cup (S \cap \IP(\Sigma_{w}))$, $S' = \{t \mid t \in S, var(x) \not\in var(t)\}$. Then $S_u \subseteq \IP(\Sigma_{u})$ and $S_w \subseteq \IP(\Sigma_{w})$ hold, and 
\[\begin{aligned}
S = \IP(\Sigma) \text{ iff } & S_u = \IP(\Sigma_{u}) \\ \text{ and } & S_w = \IP(\Sigma_{w}) \\ \text{ and } & S' = \max(\{t_u \land t_w \mid t_u \in S_u, \, t_w\in S_w \}, \models).
\end{aligned}
\]
\begin{proof}
For convenience we denote $S^* = \max(\{t_u \land t_w \mid t_u \in S_u, \, t_w \in S_w \}, \models)$.

First we prove that $S_w \subseteq \IP(\Sigma_w)$ (the proof that $S_u \subseteq \IP(\Sigma_u)$ is analogous). Clearly $S \cap \IP(\Sigma_w) \subseteq \IP(\Sigma_w)$ so we just need to show that $\{t \mid t \land x \in S\} \subseteq \IP(\Sigma_w)$. Let $t\land x$ be in $S$, then $t \land x \models \Sigma$. $t$ is an implicant of $\Sigma_w$ since $t\equiv (t \land x)|x \models \Sigma|x \equiv \Sigma_w$. Now if there exists $t' \neq t$ such that $t \models t' \models \Sigma_w$ then $t \land x \models t'\land x \models \Sigma$ holds, and therefore $t \land x$ is not a prime implicant of $\Sigma$, a contradiction. So $\{t \mid t \land x \in S\} \subseteq \IP(\Sigma_w)$.

Second we prove that $S_w  \neq \IP(\Sigma_w)$ implies $S \neq \IP(\Sigma)$ (the proof is similar for $S_u \neq \IP(\Sigma_u)$). Assume there exists $t \in \IP(\Sigma_w) \setminus S_w$. If $t \models \Sigma_u$ then $t$ is in $\IP(\Sigma)$ by Proposition~\ref{prop:propagate_IP}. But $t$ cannot be in $S$ for otherwise it would be in $S \cap \IP(\Sigma_w) \subseteq S_w$. This shows that $S \neq \IP(\Sigma)$ in this case. If however $t \not\models \Sigma_u$ then $t \land x$ is in $\IP(\Sigma)$ by Proposition~\ref{prop:propagate_IP}. But $t \land x$ cannot be in $S$ for otherwise $t$ would be in $\{\tau \mid \tau \land x \in S\} \subseteq S_w$. So again $S \neq \IP(\Sigma)$. 

Now we prove that $(S' \neq S^*) \Rightarrow (S \neq \IP(\Sigma))$. We may assume that $S_u = \IP(\Sigma_u)$ and $S_w = \IP(\Sigma_w)$, otherwise $S \neq \IP(\Sigma)$ holds regardless of $S' = S^*$. Since $\Sigma_u \equiv \Sigma|\overline{x}$ and $\Sigma_w = \Sigma|x$ we have that $S^* = \max(\{t_u \land t_w \mid t_u \in \IP(\Sigma|\overline{x}), \, t_w \in \IP(\Sigma|x) \}, \models) = \IP(\Sigma|\overline{x} \land \Sigma|x)$ by Proposition~\ref{prop:decomposable_and_IP}. Now $S = S' \cup \{t \mid t\in S, \overline{x} \in t\} \cup \{t \mid t\in S, x\in t\}$ so, by Proposition~\ref{prop:decision_node_IP}, if $S = \IP(\Sigma)$ then $S'$ corresponds to the set $\IP(\Sigma|\overline{x} \land \Sigma|x)$. So 
$$
(S' \neq S^*) \Rightarrow (S' \neq \IP(\Sigma|\overline{x} \land \Sigma|x)) \Rightarrow  (S \neq \IP(\Sigma))
$$

Now for the other direction, assume there exists $t \in \IP(\Sigma) \setminus S$. First suppose that $t = t' \land x$. On the one hand $t'$ is in $\IP(\Sigma|x) = \IP(\Sigma_w)$. On the other hand $t'$ is clearly not in $\{\tau \mid \tau \land x \in S\}$, and since it is not an implicant of $\Sigma$, it is not in $S \cap \IP(\Sigma_w)$ either. This means that $t' \in \IP(\Sigma_w) \setminus S_w$ and therefore $S_w \neq \IP(\Sigma_w)$. In the case where $t = t' \land \overline{x}$, a similar proof gives that $S_u \neq \IP(\Sigma_u)$. It remains to consider the situation where neither $x$ nor $\overline{x}$ is in $t$. By Proposition~\ref{prop:decision_node_IP}, $t$ is contained in $\IP(\Sigma|\overline{x} \land \Sigma|x)$. As before, we can assume that $S_u = \IP(\Sigma_u)$ and $S_w = \IP(\Sigma_w)$. We have already explained that this assumption yields $S^* = \IP(\Sigma|\overline{x} \land \Sigma|x)$. Since $t$ is not in $S$ and $\overline{x} \not\in t$ and $x\not\in t$, we have that $t \not\in S'$. So $t \in S^* \setminus S'$, and therefore $S \neq S^*$.  
\end{proof}
\end{proposition}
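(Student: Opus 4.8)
The plan is to first settle the two inclusions and then prove the biconditional by matching $S$ against the three-way split of $\IP(\Sigma)$ delivered by Proposition~\ref{prop:decision_node_IP}, using throughout that $\Sigma_u \equiv \Sigma|\overline{x}$ and $\Sigma_w \equiv \Sigma|x$ since $u$ and $w$ are the $0$- and $1$-children.

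First I would prove $S_w \subseteq \IP(\Sigma_w)$, the inclusion $S_u \subseteq \IP(\Sigma_u)$ being symmetric. The part $S \cap \IP(\Sigma_w)$ is immediate. For a term $t$ with $t \land x \in S \subseteq \IP(\Sigma)$, conditioning $t \land x \models \Sigma$ on $x$ gives $t \models \Sigma|x = \Sigma_w$; and any proper subterm $t'$ of $t$ entailing $\Sigma_w$ would make $t' \land x$ a strict subterm of $t \land x$ entailing $\Sigma$, contradicting the primality of $t \land x$. I record for later use that every term of $\IP(\Sigma_u)$ and $\IP(\Sigma_w)$ is $x$-free, since $x \notin \var(\Sigma_u) \cup \var(\Sigma_w)$.

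Next I would fix the partition of $\IP(\Sigma)$ from Proposition~\ref{prop:decision_node_IP} into the $\overline{x}$-terms $A = \{t \land \overline{x} \mid t \in \IP(\Sigma_u),\, t \not\models \Sigma_w\}$, the $x$-terms $B = \{t \land x \mid t \in \IP(\Sigma_w),\, t \not\models \Sigma_u\}$, and the $x$-free terms, which form exactly $\IP(\Sigma_u \land \Sigma_w)$; these three classes are pairwise disjoint. Because $S \subseteq \IP(\Sigma)$, the terms of $S$ split along the same literal on $x$, the $x$-free part being precisely $S'$. For the ``$\Rightarrow$'' direction I would argue the contrapositive for the side conditions: if $t \in \IP(\Sigma_w) \setminus S_w$, then by Proposition~\ref{prop:propagate_IP} either $t \models \Sigma_u$ and $t \in \IP(\Sigma)$, or $t \not\models \Sigma_u$ and $t \land x \in \IP(\Sigma)$; in both cases the produced prime implicant of $\Sigma$ cannot belong to $S$, since otherwise $t$ would have been captured in $S_w$ by its definition. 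Hence $S_w \neq \IP(\Sigma_w)$ forces $S \neq \IP(\Sigma)$, and symmetrically for $S_u$.

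Finally, assuming $S_u = \IP(\Sigma_u)$ and $S_w = \IP(\Sigma_w)$, Proposition~\ref{prop:decomposable_and_IP} gives $S^* := \max(\{t_u \land t_w \mid t_u \in S_u,\, t_w \in S_w\}, \models) = \IP(\Sigma|\overline{x} \land \Sigma|x)$, i.e.\ exactly the $x$-free class of $\IP(\Sigma)$. The remaining, and most delicate, step is to show that under this assumption the $\overline{x}$- and $x$-classes of $S$ already coincide with $A$ and $B$, so that $S'$ is the only class that can still differ. I would verify this by the bookkeeping: if $t_0 \land \overline{x} \in A \setminus S$, then $t_0 \in \IP(\Sigma_u) = S_u = \{t \mid t \land \overline{x} \in S\} \cup (S \cap \IP(\Sigma_u))$; since $t_0$ is $x$-free with $t_0 \not\models \Sigma_w$, it cannot lie in $\IP(\Sigma) = A \sqcup B \sqcup \IP(\Sigma_u \land \Sigma_w)$ (it carries no $x$-literal, and membership in $\IP(\Sigma_u \land \Sigma_w)$ would require $t_0 \models \Sigma_w$), hence $t_0 \notin S \cap \IP(\Sigma_u)$ and we must have $t_0 \land \overline{x} \in S$, a contradiction. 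Thus $A$ equals the $\overline{x}$-class of $S$, and symmetrically for $B$; therefore $S = A \cup B \cup S'$ equals $\IP(\Sigma) = A \cup B \cup \IP(\Sigma_u \land \Sigma_w)$ exactly when $S' = \IP(\Sigma_u \land \Sigma_w) = S^*$. Combining this with the side-condition implication yields the stated biconditional. The main obstacle is precisely this last disentangling of $S_u$ into its stripped-$\overline{x}$ part and its $x$-free part that happens to be prime for $\Sigma_u$; the subsumption remark following Proposition~\ref{prop:decision_node_IP} (a prime implicant of $\Sigma_u$ entails $\Sigma_w$ iff it is an $x$-free prime implicant of $\Sigma$) is what makes the two pieces fit back together without overlap.
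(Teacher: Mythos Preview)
Your proposal is correct and follows essentially the same approach as the paper: both use Proposition~\ref{prop:decision_node_IP} to split $\IP(\Sigma)$ into its $\overline{x}$-, $x$-, and $x$-free parts, use Proposition~\ref{prop:propagate_IP} to show that a missing element of $\IP(\Sigma_w)$ yields a missing prime implicant of $\Sigma$, and use Proposition~\ref{prop:decomposable_and_IP} to identify $S^*$ with $\IP(\Sigma|\overline{x}\land\Sigma|x)$ once the side conditions hold. The only organizational difference is that the paper handles the two directions of the biconditional separately (showing in the ``other direction'' that a missing $t=t'\land x$ forces $t'\in\IP(\Sigma_w)\setminus S_w$), whereas you prove once and for all that under the side assumptions the $\overline{x}$- and $x$-classes of $S$ already equal $A$ and $B$, reducing everything to $S'=S^*$; this is a mild repackaging of the same bookkeeping, not a different argument.
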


\begin{algorithm}[t]
\SetAlgoLined
\nonl\textbf{Promise}: $\Sigma$ is satisfiable\\
Find a satisfying assignment $a$ of $\Sigma$
\\
Let $t$ be the corresponding term: $t = \bigwedge_{a(x) = 1} x \land \bigwedge_{a(x) = 0} \overline{x} $
\\
\While{\textup{there is} $\ell \in t$ \textup{such that} $t - \ell \models \Sigma$}{
Remove $\ell$ from $t$
}
return $t$
\caption{\texttt{GenerateIP}$(\Sigma)$}
\end{algorithm}

\begin{proposition} Given a reduced dec-DNNF circuit $\Sigma$ and $S \subseteq \IP(\Sigma)$, 
\texttt{\textup{MissingIP}}$(\Sigma,S,\emptyset)$ runs in time $O(poly(|S|+|\Sigma|))$, and it returns \textit{false} if and only if $S = \IP(\Sigma)$.
\begin{proof}
\textbf{Soundness:} We prove soundness by induction on the depth of $\Sigma$ using Propositions~\ref{prop:decomposable_and_IP_check} and~\ref{prop:decision_node_IP_check}. 
 
If $\Sigma$ has depth 1 then it is a single node $v$ labelled by 0, 1 or a literal $\ell$. The promise states that $S \subseteq \IP(\Sigma)$. If $v$ is labelled by 0, then $S$ must be $\emptyset$ and the algorithm returns \textit{false} at line 4. If $v$ is is labelled by 1 then either $S = \{t_\emptyset\} = \IP(1)$ and the algoritm returns \textit{false} at line 24, or $S = \emptyset$ and the algorithm returns $(1,(v))$ at line 5. Finally if $v$ is labelled by $\ell$, either $S = \{\ell\} = \IP(\ell)$ and the algorithm returns \textit{false} at line 24, or $S = \emptyset$ and the algorithm returns $(\ell,(v))$ at line 5. In all cases the algorithm returns \textit{false} if and only if $S = \IP(\Sigma)$, and it sets $\lambda(v)$ to $|\IP(\Sigma_v)|$ before returning \textit{false}.

Now if $\Sigma$ has depth more than 1, its root node $v$ is either a decomposable $\land$-node or a decision node. Since $\Sigma$ is reduced, it cannot be unsatisfiable, so if $S = \emptyset$ the algorithm returns $(t,(v))$ with $t \in \IP(\Sigma)$ at line 5. From now on we suppose that $S \neq \emptyset$. If $v$ is a decomposable $\land$-node with children $u$ and $w$. By Proposition~\ref{prop:decomposable_and_IP_check}, since we are promised that $S \subseteq \IP(\Sigma)$, we have that $\IP(\Sigma) = S$ if and only if  $\IP(\Sigma_u) = S_u$ and $\IP(\Sigma_w) = S_w$ and we can construct $S$ from $S_u$ and $S_w$ as shown in Proposition~\ref{prop:decomposable_and_IP_check} ($S_u$ and $S_w$ defined as in Proposition~\ref{prop:decomposable_and_IP_check}). By induction $\IP(\Sigma_u) \neq S_u$ or $\IP(\Sigma_w) \neq S_w$ if and only if the output of $\texttt{MissingIP}(\Sigma_u,S_u,\ast)$ or $\texttt{MissingIP}(\Sigma_w,S_w,\ast)$ is distinct from \textit{false}. So if $\IP(\Sigma_u) \neq S_u$ or $\IP(\Sigma_w) \neq S_w$, a return statement occurs line 9 or 12. Otherwise, it possible that $\IP(\Sigma_u) = S_u$ or $\IP(\Sigma_w) = S_w$ but that $S$ can not be constructed from $S_u$ and $S_w$, then the return statement of line 14 is triggerd. So if $S \neq \IP(\Sigma)$, then lines 8-14 return something that is not \textit{false}. And if $S = \IP(\Sigma)$, then no return call is triggered lines 8-14 and the algorithm returns \textit{false} at line 24 after setting $\lambda(v)$ to $|S|=|\IP(\Sigma)|=|\IP(\Sigma_v)|$.

If $v$ is a decision node for variable $x$ with 0-child $u$ and 1-child $w$. By Proposition~\ref{prop:decision_node_IP_check}, since we are promised that $S \subseteq \IP(\Sigma)$, we have that $\IP(\Sigma) = S$ if and only if  $\IP(\Sigma_u) = S_u$ and  $\IP(\Sigma_w) = S_w$ and $S' = S^*$ with $S_u$, $S_w$ and $S'$ defined as in Proposition~\ref{prop:decision_node_IP_check} and $S^*$ defined line 21.  By induction $\IP(\Sigma_u) = S_u$ and  $\IP(\Sigma_w) = S_w$ if and only if the output of $\texttt{MissingIP}(\Sigma_u,S_u,\ast)$ or $\texttt{MissingIP}(\Sigma_w,S_w,\ast)$ is distinct from \textit{false}. So if $S \neq \IP(\Sigma)$, then lines 16-22 return something that is not \textit{false}. And if $S = \IP(\Sigma)$, then no return call is triggered lines 16-22 and the algorithm returns \textit{false} at line 24 after setting $\lambda(v)$ to $|S|=|\IP(\Sigma)|=|\IP(\Sigma_v)|$.

\medskip 
\noindent \textbf{Running time:} Consider the time spent in $\texttt{MissingIP}(\Sigma,S,P)$ before a return statement or a recursive call is triggered. The procedure may end at line 2 or 4 in $O(1)$ time. It can also end line 5, in which case it has to compute a prime implicant of $\Sigma$ using \texttt{GenerateIP}, which runs in time $O(poly(|\Sigma|))$. Now if the algorithm has not returned lines 2, 4 or 5, most of the running time is spent building sets of terms from $S$ lines 8, 13, 16 and 21. Building $S_u$ and $S_w$ line 8 only requires projecting the terms in $S$ onto $\var(\Sigma_u)$ and $\var(\Sigma_w)$, which takes time $O(|S|)$. Constructing the set $S^*$ at line 13 takes $O(|S_u|\times|S_w|) = O(|S|^2)$ time. At line 16, $S'$ can clearly be obtained in time $O(|S|)$ and $S_u$ and $S_w$ are obtained in time $O(poly(|S| + |\Sigma|))$ thanks to polynomial-time prime implicant check on dec-DNNF circuits. Finally the set $S^*$ at line 21 is constructed in $O(|S_u|\times|S_w|) = O(|S|^2)$ and compared to $S'$ in time $O(poly(|S|)$. So before a return statement or a recursive call is triggered, the algorithm spends $O(poly(|S| + |\Sigma|)$ time. One can observe that $|S_u|$, $|S_w|$ are fewer than $|S|$, so for every node $v$ in $\Sigma$, a call $\texttt{MissingIP}(\Sigma_v,S',*)$ takes $O(poly(|S| + |\Sigma|))$ time before triggering a return statement or a recursive call. Thanks to memoization -- implemented via $\lambda$ -- the $O(poly(|S| + |\Sigma|))$ time procedure is done only once per node. So the total running time of the algorithm is also in  $O(poly(|S| + |\Sigma|))$.
\end{proof}
\end{proposition}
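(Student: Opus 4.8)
The plan is to prove the two assertions separately: correctness (the call returns \textit{false} exactly when $S = \IP(\Sigma)$) and the running-time bound. The backbone of the correctness argument is a structural induction on the depth of $\Sigma$, with the two check-propositions doing the heavy lifting: the control flow of \texttt{MissingIP} is designed to test precisely the conjunction of conditions appearing in the ``iff'' of whichever proposition applies at the root (Proposition~\ref{prop:decomposable_and_IP_check} for an $\land$-node, Proposition~\ref{prop:decision_node_IP_check} for a decision node). Throughout I would maintain the invariant that whenever $\lambda(v) \geq 0$ it equals $|\IP(\Sigma_v)|$. Since $\lambda(v)$ is assigned only immediately before a \textit{false} return, at a point where the induction has just certified that the current set equals $\IP(\Sigma_v)$, the invariant holds; it makes the memoization test sound, because $S \subseteq \IP(\Sigma_v)$ together with $|S| = \lambda(v) = |\IP(\Sigma_v)|$ forces $S = \IP(\Sigma_v)$, so short-circuiting to \textit{false} is correct.

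For the induction itself, the base case treats a single leaf labelled $0$, $1$, or a literal $\ell$, where $\IP(\Sigma_v)$ is $\emptyset$, $\{t_\emptyset\}$, or $\{\ell\}$; one checks directly that \textit{false} is returned exactly when $S$ equals this set, the only alternative being $S = \emptyset$, which triggers a fresh-implicant return via \texttt{GenerateIP}. In the inductive step I would first dispatch contradiction (c1): as $\Sigma$ is reduced and of depth $>1$ it is satisfiable, so $S = \emptyset \neq \IP(\Sigma)$ and the algorithm correctly returns a new prime implicant. Assuming $S \neq \emptyset$ and an $\land$-root, I build $S_u,S_w$ as in Proposition~\ref{prop:decomposable_and_IP_check}, whose statement guarantees the recursive promise $S_u \subseteq \IP(\Sigma_u)$, $S_w \subseteq \IP(\Sigma_w)$; the induction hypothesis then yields that the two recursive calls return \textit{false} iff $S_u = \IP(\Sigma_u)$ and $S_w = \IP(\Sigma_w)$, and the residual equality test checks the remaining clause $S = \{t_u \land t_w\}$ (contradiction (c3)). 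By the proposition these three conditions are jointly equivalent to $S = \IP(\Sigma)$. The decision-node case is identical, using Proposition~\ref{prop:decision_node_IP_check} and the test $S' = S^*$ (contradiction (c2)).

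For the time bound I would argue in two movements. First, the work inside a single invocation before it recurses or returns is $O(\poly(|S| + |\Sigma|))$: the set manipulations are at worst quadratic in $|S|$, and forming $S_u,S_w$ at a decision node (as well as \texttt{GenerateIP}) needs only polynomially many implicant tests, each linear-time on dec-DNNF. Crucially, the projections and selections defining the children's sets satisfy $|S_u|, |S_w| \leq |S|$, so the argument size never grows as the recursion descends and every invocation costs $O(\poly(|S| + |\Sigma|))$ in terms of the \emph{original} $|S|$. Second, I bound the number of invocations that perform this full expansion. Here memoization is essential: once a call on $\Sigma_v$ returns \textit{false} it records $\lambda(v) = |\IP(\Sigma_v)|$, so any later arrival at $v$ with a set of the same cardinality is short-circuited, while any later arrival with a strictly smaller set has $S \subsetneq \IP(\Sigma_v)$ and hence harbours a contradiction whose discovery returns a non-\textit{false} value that propagates up through the ``if $r \neq \textit{false}$ then return $r$'' guards and halts the whole procedure. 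Thus each node is fully expanded at most once, giving $O(|\Sigma|)$ expensive invocations and an overall $O(\poly(|S| + |\Sigma|))$ bound.

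The step I expect to be the genuine obstacle is this last counting argument, precisely because $\Sigma$ is a DAG rather than a tree: a node may be reached along many distinct ancestor paths, so without the $\lambda$-invariant the number of recursive calls could blow up. Making the bound rigorous requires pinning down that the \textit{false}-returning expansions inject into the distinct nodes of $\Sigma$ through $\lambda$, and that the at-most-one expansion returning a non-\textit{false} value immediately unwinds the recursion, contributing only a single root-to-contradiction path of extra work. Everything else — the leaf base cases, the matching of the algorithm's checks to the clauses of the two check-propositions, and the per-invocation polynomial cost — is routine once the $\lambda$-invariant and the non-increasing set sizes are in place.
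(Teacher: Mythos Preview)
Your proposal is correct and follows essentially the same approach as the paper: structural induction on depth for correctness (leaf base cases, then $\land$-node and decision-node inductive steps driven by Propositions~\ref{prop:decomposable_and_IP_check} and~\ref{prop:decision_node_IP_check}), and a per-invocation polynomial cost combined with a memoization-based bound on the number of expensive invocations for the running time. Your treatment of the $\lambda$-invariant and the counting argument is in fact more explicit than the paper's, which simply asserts that memoization ensures the expensive procedure ``is done only once per node''; your observation that a second non-short-circuited visit necessarily returns non-\textit{false} and unwinds the recursion along a single root-to-contradiction path is exactly the missing justification for that claim.
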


\begin{proposition}
Let $\Sigma$ be a reduced dec-DNNF circuit and let $S \subseteq \IP(\Sigma)$. \texttt{\textup{AnotherIP}}$(\Sigma,S)$ runs in time $O(poly(|S|+|\Sigma|))$. It  returns $\textit{false}$ if $S = \IP(\Sigma)$, otherwise it returns a prime implicant of $\Sigma$ that does not belong to $S$.
\begin{proof}
\textbf{Soundness.} First \texttt{\textup{AnotherIP}}$(\Sigma,S)$ calls \texttt{\textup{MissingIP}}$(\Sigma,S,\emptyset)$. Soundness of \texttt{MissingIP} has been established in Proposition~\ref{prop:missingIP_is_sound} so if $S = \IP(\Sigma)$ then \texttt{\textup{MissingIP}}$(\Sigma,S,\emptyset)$ returns \textit{false} and so does \texttt{\textup{AnotherIP}}$(\Sigma,S)$.

Now let us assume that \texttt{\textup{MissingIP}}$(\Sigma,S,\emptyset)$ has not returned \textit{false} but the pair $(t,P)$ with $P = (v_0,\dots,v_m)$ a path from $v_0$ (the root of $\Sigma$) to $v_m$ and $t$ a term. Use the notation $P_i = (v_0,\dots,v_{i-1})$ for all $1 \leq i \leq m$. Then calling \texttt{\textup{MissingIP}}$(\Sigma,S,\emptyset)$ has triggered a sequence of recursive calls \texttt{\textup{MissingIP}}$(\Sigma_{v_1},S_1,P_1)$, \texttt{\textup{MissingIP}}$(\Sigma_{v_2},S_2,P_2)$,$\dots$,\texttt{\textup{MissingIP}}$(\Sigma_{v_m},S_m,P_m)$. A contradiction has been found during the last step: \texttt{\textup{MissingIP}}$(\Sigma_{v_m},S_m,P_m)$ ended line 5 for a contradiction of type (c1), or line 20 for a contradiction of type (c2), and returned $(t,P)$ with $t$ some term that we claim is in  $\IP(\Sigma_{v_m})\setminus S_m$.

\begin{claim}
$t \in \IP(\Sigma_{v_m})\setminus S_m$. 
\begin{proof}
This is clear if \texttt{\textup{MissingIP}}$(\Sigma_{v_m},S_m,P_m)$ ends line 5. Now if it ends line 20, then $v_m$ is a decision node for $x$ with 0-child $u$ and 1-child $w$. The sets $S_u$, $S_w$, $S'$ and $S^*$ have been generated and that it has been shown that $S_u = \IP(\Sigma_u)$ and $S_w = \IP(\Sigma_w)$ (otherwise a return statement line 16 or 18 would have been triggered). So $S^* = \IP(\Sigma_u \land \Sigma_w) = \IP(\Sigma_{v_m}|\overline{x}\land \Sigma_{v_m}|x)$ by Proposition~\ref{prop:decomposable_and_IP}. We have $t \in S^* \setminus S'$ so it is clear that $x \not\in \var(t)$. Furthermore $S'$ contains all terms from $S_m$ in which neither $x$ nor $\overline{x}$ appears, so $t \in S^* \setminus S'$ really means that $t \in S^* \setminus S_m = \IP(\Sigma_{v_m}|\overline{x}\land \Sigma_{v_m}|x) \setminus S_m \subseteq \IP(\Sigma_{v_m}) \setminus S_m$. 
\end{proof}
\end{claim}

Now \texttt{\textup{AnotherIP}}$(\Sigma,S)$ returns the result \texttt{\textup{Propagate}}$(\Sigma,t,P)$. To prove that the output is a term in $\IP(\Sigma) \setminus S$, it is sufficient to show that, for every $1 \leq i \leq m$, if $t_i \in \IP(\Sigma_{v_i})\setminus S_i$ then \texttt{\textup{Propagate}}$(\Sigma,t_i,P_i)$ calls \texttt{\textup{Propagate}}$(\Sigma,t_{i-1},P_{i-1})$ with $t_{i-1} \in  \IP(\Sigma_{v_{i-1}})\setminus S_{i-1}$. The rest is an easy induction (with $S_0 = S$ and $\Sigma_{v_0} = \Sigma$).

\begin{claim}
Let $t_i \in \IP(\Sigma_{v_i}) \setminus S_i$ with $i \geq 1$ then \texttt{\textup{Propagate}}$(\Sigma,t_i,P_i)$ does a recursive call \texttt{\textup{Propagate}}$(\Sigma,t_{i-1},P_{i-1})$ with $t_{i-1} \in  \IP(\Sigma_{v_{i-1}})\setminus S_{i-1}$.
\begin{proof}
\texttt{\textup{Propagate}}$(\Sigma,t_i,P_i)$ calls \texttt{\textup{Propagate}}$(\Sigma,t_i \land t',P_{i-1})$. Let $t_{i-1} = t_i \land t'$. We need to show that it is in $\IP(\Sigma_{v_{i-1}})\setminus S_{i-1}$. First assume that $v_{i-1}$ is a decomposable $\land$-node with children $v_i$ and $w$, then $t'$ is obtained line 4 and clearly $t' \in \IP(\Sigma_w)$. By Proposition~\ref{prop:decomposable_and_IP}, $t_i \land t' \in \IP(\Sigma_{v_i} \land \Sigma_w) = \IP(\Sigma_{v_{i-1}})$. By construction $S_i = \{t_{\var(\Sigma_{v_i})} \mid t \in S_{i-1}\}$. If $t_i \land t'$ was in $S_{i-1}$ then its restriction $t_i$ to $\var(\Sigma_{v_i})$ would be $S_i$, a contradiction. So $t_i \land t' \not\in S_{i-1}$.

Now suppose $v_{i-1}$ is a decision node for $x$ with 0-child $u$ and 1-child $w$. Let $v_i$ be $u$ (the case $v_i = w$ is analogous). By construction $S_i = S_u$. $t'$ is obtained line 7 and, by Proposition~\ref{prop:propagate_IP}, $t_i \land t' \in \IP(\Sigma_{v_{i-1}})$.  To prove that $t_i \land t' \not\in S_{i-1}$, first assume that $t_i \models \Sigma_w$. Then $t'$ is the empty term $t_\emptyset$. So $t_i \land t' = t_i$ and $t_i \in \IP(\Sigma_{v_{i-1}})$.  If $t_i$ was in $S_{i-1}$ then we would have $t_i \in S_{i-1} \cap \IP(\Sigma_{v_i}) \subseteq S_i$, a contradiction. So when $t_i \models \Sigma_w$, we have $t_i \land t' \in  \IP(\Sigma_{v_{i-1}})\setminus S_{i-1}$. Now if $t_i \not\models \Sigma_w$, then $t_i \land t' = t_i\land \overline{x}$ and $t_i \land \overline{x}$ is not in $S_{i-1}$ for otherwise we would have $t_i \in \{\tau \mid \tau \land \overline{x} \in S_{i-1}\} \subseteq S_i$.  So again we have $t_i \land t' \in  \IP(\Sigma_{v_{i-1}})\setminus S_{i-1}$.
\end{proof}
\end{claim}

\medskip 
\noindent \textbf{Running time.} It has already been proved in Proposition~\ref{prop:missingIP_is_sound} that \texttt{Missing}($\Sigma$,$S$,$\emptyset$) runs in time $O(poly(|S| + |\Sigma|)$. As for \texttt{\textup{Propagate}}$(\Sigma,t,P)$, $|P|$ recursive calls are made and the cost between two consecutive recursive calls is either one call to \texttt{GenerateIP} line 3 or 4, or one implicant check line 7 or 9. An implicant test on a dec-DNNF takes linear time and \texttt{GenerateIP} makes at most $|\var(\Sigma)|$ such tests, so it runs in time $O(poly(|\Sigma|))$. Thus  \texttt{\textup{Propagate}}$(\Sigma,t,P)$ runs in time $O(|P|\times poly(|\Sigma|)) = O(poly(|\Sigma|))$.
\end{proof}
\end{proposition}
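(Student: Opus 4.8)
The plan is to establish correctness and the running-time bound separately, in both cases leaning on Proposition~\ref{prop:missingIP_is_sound} (soundness and tractability of \texttt{MissingIP}) together with the folklore decompositions, Propositions~\ref{prop:decomposable_and_IP}, \ref{prop:propagate_IP} and~\ref{prop:decision_node_IP}. The easy direction of correctness is immediate: if $S = \IP(\Sigma)$ then by Proposition~\ref{prop:missingIP_is_sound} the call \texttt{MissingIP}$(\Sigma,S,\emptyset)$ returns \textit{false}, and so does \texttt{AnotherIP}. The substance lies in the case $S \neq \IP(\Sigma)$, where \texttt{MissingIP} instead returns a pair $(t,P)$ with $P=(v_0,\dots,v_m)$ a root-to-$v_m$ path and $t$ a term, and where I must show that \texttt{Propagate}$(\Sigma,t,P)$ outputs a member of $\IP(\Sigma)\setminus S$.

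My first step is to pin down what \texttt{MissingIP} hands back at the contradiction node $v_m$. Writing $S_i$ for the term set carried by the recursive call on $\Sigma_{v_i}$ (so $S_0=S$), I would prove the claim $t\in\IP(\Sigma_{v_m})\setminus S_m$ by inspecting the three ways a contradiction arises. A type (c1) contradiction returns the $t$ produced by \texttt{GenerateIP}$(\Sigma_{v_m})$ on a satisfiable node with $S_m=\emptyset$, so $t$ is trivially a prime implicant absent from $S_m$. A type (c3) contradiction at a decomposable $\land$-node returns $t\in S^*\setminus S_m$ where, because the recursion on the two children already certified $S_u=\IP(\Sigma_u)$ and $S_w=\IP(\Sigma_w)$, Proposition~\ref{prop:decomposable_and_IP} gives $S^*=\IP(\Sigma_{v_m})$. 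A type (c2) contradiction at a decision node returns $t\in S^*\setminus S'$ where the same certification plus Proposition~\ref{prop:decomposable_and_IP} yields $S^*=\IP(\Sigma_{v_m}|\overline{x}\land\Sigma_{v_m}|x)\subseteq\IP(\Sigma_{v_m})$; since $S'$ collects exactly the $x$-free terms of $S_m$ and $t$ mentions no $x$, $t\notin S'$ forces $t\notin S_m$.

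The heart of the argument, and the step I expect to be the main obstacle, is a single-layer propagation invariant driving an induction down the path: for $1\le i\le m$, if $t_i\in\IP(\Sigma_{v_i})\setminus S_i$ then \texttt{Propagate}$(\Sigma,t_i,P_i)$ recurses on \texttt{Propagate}$(\Sigma,t_i\land t',P_{i-1})$ with $t_{i-1}:=t_i\land t'\in\IP(\Sigma_{v_{i-1}})\setminus S_{i-1}$. I would verify this by cases on the type of $v_{i-1}$. When $v_{i-1}$ is an $\land$-node with children $v_i$ and $w$, the sibling piece $t'=\texttt{GenerateIP}(\Sigma_w)\in\IP(\Sigma_w)$, so membership in $\IP(\Sigma_{v_{i-1}})$ is Proposition~\ref{prop:decomposable_and_IP}; and since $S_i$ is the projection of $S_{i-1}$ onto $\var(\Sigma_{v_i})$, were $t_i\land t'\in S_{i-1}$ its projection $t_i$ would land in $S_i$, a contradiction. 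When $v_{i-1}$ is a decision node on $x$ with, say, $v_i$ its $0$-child, Proposition~\ref{prop:propagate_IP} governs whether $t'=t_\emptyset$ (if $t_i\models\Sigma_w$) or $t'=\overline{x}$ (otherwise), in both cases placing $t_i\land t'$ in $\IP(\Sigma_{v_{i-1}})$; the non-membership in $S_{i-1}$ follows from the defining inclusions $S_{i-1}\cap\IP(\Sigma_{v_i})\subseteq S_i$ and $\{\tau\mid\tau\land\overline{x}\in S_{i-1}\}\subseteq S_i$ of Proposition~\ref{prop:decision_node_IP_check}. Chaining this invariant from $i=m$ down to the base case $|P|=1$ (where \texttt{Propagate} simply returns its term), with $S_0=S$ and $\Sigma_{v_0}=\Sigma$, delivers the desired output in $\IP(\Sigma)\setminus S$. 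The delicate bookkeeping is that the sets $S_i$ are \emph{not} nested but are linked by the projection and extraction rules of Propositions~\ref{prop:decomposable_and_IP_check} and~\ref{prop:decision_node_IP_check}, so each non-membership step must be read off from the precise way $S_i$ was built from $S_{i-1}$.

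Finally, for the running time I would invoke Proposition~\ref{prop:missingIP_is_sound} for the $O(poly(|S|+|\Sigma|))$ cost of \texttt{MissingIP}, and then bound \texttt{Propagate}: the path has at most $|\Sigma|$ nodes, and between consecutive recursive calls the only work is one implicant check or one \texttt{GenerateIP} call, each polynomial in $|\Sigma|$ since dec-DNNF admits linear-time implicant testing and \texttt{GenerateIP} performs at most $|\var(\Sigma)|$ such tests. Hence \texttt{Propagate} runs in $O(poly(|\Sigma|))$ and \texttt{AnotherIP} in $O(poly(|S|+|\Sigma|))$ overall.
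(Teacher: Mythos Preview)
Your proposal is correct and follows essentially the same structure as the paper's proof: the same two claims (the term returned at $v_m$ lies in $\IP(\Sigma_{v_m})\setminus S_m$, then a single-step propagation invariant carried by induction along $P$) together with the same running-time breakdown via Proposition~\ref{prop:missingIP_is_sound} and a per-step analysis of \texttt{Propagate}. If anything you are slightly more thorough, since you explicitly treat the type~(c3) contradiction at an $\land$-node, which the paper's proof of the first claim handles only implicitly.
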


\begin{proposition}\label{lemma:IP_enum_for_dec_DNNF_is_in_IncP}
Enum$\cdot\IP$ from dec-DNNF is in {\sf IncP}.
\begin{proof}
Using Proposition \ref{prop:AnotherIP}, $k$ prime implicants of $\Sigma$ can be generated in time $O(poly(k+|\Sigma|))$ by simply calling \texttt{AnotherIP}$(\Sigma,S)$ $k$ times, each time adding to $S$ the new prime implicant that has been computed. This shows that Enum$\cdot\IP$ from dec-DNNF is in {\sf IncP}.
\end{proof}
\end{proposition}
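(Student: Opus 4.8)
The plan is to reduce the claim to the already-established correctness of \texttt{AnotherIP} and to invoke the AnotherSol characterization of {\sf IncP}. We have already shown that Enum$\cdot\IP$ from dec-DNNF is in {\sf EnumP}: a prime implicant is a term over $\var(\Sigma)$, so its size is at most $|\Sigma|$, and membership $t \in \IP(\Sigma)$ is decidable in polynomial time thanks to linear-time implicant checking on dec-DNNF. By Proposition~\ref{prop:IncP_AnotherSol}, it then suffices to show that AnotherSol$\cdot\IP$ is in {\sf FP}.

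First I would observe that \texttt{AnotherIP}$(\Sigma,S)$ is precisely an algorithm for AnotherSol$\cdot\IP$: on a dec-DNNF circuit $\Sigma$ and a subset $S \subseteq \IP(\Sigma)$, Proposition~\ref{prop:AnotherIP} guarantees that it returns \textit{false} exactly when $S = \IP(\Sigma)$, and otherwise returns some $t \in \IP(\Sigma) \setminus S$, all in time $O(\poly(|S|+|\Sigma|))$. Since the input to AnotherSol$\cdot\IP$ comprises both $\Sigma$ and $S$, this running time is polynomial in the total input size, witnessing that AnotherSol$\cdot\IP$ lies in {\sf FP}. Combined with the {\sf EnumP} membership above, Proposition~\ref{prop:IncP_AnotherSol} then yields the claim immediately.

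Alternatively --- and this is the concrete procedure I would spell out to make the incremental bound explicit --- I would iterate \texttt{AnotherIP} to build the output sequence directly. Given a target count $t$, start with $S = \emptyset$ and repeatedly call \texttt{AnotherIP}$(\Sigma,S)$; each successful call returns a fresh prime implicant, which I emit and add to $S$, stopping after $t$ outputs or as soon as \texttt{AnotherIP} returns \textit{false} (meaning $S = \IP(\Sigma)$, so all prime implicants have already been produced). Throughout the loop $|S| \le t$, hence every call costs $O(\poly(t+|\Sigma|))$; since there are at most $t$ calls, the total running time is $O(t \cdot \poly(t+|\Sigma|)) = O(\poly(t+|\Sigma|))$, which fits the $O(t^a n^b)$ form required for {\sf IncP}.

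There is no genuine obstacle remaining, as all the technical work has been absorbed into Proposition~\ref{prop:AnotherIP}. The only points requiring care are bookkeeping ones: confirming that the emitted implicants are pairwise distinct (guaranteed because each call returns an element of $\IP(\Sigma) \setminus S$ which is then inserted into $S$) and that the accumulated set $S$ never grows beyond the target $t$, so that the per-call cost stays polynomial in $t + |\Sigma|$ rather than in $|\IP(\Sigma)|$.
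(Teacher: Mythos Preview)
Your proposal is correct, and your alternative iterative argument is exactly the paper's proof: call \texttt{AnotherIP}$(\Sigma,S)$ repeatedly, adding each returned implicant to $S$, to obtain $k$ prime implicants in time $O(\poly(k+|\Sigma|))$. Your first route via Proposition~\ref{prop:IncP_AnotherSol} is a minor, equally valid variant that invokes the AnotherSol characterization directly rather than spelling out the loop.
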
  

\begin{proposition}\label{prop:abductive}
Unless ${\sf P} = {\sf NP}$, there is no polynomial-time algorithm which, given an OBDD circuit or a decision tree computing a function $f$ over $X$ and a set $Y \subseteq X$, decides whether $f$ has an implicant $t$ with $\var(t) \subseteq Y$. %The statement remains true when we replace OBDD circuits by DTs.
\begin{proof}
Let $\phi$ be a CNF formula with $m$ clauses $c_1,\dots,c_m$. Create $m$ fresh variables $z_1,\dots,z_m$. Let $B_1,\dots,B_m$ be OBDD circuits respecting the same variable ordering and computing $c_1,\dots,c_m$, respectively. These OBDD circuits can be computed in polynomial time (and can even be chosen in DT). Define now the OBDD circuits $B^{(i)} = (\overline{z_i}\land B_i)  \lor (z_i\land B^{(i+1)})$ for $1\leq i \leq m$, with $B^{(m+1)} = 1$. $B^{(1)}$ is an OBDD circuit on $\{z_1,\dots,z_m\} \cup \var(\phi)$ built in polynomial time from $\phi$ and whose size is in $O(|\phi|)$. 
\begin{claim} An implicant $t$ of $B^{(1)}$ such that $\var(t) \subseteq \var(\phi)$ exists if and only if $\phi$ is satisfiable.
\begin{proof}
For the first direction assume the implicant exists. $t$ is an implicant of $B^{(1)} = (\overline{z_1}\land B_1)  \lor (z_1\land B^{(2)})$. Since $z_1 \not\in \var(t)$, we have $t \models B_1 \equiv c_1$ and $t \models B^{(2)}$. Following the same line of reasoning with $B^{(2)}$ instead of $B^{(1)}$ we also have that $t \models B_2 \equiv c_2$ and $t \models B^{(3)}$. And we repeat the argument until reaching, $t \models c_1$, $t \models c_2$, $\dots$, $t \models c_{m}$, $t \models B^{(m+1)} = 1$. So indeed $t \models \phi$ and then $\phi$ is satisfiable.

For the other direction assume $\phi$ is satisfiable. Then there exists an implicant $t$ of $\phi$ with $\var(t) \subseteq \var(\phi)$. Let $a$ be a truth assignment to $\var(\phi) \cup \{z_1,\dots,z_m\}$ that satisfies $t$. If $a(z_i) = 1$ for all $i \in \{1,\dots, m\}$, then $B^{(1)}|a \equiv B^{(2)}|a \equiv \dots \equiv B^{(m+1)}|a \equiv 1$. Otherwise let $j$ be the smallest integer such that $a(z_j) = 0$. Then $B^{(1)}|a \equiv B^{(2)}|a \equiv \dots \equiv B^{(j)}|a \equiv B_j|a \equiv c_j|a$. Since $t$ is an implicant of $\phi$, we have that $t \models c_j$, so $c_j|a \equiv 1$. Thus every assignment $a$ that satisfies $t$ also satisfies $B^{(1)}$, in other words $t$ is an implicant of $B^{(1)}$.
\end{proof}
\end{claim}
So if the algorithm from the proposition statement exists, we can run it on inputs $B^{(1)}$ and $Y = \var(\phi)$ to decide in polynomial time whether $\phi$ is satisfiable.

Finally note that if one had chosen to represent $B_1,\dots,B_m$ as decision trees from DT (which is also feasible in polynomial time), then $B^{(1)}$ would be an element of DT. So the statement also holds for DT.
\end{proof}
\end{proposition}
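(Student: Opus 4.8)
The plan is to prove {\sf NP}-hardness by a polynomial-time reduction from CNF satisfiability, built around fresh ``selector'' variables. The naive idea of compiling the input formula $\phi$ directly into an \OBDD{} is hopeless, since that representation can be exponentially large; the real work is therefore to encode the satisfiability of $\phi$ into a \emph{small} circuit together with a carefully chosen allowed variable set $Y$, in such a way that an implicant confined to $Y$ exists exactly when $\phi$ is satisfiable.

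Concretely, given a CNF formula $\phi$ with clauses $c_1,\dots,c_m$ over $\var(\phi)$, I would introduce $m$ fresh variables $z_1,\dots,z_m$ and, fixing a common variable order, build small \OBDD{}s $B_1,\dots,B_m$ computing the individual clauses $c_1,\dots,c_m$ (each of linear size). I would then assemble the reduction bottom-up by setting $B^{(m+1)} = 1$ and, for $1 \le i \le m$, taking a decision node on $z_i$ whose $0$-child is $B_i$ and whose $1$-child is $B^{(i+1)}$, i.e. $B^{(i)} = (\overline{z_i} \land B_i) \lor (z_i \land B^{(i+1)})$. The root $B^{(1)}$ is an \OBDD{} over $\{z_1,\dots,z_m\} \cup \var(\phi)$ of size $O(|\phi|)$, computable in polynomial time. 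Since each $B_i$ may instead be taken as a decision tree, the very same nested construction produces a decision tree of size $O(|\phi|)$, so both languages are handled at once.

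The heart of the argument is the claim that $B^{(1)}$ admits an implicant $t$ with $\var(t) \subseteq \var(\phi)$ if and only if $\phi$ is satisfiable, upon which I would set $Y = \var(\phi)$. For the forward direction I would exploit that $t$ mentions no $z_i$: entailment of $B^{(1)}$ must survive the substitution $z_1 = 0$, which collapses $B^{(1)}$ to $B_1 \equiv c_1$ and forces $t \models c_1$, as well as $z_1 = 1$, which collapses it to $B^{(2)}$; iterating this reasoning down the chain forces $t \models c_i$ for every $i$, hence $t \models \phi$ and $\phi$ is satisfiable. For the converse I would take a satisfying assignment of $\phi$, read it as a term $t$ over $\var(\phi)$ with $t \models \phi$, and verify $t \models B^{(1)}$ by fixing an arbitrary extension $a$ of $t$ to all variables: if $a$ sets every $z_i$ to $1$ the evaluation runs down the $1$-branches to the leaf $1$, and otherwise the least index $j$ with $a(z_j) = 0$ routes the evaluation to $B_j \equiv c_j$, which $a$ satisfies because $t \models c_j$.

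Combining the two directions with the polynomial-time, linear-size construction, any polynomial-time procedure deciding the existence of an implicant over $Y$ would decide CNF satisfiability in polynomial time, so no such procedure can exist unless ${\sf P} = {\sf NP}$; the decision-tree variant establishes the claim for that language as well. I expect the main obstacle to be the design of the selector gadget: one must guarantee simultaneously that the representation stays polynomially small---ruling out direct compilation---and that \emph{forbidding} the selector variables from the implicant is precisely what forces all clauses to hold at once. This coupling between the restricted variable set $Y$ and the branching structure on the $z_i$ is the crux of the reduction.
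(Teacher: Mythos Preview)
Your proposal is correct and follows essentially the same approach as the paper: the same selector-variable gadget $B^{(i)} = (\overline{z_i}\land B_i)\lor(z_i\land B^{(i+1)})$, the same choice $Y=\var(\phi)$, and the same two-direction argument (forbidding the $z_i$ from the implicant forces entailment of every clause; conversely a satisfying assignment yields the required implicant). The only cosmetic difference is that you phrase the forward direction via substitutions $z_1=0$ and $z_1=1$ while the paper argues directly from $z_1\notin\var(t)$, but these are the same step.
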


\begin{proposition}
Let $f$ and $g$ be Boolean functions with $\var(f) \cap \var(g) = \emptyset$ and let $a$ be a truth assignment to a superset of $\var(f) \cup \var(g)$, then 
$\SR(f \land g, a) = \{t \land t' \mid t \in \SR(f,a), \, t' \in \SR(g,a) \}$.
\begin{proof}
Comes from  Proposition~\ref{prop:decomposable_and_IP}:
\[
\begin{aligned}
&\SR(f \land g,a) 
= \{\tau \in \IP(f \land g) \mid a \textup{ satisfies } \tau\}
\\
&= \{t \land t' \mid t \in \IP(f), t' \in \IP(g), a \textup{ satisfies } t\land t'\}
\\
&= \{t \land t' \mid t \in \IP(f), t' \in \IP(g), a \textup{ satisfies both } t \textup{ and } t'\}
\\
&= \{t \land t' \mid t \in \SR(f,a), \, t' \in \SR(g,a) \}
\end{aligned}
\]
\end{proof}
\end{proposition}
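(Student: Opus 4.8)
The plan is to reduce the statement to the prime-implicant decomposition already available in Proposition~\ref{prop:decomposable_and_IP}, since a sufficient reason is by definition nothing but a prime implicant that the assignment $a$ happens to satisfy. The notation $\SR(f\land g,a)$ presupposes that $a$ satisfies $f \land g$, equivalently that $a$ satisfies both $f$ and $g$; under this precondition all three sufficient-reason sets are well-defined. First I would unfold the definitions: $\SR(f\land g,a) = \{\tau \in \IP(f\land g) \mid a \text{ satisfies } \tau\}$, while $\SR(f,a) = \{t \in \IP(f) \mid a \text{ satisfies } t\}$ and $\SR(g,a) = \{t' \in \IP(g) \mid a \text{ satisfies } t'\}$.

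The key step is to invoke the disjoint-variable part of Proposition~\ref{prop:decomposable_and_IP}: since $\var(f) \cap \var(g) = \emptyset$, it gives $\IP(f \land g) = \{t \land t' \mid t \in \IP(f),\, t' \in \IP(g)\}$, so every prime implicant $\tau$ of $f \land g$ is exactly a product $t \land t'$ with $t \in \IP(f)$ and $t' \in \IP(g)$. Substituting this characterization into the expression for $\SR(f\land g,a)$ rewrites it as $\{t \land t' \mid t \in \IP(f),\, t' \in \IP(g),\, a \text{ satisfies } t\land t'\}$.

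It then remains to factorize the satisfaction condition. Because the literals of $t$ and those of $t'$ lie over disjoint variable sets, $a$ satisfies the conjunction $t \land t'$ if and only if $a$ satisfies every literal of $t$ and every literal of $t'$, that is, if and only if $a$ satisfies $t$ and $a$ satisfies $t'$ separately. Replacing the single joint condition by the two independent ones yields $\{t \land t' \mid t \in \IP(f),\, a \text{ satisfies } t,\, t' \in \IP(g),\, a \text{ satisfies } t'\}$, which is precisely $\{t \land t' \mid t \in \SR(f,a),\, t' \in \SR(g,a)\}$, as required.

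I expect no serious obstacle here: the heavy lifting is entirely contained in Proposition~\ref{prop:decomposable_and_IP}, and the only genuinely new ingredient is the elementary factorization of term satisfaction under disjoint scopes. The single point deserving care is the well-definedness check noted at the outset --- that the positive regime is consistent across the three sets --- which follows immediately from the equivalence that $a$ satisfies $f \land g$ if and only if $a$ satisfies both $f$ and $g$.
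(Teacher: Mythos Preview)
Your proof is correct and follows essentially the same route as the paper's: unfold the definition of $\SR$, apply the disjoint-variable case of Proposition~\ref{prop:decomposable_and_IP} to describe $\IP(f\land g)$, and then split the satisfaction condition on $t\land t'$ into the two separate conditions on $t$ and $t'$. The only extras in your write-up are the explicit well-definedness remark and the justification of the satisfaction split via disjoint scopes (the split in fact holds for any conjunction of literals, but this does no harm).
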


\begin{proposition}
Let $f$ be a Boolean function, let $a$ be a truth assignment to a superset of $\var(f)$ and let $x \in \var(f)$. If $a$ satisfies the literal $\ell$ on variable $x$ then
\[
\begin{aligned}
\SR(f, a)
&= \{t \land \ell \mid t \in \SR(f|\ell,a), \, t \not\models f|\overline{\ell} \}  
\\
&\cup \, \SR(f|\overline{x} \land f|x, a).
\end{aligned}
\]
\begin{proof} Comes from  Proposition~\ref{prop:decision_node_IP}:
\[
\begin{aligned}
\SR(f,a) 
= &\{t \in \IP(f) \mid a \textup{ satisfies } t\}
\\
= &\{t \land \overline{\ell} \mid t \in \IP(f|\overline{\ell}), \, t \not\models f|\ell, a \textup{ satisfies } t\} 
\\ &\cup \{t \land \ell \mid t \in \IP(f|\ell), \, t \not\models f|\overline{\ell}, a \textup{ satisfies } t \} \\
& \cup \{t \in \IP(f|\overline{x} \land f|x) \mid , a \textup{ satisfies } t\}
\\
= &\{t \land \ell \mid t \in \SR(f|\ell,a), \, t \not\models f|\overline{\ell} \}  
\\
&\cup \SR(f|\overline{x} \land f|x, a).
\end{aligned}
\]
\end{proof}
\end{proposition}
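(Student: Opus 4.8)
The plan is to obtain this identity as a direct consequence of the decision-node decomposition of $\IP(f)$ (Proposition~\ref{prop:decision_node_IP}), simply by filtering each of its parts through the requirement that $a$ satisfy the term. The starting observation is that, since $a$ satisfies $f$, the sufficient reasons for $a$ are exactly the prime implicants of $f$ that $a$ satisfies, i.e., $\SR(f,a) = \{t \in \IP(f) \mid a \text{ satisfies } t\}$. I would therefore substitute the three-part expression for $\IP(f)$ and distribute the filter ``$a$ satisfies $t$'' over the union, treating the three branches separately.

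For the two literal branches I would use that $a$ satisfies $\ell$ and hence falsifies $\overline{\ell}$. The branch $\{t \land \overline{\ell} \mid t \in \IP(f|\overline{\ell}),\, t \not\models f|\ell\}$ consists of terms all containing $\overline{\ell}$; since $a$ falsifies $\overline{\ell}$, none of these terms is satisfied by $a$, so this branch contributes nothing and is dropped. In the branch $\{t \land \ell \mid t \in \IP(f|\ell),\, t \not\models f|\overline{\ell}\}$, the variable $x$ does not occur in $t$ (as $f|\ell$ is a function over $\var(f) \setminus \{x\}$), so $a$ satisfies $t \land \ell$ if and only if $a$ satisfies $t$. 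Filtering then converts the conjunction of conditions ``$t \in \IP(f|\ell)$ and $a$ satisfies $t$'' into ``$t \in \SR(f|\ell,a)$'', producing the first set of the claimed identity.

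For the decomposable branch $\IP(f|\overline{x} \land f|x)$, filtering by ``$a$ satisfies $t$'' yields $\{t \in \IP(f|\overline{x} \land f|x) \mid a \text{ satisfies } t\}$, which I would identify with $\SR(f|\overline{x} \land f|x, a)$. Assembling the three filtered branches then reproduces the right-hand side, so the whole argument reduces to these three elementary simplifications together with the characterization of $\SR$ as the set of $a$-satisfied prime implicants; no separate two-way inclusion is needed, since the filter commutes with the union.

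The only point requiring a little care, and thus the main (though light) obstacle, is definitional rather than combinatorial: one must check that the equation $\SR(g,a) = \{t \in \IP(g) \mid a \text{ satisfies } t\}$ is used consistently for $g = f|\overline{x} \land f|x$, even though $a$ need not satisfy this cofactor conjunction. This is harmless, because if $a$ fails to satisfy $f|\overline{x} \land f|x$, then no prime implicant of it can be satisfied by $a$ (any such implicant entails the function), so both the filtered set and $\SR(f|\overline{x} \land f|x, a)$ are empty and the identity still holds. Once this convention is fixed, everything else is routine bookkeeping about factoring the literal $\ell$ out of a term and confirming that the $\overline{\ell}$-branch is genuinely empty.
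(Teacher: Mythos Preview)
Your proposal is correct and follows essentially the same approach as the paper: both start from $\SR(f,a)=\{t\in\IP(f)\mid a\text{ satisfies }t\}$, plug in the three-part decomposition of $\IP(f)$ from Proposition~\ref{prop:decision_node_IP}, and observe that the $\overline{\ell}$-branch vanishes while the other two branches rewrite to the claimed sets. Your version is in fact slightly more careful than the paper's, since you explicitly justify why ``$a$ satisfies $t\land\ell$'' reduces to ``$a$ satisfies $t$'' and you address the edge case where $a$ need not satisfy $f|\overline{x}\land f|x$.
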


\begin{proposition}
If Enum$\cdot\SR$ from OBDD is in {\sf OutputP} or Enum$\cdot\SR$ from DT is in {\sf OutputP}, then enumerating the minimal transversals of a hypergraph is in {\sf OutputP}. %This also is true for Enum$\cdot\SR$ from DT.
\begin{proof}
The proof leans on the proof of Theorem 2 in~\cite{KavvadiasPS93}. Let $\calH$ be an hypergraph. Vertices are identified by integers $1,\dots,n$ and associated to variables $x_1,\dots,x_n$. Let $tr(\calH)$ be the set of transversals of $\calH$ and let $tr_{\textup{min}}(\calH)$ be the set of minimal transversals of $\calH$. For each $S \subseteq \{1,\dots,n\}$ of vertices let $a_S$ be the assignment such that $a_S(x_i) = 0$ if and only if $i \in S$, and let $\gamma_S = \bigvee_{i \in S} \overline{x_i}$. Observe that $a_S$ satisfies $\gamma_{S'}$ if and only if $S \cap S' \neq \emptyset$. Let $f$ be the function whose satisfying assignments are exactly the $a_H$ for $H \in \calH$. Denote by $sat(f)$ the set of satisfying assignments of $f$. 

Now we have the following:
\[
\begin{aligned}
f \models \gamma_S 
&\Leftrightarrow \forall H \in \calH, a_H \textup{ satisfies } \gamma_S
\\
&\Leftrightarrow \forall H \in \calH, H \cap S \neq \emptyset
\\
&\Leftrightarrow S \textup{ is a transversal of } \calH
\end{aligned}
\]

This means that the set of implicates of $f$ containing only negative literals is $\{\gamma_T \mid T \in tr(\calH)\}$, and that the set of \textit{prime} implicates of $f$ containing only negative literals is $\{\gamma_T \mid T \in  tr_{\textup{min}}(\calH)\}$. Since the prime \textit{implicants} of $\neg f$ are exactly the negation of the prime \textit{implicates} of  $f$, we get that the set of prime implicants of $\neg f$ containing only positive literals is $\{\bigwedge_{i \in T} x_i \mid T \in  tr_{\textup{min}}(\calH)\}$. Observe that $a_\emptyset$ is the assignment that set all $x_i$ to 1 and that 
$$
\SR(\neg f, a_\emptyset) = \left\{\bigwedge\nolimits_{i \in T} x_i \mid T \in  tr_{\textup{min}}(\calH)\right\}.
$$
From $\calH$ we construct $sat(f)$ in polynomial time. Then from $sat(f)$ we construct in polynomial time an OBDD circuit $B$ equivalent to $f$. Then we obtain an OBDD $B'$ equivalent to $\neg f$ by switching the 0-sink and the 1-sink of $B$. Given the bijection between $\SR(B',a_\emptyset)$ and $tr_{\textup{min}}(\calH)$, any algorithm for enumerating sufficient reasons from OBDD can be run with inputs $B'$ and $a_\emptyset$ to enumerate the minimal transversals of $\calH$. So if Enum$\cdot\SR$ from OBDD is in {\sf OutputP} then enumerating the minimal transversals of a hypergraph is in {\sf OutputP}.

Finally, note that from $sat(f)$ one can construct a decision tree representing $f$ in polynomial time (instead of an OBDD circuit), and that negating such a decision tree boils down to turning 0-leaves into 1-leaves and vice-versa. So the statement also holds for Enum$\cdot\SR$ from DT.
\end{proof}
\end{proposition}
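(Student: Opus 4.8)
The plan is to reduce the enumeration of the minimal transversals of a hypergraph to Enum$\cdot\SR$ from OBDD, and then observe that the very same construction works for DT. Fix a hypergraph $\calH$ on vertex set $\{1,\dots,n\}$ and associate a Boolean variable $x_i$ with vertex $i$. For $S \subseteq \{1,\dots,n\}$, let $a_S$ be the assignment with $a_S(x_i) = 0$ iff $i \in S$, and let $\gamma_S = \bigvee_{i \in S} \overline{x_i}$. The basic fact that drives the reduction (in the spirit of the classical monotone-dualization reductions, see \cite{KavvadiasPS93}) is that $a_S$ satisfies $\gamma_{S'}$ precisely when $S \cap S' \neq \emptyset$. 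Let $f$ be the Boolean function on $x_1,\dots,x_n$ whose satisfying assignments are exactly the $a_H$ for $H \in \calH$; if $\emptyset \in \calH$ the hypergraph has no transversal and the instance is trivial, so we may assume $\emptyset \notin \calH$.

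The key structural step is to pin down which prime implicants are relevant. From the basic fact, $f \models \gamma_S$ holds iff every hyperedge of $\calH$ meets $S$, i.e.\ iff $S$ is a transversal of $\calH$; since every proper sub-clause of a purely negative clause is again purely negative, the purely negative prime implicates of $f$ are exactly the $\gamma_T$ with $T \in tr_{\min}(\calH)$. Dually, the purely positive prime implicants of $\neg f$ are exactly the terms $\bigwedge_{i \in T} x_i$ with $T \in tr_{\min}(\calH)$. Finally, because $\emptyset \notin \calH$ the all-ones assignment $a_\emptyset$ falsifies $f$, so $\SR(\neg f, a_\emptyset)$ is defined, and a prime implicant of $\neg f$ is satisfied by the all-ones assignment iff it contains no negative literal; hence $\SR(\neg f, a_\emptyset) = \{\bigwedge_{i \in T} x_i \mid T \in tr_{\min}(\calH)\}$, a set in explicit, size-preserving bijection with $tr_{\min}(\calH)$.

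It remains to feed an OBDD (resp.\ a decision tree) for $\neg f$ to the hypothesised OutputP algorithm. From the explicit list $\{a_H \mid H \in \calH\}$ of satisfying assignments of $f$ one builds, under any fixed variable order, an OBDD $B$ for $f$ of size $O(n \cdot |\calH|)$ in polynomial time --- merge the $|\calH|$ accepting paths into a trie and reduce --- and, the same way, a decision tree of size $O(n \cdot |\calH|)$. Negation is free: swapping the $0$- and $1$-sinks of $B$ (resp.\ the $0$- and $1$-leaves of the decision tree) yields a representation $B'$ of $\neg f$ of size polynomial in that of $\calH$. Running the assumed OutputP procedure for Enum$\cdot\SR$ on input $(B', a_\emptyset)$ then produces $\SR(\neg f, a_\emptyset)$ in time polynomial in $|B'| + |\SR(\neg f, a_\emptyset)|$, hence polynomial in the size of $\calH$ plus $|tr_{\min}(\calH)|$; translating the output back through the bijection costs linear time. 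Thus enumerating minimal transversals is in {\sf OutputP}, and the DT case is identical with the decision tree in place of the OBDD.

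The main obstacle is not conceptual depth but two points of care: first, that the OBDD and the decision tree for $f$ (and hence for $\neg f$) are genuinely of polynomial size in $\calH$ and computable in polynomial time, since {\sf OutputP} bounds are fragile with respect to any blow-up in the representation; second, that one argues precisely that the sufficient reasons of the all-ones assignment against $\neg f$ are exactly the positive prime implicants of $\neg f$, no more and no less, so that the correspondence with $tr_{\min}(\calH)$ is an exact bijection (no missing or spurious solutions). Everything else is routine bookkeeping.
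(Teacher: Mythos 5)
Your proof is correct and follows essentially the same route as the paper's: encode the hyperedges as the satisfying assignments of $f$, identify the purely positive prime implicants of $\neg f$ with the minimal transversals, observe that these are exactly $\SR(\neg f, a_\emptyset)$, and build the OBDD (resp.\ decision tree) for $\neg f$ from the explicit list of models plus sink/leaf swapping. Your added care about the degenerate case $\emptyset \in \calH$ (so that $a_\emptyset$ indeed falsifies $f$) and the explicit size bound on the OBDD/DT are welcome details the paper leaves implicit, but they do not change the argument.
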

%\fi
\end{document}